\setlist[enumerate]{label=(\arabic*).} %, itemjoin={{, }}, itemjoin*={{, and }}} % <-------------
\setlist[itemize]{leftmargin=20pt} % leftmargin=*
\newtheorem{theorem}{Theorem}[section]
\newtheorem{corollary}{Corollary}[theorem]
\newtheorem{proposition}[theorem]{Proposition}
\newtheorem{definition}[theorem]{Definition}
\newcommand{\trans}{\mathsf{T}}
\newcommand{\func}[2][]{
    {\text{\fontfamily{lmtt}\selectfont #1}\left(#2\right)}}
\newcommand{\G}[1][]{
    \IfEq{#1}{}
    {\mathcal{G}}
    {\mathcal{G}}_{#1}}
\newcommand{\V}[1][]{
    \IfEq{#1}{}
    {\mathcal{V}}
    {\mathcal{V}_{#1}}}
\newcommand{\E}[1][]{
    \IfEq{#1}{}
    {\mathcal{E}}
    {\mathcal{E}_{#1}}}
\newcommand{\N}[1][]{
    \IfEq{#1}{}
    {\mathcal{N}}
    {\mathcal{N}}_{#1}}
\newcommand{\M}[1][]{
    \IfEq{#1}{}
    {\mathcal{M}}
    {\mathcal{M}}_{#1}}
\newcommand{\Ss}[1][]{
    \IfEq{#1}{}
    {\mathcal{S}}
    {\mathcal{S}}_{#1}}
\newcommand{\X}[1][]{
    \IfEq{#1}{}
    {\bm{X}}
    {\bm{X}^{\paren{#1}}}}
\newcommand{\Asym}[1][]{
    \IfEq{#1}{}
    {\widetilde{\bm{A}}}
    {\widetilde{\bm{A}}_{#1}}}
\newcommand{\Arw}[1][]{
    \IfEq{#1}{}
    {\widehat{\bm{A}}}
    {\widehat{\bm{A}}_{#1}}}
\newcommand{\A}[1][]{
    \IfEq{#1}{}
    {\bm{A}}
    {\bm{A}}_{#1}}
\newcommand{\Hx}[1][]{
    \IfEq{#1}{}
    {\bm{H}}
    {\bm{H}}^{(#1)}}
\newcommand{\W}[1][]{
    \IfEq{#1}{}
    {\bm{W}}
    {\bm{W}^{\paren{#1}}}}
\newcommand{\Sgnn}{\mathcal{S}_\text{GNN}}
\newcommand{\Smlp}{\mathcal{S}_\text{MLP}}
\newcommand{\size}[1]{\left\lvert #1 \right\rvert}
\newcommand{\paren}[1]{\left( #1 \right)}
\DeclarePairedDelimiterX\set[1]\lbrace\rbrace{\def\given{\;\delimsize\vert\;}#1}
\newcommand{\Ls}{L}
\newcommand{\Lsmoe}{L_{\text{\moe}}}
\newcommand{\Lsmoeplus}{L_{\text{\moe}}^\star}
\newcommand{\Lsg}{\hat{\Ls}}
\newcommand{\pg}{\hat{\bm{p}}}
\newcommand{\Slsg}{\mathcal{L}}
\newcommand{\Scg}{\mathcal{C}}
\def\secref#1{\S\ref{#1}}
\def\eqref#1{equation~\ref{#1}}
\def\1{\bm{1}}
\DeclareMathAlphabet{\mathsfit}{\encodingdefault}{\sfdefault}{m}{sl}
\SetMathAlphabet{\mathsfit}{bold}{\encodingdefault}{\sfdefault}{bx}{n}
\newcommand{\R}{\mathbb{R}}
\newcommand{\moe}{{\fontfamily{lmtt}\selectfont Mowst}}%{\textsc{Mowst}}
\newcommand{\moeplus}{{\fontfamily{lmtt}\selectfont Mowst\textsuperscript{$\star$}}}
\newcommand{\moegnn}[1]{{\fontfamily{lmtt}\selectfont Mowst-#1}}
\newcommand{\moeboth}{{\fontfamily{lmtt}\selectfont Mowst(\textsuperscript{$\star$})}}
\newcommand{\moeplusgnn}[1]{{\fontfamily{lmtt}\selectfont Mowst\textsuperscript{$\star$}-#1}}
\newcommand{\moebothgnn}[1]{{\fontfamily{lmtt}\selectfont Mowst(\textsuperscript{$\star$})-#1}}
\definecolor{green_color}{RGB}{130,139,78}
\newcommand{\greentext}[1]{\textcolor{green_color}{#1}}
\newcommand{\flickr}{{\fontfamily{lmtt}\selectfont Flickr}}
\newcommand{\arxiv}{{\fontfamily{lmtt}\selectfont ogbn-arxiv}}
\newcommand{\products}{{\fontfamily{lmtt}\selectfont ogbn-products}}
\newcommand{\penn}{{\fontfamily{lmtt}\selectfont Penn94}}
\newcommand{\pokec}{{\fontfamily{lmtt}\selectfont pokec}}
\newcommand{\twitch}{{\fontfamily{lmtt}\selectfont twitch-gamer}}
\newcommand{\htwogcn}{{\fontfamily{lmtt}\selectfont H\textsubscript{2}GCN}}
\newcommand{\subparag}[1]{{\textsc{#1}}\hspace{.3cm}}
\newcommand{\parag}[1]{\textbf{#1}\hspace{.3cm}}%{\paragraph{#1}}
\newcommand{\std}[1]{\footnotesize{\color{gray}$\pm$#1}}
\newcommand\DoToC{%
  \startcontents
  \printcontents{}{1}{\textbf{Table of Contents (Appendix)}\vskip9pt\hrule\vskip5pt}
  \vskip3pt\hrule\vskip5pt
}
\title{Mixture of Weak \& Strong Experts on Graphs}
\author{Hanqing Zeng \thanks{Equal contribution} \\
Meta AI\\
\texttt{zengh@meta.com} \\
\And
Hanjia Lyu $^{*}$ \\
University of Rochester \\
\texttt{hlyu5@ur.rochester.edu} \\
\And
Diyi Hu \\
University of Southern California \\
\texttt{diyihu@usc.edu}
\AND
Yinglong Xia \\
Meta AI \\
\texttt{yxia@meta.com}
\And
Jiebo Luo \\
University of Rochester \\
\texttt{jluo@cs.rochester.edu}
}
\begin{document}

\maketitle

\begin{abstract}

Realistic graphs contain both
\begin{enumerate*}
\item[(1)] rich self-features of nodes and  
\item[(2)] informative structures of neighborhoods
\end{enumerate*}, jointly handled by a Graph Neural Network (GNN) in the typical setup. 
We propose to decouple the two modalities by \underline{M}ixture \underline{o}f \underline{w}eak and \underline{st}rong experts (\moe), where the weak expert is a light-weight Multi-layer Perceptron (MLP), and the strong expert is an off-the-shelf GNN. 
To adapt the experts' collaboration to different target nodes, we propose a ``confidence'' mechanism based on the dispersion of the weak expert's prediction logits. 
The strong expert is conditionally activated in the low-confidence region when either the node's classification relies on neighborhood information, or the weak expert has low model quality. 
We reveal interesting training dynamics by analyzing the influence of the confidence function on loss: 
our training algorithm encourages the specialization of each expert by effectively generating soft splitting of the graph. 
In addition, our ``confidence'' design imposes a desirable bias toward the strong expert to benefit from GNN's better generalization capability. 
{\moe} is easy to optimize and achieves strong expressive power, with a computation cost comparable to a single GNN. 
Empirically, {\moe} on 4 backbone GNN architectures show significant accuracy improvement on 6 standard node classification benchmarks, including both homophilous and heterophilous graphs (\url{https://github.com/facebookresearch/mowst-gnn}).

\end{abstract}

\vspace{-6mm}

\begin{figure}[h]
    \centering
    \includegraphics[width=0.72\textwidth]{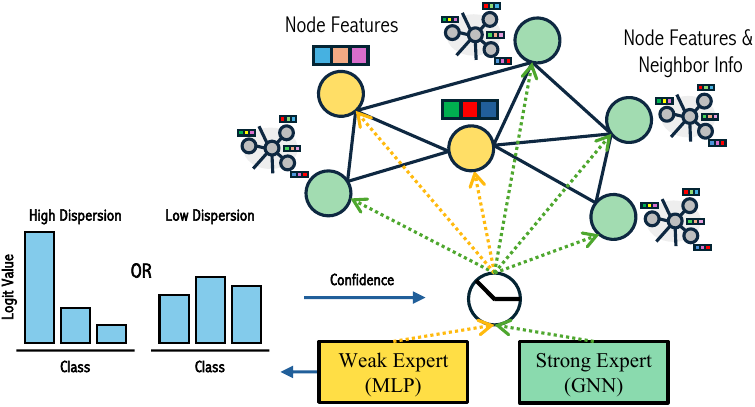}
    \caption{Design overview of {\moe}. The full system is composed of a weak expert, a strong expert, and a gating module. Diverse collaboration behaviors between the weak \& strong experts emerge as a result of the gating module's coordination. 
    The gating function, which can be either manually defined or automatically learned (via an additional compact MLP), calculates a confidence score based on the dispersion of \emph{only} the weak expert's prediction logits. 
    The confidence score varies across different target nodes depending on the experts' relative strength on the local graph region. 
    The score also directly controls how each expert's own logits are combined into the system's final prediction. 
    }
    \label{fig:mowst_diagram_one_fig}
\end{figure}

\section{Introduction}

A main challenge in graph learning lies in the data complexity. 
Realistic graphs contain non-homogeneous patterns such that different parts of the graph may exhibit different characteristics. 
For instance, locally homophilous and locally heterophilous regions may co-exist in one graph \citep{local_heterophily}; 
depending on local connectivity, graph signals may be mixed in diverse ways quantified by node-level assortativity \citep{local_assort}; the number of graph convolution iterations should be adjusted based on the topology of the neighborhood surrounding each target node \citep{ndls}. 
However, many widely-used GNNs have a fundamental limitation since they are designed based on \emph{global} properties of the graph. 
For instance, GCN \citep{gcn} and SGC \citep{sgc} perform signal smoothing using the full-graph Laplacian; GIN \citep{gin} simulates $k$-hop subgraph isomorphism test with the same $k$ on all target nodes; GraphSAGE \citep{graphsage} and GAT \citep{gat} aggregates features from $k$-hop neighbors, again with a global $k$. 
There is large potential to improve GNN's capacity by diversified treatments on a per-node basis. 

Model capacity can be enhanced in several ways.  
One solution is to develop more advanced layer architectures for a single GNN \citep{gatv2, missing_half, geom_gnn}, %\citep{gtn}
with the aim of enabling the model to automatically adapt to the unique characteristics of different target nodes.
The other way is to incorporate existing GNN models into a Mixture-of-Experts (MoE) system \citep{moe_hinton, graphdive}, considering that MoE has effectively improved model capacity in many domains \citep{moe_survey, glam, gshard}. 
In this study,
we follow the MoE design philosophy, but take a step back to mix a simple Multi-Layer Perceptron (MLP) with an off-the-shelf GNN -- an intentionally imbalanced combination unseen in traditional MoE. 
The main motivation is that 
MLP and GNN models can specialize to address the two most fundamental modalities in graphs: the feature of a node itself, and the structure of its neighborhood. 
The MLP, though much weaker than the GNN, can play an important role in various cases. 
For instance, in homophilous regions where nodes features are similar, leveraging an MLP to focus on the rich features of individual nodes may be more effective than aggregating neighborhood features through a GNN layer. 
Conversely, in highly heterophilous regions, message passing could introduce noise, potentially causing more harm than good \citep{beyond_homophily}. 
The MLP expert can help ``clean up'' the dataset (\secref{sec: analysis behavior}) for the GNN, enabling the strong expert to focus on the more complicated nodes whose neighborhood structure provides useful information for the learning task.

An additional advantage of our ``weak-strong'' duo is that incorporating a lightweight and easily optimized MLP helps mitigate the issues of computation costs and optimization difficulties commonly associated with traditional GNN or MoE systems. 
The lack of recursive neighborhood aggregation in an MLP not only makes its computation orders of magnitude cheaper than a GNN \citep{sgc, glnn} (even when optimized by sampling techniques \citep{shaDow, lmc}), but it also completely avoids issues such as oversmoothing \citep{oversmooth1, oversmooth2} and oversquashing \citep{oversquashing1, oversquashing2}.
While the traditional MoE approaches can partially address the computation challenges by activating a subset of expert through various gating modules \citep{sparse_moe, r6_1},
\emph{sparse} gating may further complicate optimization due to the introduction of discontinuities and intentional noise \citep{sparse_gate_survey}.

\parag{Contributions. }
We propose \underline{M}ixture \underline{o}f \underline{w}eak and \underline{st}rong experts (\moe) on graphs, where the MLP and GNN experts specialize in the feature and structure modalities. 
To encourage collaboration without complicated gating, we propose a ``confidence'' mechanism to control the contribution of each expert on a per-node basis (Figure~\ref{fig:mowst_diagram_one_fig}). 
Unlike recent MoE designs where experts are fused in each layer \citep{glam, gshard}, our experts execute independently through their last layer and are then mixed based on a confidence score calculated by the dispersion of the MLP's prediction logits. 
Our mixing mechanism has the following properties. 
First, since the confidence score solely depends on the MLP's output, the system is inherently biased. This means that under the same training loss, the predictions of the GNN are more likely to be accepted than those of the MLP (\secref{sec: analysis balance}).  
Such bias is desirable due to GNN's better generalization capability \citep{gnn_generalize}, and the extent of bias can be learned via the confidence function. 
Second, our model-level (rather than layer-level) mixture simplifies the optimization and enhances explainability. 
Through theoretical analysis of the optimization behavior (\secref{sec: analysis balance}), we uncover various interesting collaboration modes between the two experts (\secref{sec: analysis balance}). 
In the \emph{specialization} mode, the system dynamically creates a soft splitting of the graph nodes based on not only the nodes' characteristics but also the two experts' relative model quality. 
After splitting, the experts each specialize on the nodes they own. In the \emph{denoising} mode, the GNN expert dominates on almost all nodes after convergence. 
The MLP overfits a small set of noisy nodes, effectively removing them from the GNN's training set and thus enabling the GNN to further fine-tune. 
The above advantages of {\moe}, together with the theoretically high expressive power, come at the cost of minor computation overhead. 
In experiments, we extensively evaluate {\moe} on 4 types of GNN experts and 6 standard benchmarks covering both homophilous and heterophilous graphs. 
We show consistent accuracy improvements over state-of-the-art baselines.

\section{\moe}
\label{sec: model}

Our discussion mainly focuses on the 2-expert {\moe} while \secref{sec: moe multi expert} shows the many-expert generalization. 
The key challenge is to design the mixture module considering the subtleties in the interactions between the imbalanced experts. 
On the one hand, the weak expert should be \emph{cautiously} activated to avoid accuracy degradation. 
On the other hand, for nodes that can be truly mastered by the MLP, the weak expert should meaningfully contribute rather than being overshadowed by its stronger counterpart. 
\secref{sec: moe} and \secref{sec: model conf} present the overall model. 
\secref{sec: analysis balance} and \secref{sec: analysis behavior} analyze the collaboration behaviors. 
\secref{sec: moe star} discusses a variant of {\moe}. 
\secref{sec: expressive power} analyzes the expressive power and computation cost.

\parag{Notations. }
Let $\G\paren{\V,\E,\X}$ be a graph with node set $\V$ and edge set $\E$. 
Each node $v$ has a length-$f$ raw feature vector $\bm{x}_v$. 
The node features can be stacked as $\X\in\R^{\size{\V}\times f}$. 
Denote $v$'s ground truth label as $\bm{y}_v\in\R^{n}$, where $n$ is the number of classes. Denote the model prediction as $\bm{p}_v\in\R^n$.

\subsection{\moe}
\label{sec: moe}

\begin{minipage}{0.52\textwidth}
\begin{algorithm}[H] 
\scriptsize
    \caption{{\moe} inference}
    \label{algo: moe inference}
    \begin{algorithmic}
    \State {\bfseries Input:} $\G\paren{\V, \E, \X}$;
    target node $v$ 
    \State {\bfseries Output:} prediction of $v$
    \State Run the trained MLP expert on $v$
    \State Get prediction $\bm{p}_v$ \& confidence $C\paren{\bm{p}_v}\in [0, 1]$
    \If{random number $q\in [0,1]$ has $q < C\paren{\bm{p}_v}$}
        \State Predict $v$ by MLP's prediction $\bm{p}_v$
    \Else
        \State Run the trained GNN expert on $v$
        \State Predict $v$ by GNN's prediction $\bm{p}'_v$
    \EndIf
    \end{algorithmic}
\end{algorithm}

\end{minipage}
\hfill
\begin{minipage}{0.46\textwidth}
\begin{algorithm}[H] %[tb]
\scriptsize
    \caption{{\moe} training}
    \label{algo: moe training}
    \begin{algorithmic}
    \State {\bfseries Input:} $\G\paren{\V, \E, \X}$; training labels $\set{\bm{y}_v}$
    \State Initialize MLP \& GNN weights as $\bm{\theta}_0$ \& $\bm{\theta}'_0$
    \For {round $r=1$ until convergence}
        \State Fix GNN weights $\bm{\theta}'_{r-1}$
        \State Update MLP weights to $\bm{\theta}_r$ by gradient descent on $\Lsmoe$ until convergence
        \State Fix MLP weights $\bm{\theta}_r$
        \State Update GNN weights to $\bm{\theta}'_r$ by gradient descent on $\Lsmoe$ until convergence
    \EndFor
    \end{algorithmic}
\end{algorithm}
\end{minipage}

\parag{Inference. }
Algorithm \ref{algo: moe inference} is executed on an already-trained {\moe}:
on target $v$, the two experts each execute independently until they generate their own predictions $\bm{p}_v$ and $\bm{p}_v'$.
The mixture module either accepts $\bm{p}_v$ with probability $C\paren{\bm{p}_v}$ (in which case we do not need to execute the GNN at all), or discards $\bm{p}_v$ and uses $\bm{p}_v'$ as the final prediction. 
We use a random number $q$ to simulate the MLP's activation probability $C\paren{\bm{p}_v}$, where $C\paren{\bm{p}_v}$ reflects how confident the MLP is in its prediction. 
For instance, in binary classification, 
$\bm{p}_v = [0, 1]^\trans$ and $[0.5, 0.5]^\trans$ correspond to cases where the MLP is certain and uncertain about its prediction, with $C\paren{\bm{p}_v}$ being close to 1 and 0, respectively.

\parag{Training. }
The training should minimize the \emph{expected} loss incurred in inference. 
In Algorithm \ref{algo: moe inference}, the MLP incurs loss $L\paren{\bm{p}_v,\bm{y}_v}$ with probability $C\paren{\bm{p}_v}$. Therefore, the overall loss on expectation is: 
\vspace{-8pt}

\begin{align}
\label{eq: train obj}
\Lsmoe =& \frac{1}{\size{\V}}\sum_{v\in\V}\big( C\paren{\bm{p}_v}\cdot \Ls\paren{\bm{p}_v, \bm{y}_v} + \paren{1 - C\paren{\bm{p}_v}}\cdot \Ls\paren{\bm{p}'_v, \bm{y}_v}\big)
\end{align}
\vspace{-8pt}

\noindent where 
$\bm{p}_v=\func[MLP]{\bm{x}_v; \bm{\theta}}$ and $\bm{p}'_v = \func[GNN]{\bm{x}_v;\bm{\theta}'}$ are the experts' predictions; $\bm{\theta}$ and $\bm{\theta}'$ are the experts' model parameters. 
$\Lsmoe$ is fully differentiable. 
While we could simultaneously optimize both experts via standard gradient descent, 
Algorithm \ref{algo: moe training}  shows a ``training in turn'' strategy: we fix one expert's parameters while optimizing the other. 
Training each expert in turn enables the experts to fully optimize themselves despite having different convergence behaviors due to the distinct model architectures. 
If $C$ is learnable (\secref{sec: model conf}), we update its parameters together with MLP's $\bm{\theta}$. 

\parag{Intuition. }
We demonstrate via a simple case study how $C$ moderates the two experts. 
Suppose at some point of training, the two experts have the same loss on some $v$. 
In case 1, $v$'s self features are sufficient for a good prediction. Then MLP can make $\bm{p}_v$ more certain to lower its $L\paren{\bm{p}_v, \bm{y}_v}$ and increase $C$. 
The improved MLP's loss then contributes more to $\Lsmoe$ than GNN's loss, thus improving the overall $\Lsmoe$. 
In case 2, $v$'s self-features are insufficient for further reducing MLP's loss. 
If the MLP makes $\bm{p}_v$ more certain, $\Lsmoe$ will deteriorate since an increased $C$ will up-weight the contribution of a worse $L\paren{\bm{p}_v,\bm{y}_v}$. 
Otherwise, if the MLP acts in the reverse way to even degrade $\bm{p}_v$ to random guess, $L\paren{\bm{p}_v,\bm{y}_v}$ will be worse but $C$ reduces to 0, leaving $\Lsmoe$ unaffected even if MLP is completely ignored. 
Case 2 shows the bias of {\moe} towards the GNN. However, as shown in case 1, the weak MLP can still play a significant role in nodes where it specializes effectively.

\subsection{Confidence function}
\label{sec: model conf}
In this subsection, we omit the subscript $v$. 
We formally categorize the class of the confidence function $C$. 
Consider the single-task, multi-class node classification problem, where $n$ is the total number of classes.  
The input to $C$, $\bm{p}$,
belongs to the standard $\paren{n-1}$-simplex, $\Ss[n-1]$. \textit{i.e.,} $\sum_{1\leq i\leq n} p_{i}=1$ and $p_{i} \geq 0$, for $i=1,\hdots,n$. 
The output of $C$ falls between 0 and 1. 
Since $C$ reflects the certainty of the MLP's prediction, we decompose it as $C=G\circ D$, where $\circ$ denotes function composition. Here, $D$ takes $\bm{p}$ as input and computes its dispersion, and $G$ is a real-valued scalar function that maps the dispersion to a confidence score. 
We define $G$ and $D$ as follows:

\begin{definition}
\label{dfn: d g}
$D: \Ss[n-1]\mapsto \R$ is continuous and quasiconvex where $D\paren{\frac{1}{n}\bm{1}} = 0$ and $D\paren{\bm{p}} > 0$ if $\bm{p} \neq \frac{1}{n}\bm{1}$. 
$G: \R\mapsto \R$ is monotonically non-decreasing where $G\paren{0} = 0$ and $0 < G\paren{x} \leq 1, \forall x > 0$. 
\end{definition}

\begin{proposition}
\label{prop: C cvx}
$C=G\circ D$ is quasiconvex. 
\end{proposition}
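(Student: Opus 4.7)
The plan is to verify quasiconvexity directly from the definition via sublevel sets, using only the two structural assumptions: quasiconvexity of $D$ on the convex set $\Ss[n-1]$ and monotonicity of $G$ on $\R$. The argument is a standard ``monotone composition preserves quasiconvexity'' fact, so the bulk of the work is just bookkeeping rather than any deep step.

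First, I would recall the equivalent characterizations of quasiconvexity that will be used: a function $f$ defined on a convex set is quasiconvex iff for every $\bm{p}_1, \bm{p}_2$ in its domain and every $\lambda \in [0,1]$,
\begin{equation*}
f\paren{\lambda \bm{p}_1 + (1-\lambda)\bm{p}_2} \leq \max\set{f(\bm{p}_1), f(\bm{p}_2)}.
\end{equation*}
Since $\Ss[n-1]$ is convex, this is the natural characterization to use for both $D$ and $C$.

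Next, I would fix arbitrary $\bm{p}_1, \bm{p}_2 \in \Ss[n-1]$ and $\lambda \in [0,1]$, and let $\bm{p}_\lambda := \lambda \bm{p}_1 + (1-\lambda)\bm{p}_2 \in \Ss[n-1]$. By the quasiconvexity of $D$ assumed in Definition~\ref{dfn: d g}, I obtain $D(\bm{p}_\lambda) \leq \max\set{D(\bm{p}_1), D(\bm{p}_2)}$. Applying $G$ to both sides and using that $G$ is monotonically non-decreasing, I get
\begin{equation*}
G\paren{D(\bm{p}_\lambda)} \leq G\paren{\max\set{D(\bm{p}_1), D(\bm{p}_2)}} = \max\set{G(D(\bm{p}_1)), G(D(\bm{p}_2))},
\end{equation*}
where the equality uses the elementary fact that a non-decreasing function commutes with $\max$. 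Rewriting in terms of $C = G \circ D$ yields $C(\bm{p}_\lambda) \leq \max\set{C(\bm{p}_1), C(\bm{p}_2)}$, which is precisely quasiconvexity of $C$ on $\Ss[n-1]$.

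There is no real obstacle here; the only subtlety is making sure I am applying the right definition (quasiconvexity via sublevel sets / the $\max$ inequality) and noting that neither continuity of $D$ nor the normalization conditions $D(\tfrac{1}{n}\bm{1}) = 0$, $G(0) = 0$, $G(x) \in (0, 1]$ for $x > 0$ are needed for this particular proposition. Those conditions are presumably used elsewhere in the paper (e.g., to guarantee a well-defined ``confidence'' interpretation of $C$), but only quasiconvexity of $D$ and monotonicity of $G$ enter this proof.
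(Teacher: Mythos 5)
Your proof is correct and takes essentially the same route as the paper: the paper simply cites the standard fact that a non-decreasing function composed with a quasiconvex function is quasiconvex, whereas you prove that fact directly from the $\max$-characterization, which is a fine self-contained substitute. The only content of the paper's proof you omit is its verification that the concrete dispersion functions (variance and negative entropy) are indeed quasiconvex, but that is supplementary to the proposition as stated, since Definition~\ref{dfn: d g} already assumes $D$ quasiconvex.
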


$D$'s definition states that only a prediction as bad as a random guess receives 0 dispersion. 
Typical dispersion functions such as \emph{variance} and \emph{negative entropy} (see \secref{appendix: proof_of_proposition c cvx} for definitions) belong to our class of $D$. 
Definition of $G$ states that $C$ does not decrease with a higher dispersion, which is reasonable as dispersion reflects the certainty of the prediction. 
In training, we can fix $C$ by manually specifying both $D$ and $G$. 
Alternatively, we can use a lightweight neural network (\textit{e.g.,} MLP) to learn $G$, where the input to $G$'s MLP is the variance and negative entropy computed by $D$ (\secref{sec: exp}).

\subsection{Interaction Between Experts}
\label{sec: analysis balance}

When optimizing the training loss in Equation \ref{eq: train obj}, confidence $C$ will accumulate on some nodes while diminish on the others. 
Different distributions of $C$ correspond to different ways that the two experts can \emph{specialize} and \emph{collaborate}. 
In the following, we theoretically reveal three factors that control the value of $C$: the richness of self-feature information, the relative strength between the two experts, and the shape of the confidence function. 
The analysis on the experts' relative strength also reveals why the confidence-based gate is \emph{biased}. 
Since both $C$ and the MLP loss are functions of the MLP prediction $\bm{p}_v$, we analyze the optimal $\bm{p}_v$ that minimizes $\Lsmoe$ given a \emph{fixed} GNN expert.  

We set up the following optimization problem. 
We first divide the graph into $m$ disjoint node partitions, where nodes are in the same partition, $\M[i]$, if and only if they have \emph{identical} self-features. 
For all nodes in $\M[i]$, an MLP generates the same prediction denoted as $\pg_i$. 
On the other hand, predictions for different partitions, $\pg_i$ and $\pg_j$ ($i\neq j$), can be \emph{arbitrarily} different since an MLP is a universal approximator \citep{mlp_universal}. 
The above properties convert Equation \ref{eq: train obj} from an optimization problem on $\bm{\theta}$ (MLP's weights) to the one on $\set{\pg_i}$ (MLP's predictions). 
Consider nodes $u$ and $v$ in the same partition $\M[i]$.  
If they have different labels $\bm{y}_u\neq \bm{y}_v$, no MLP can distinguish $u$ from $v$ because the model lacks necessary neighborhood information. 
Let $\bm{\alpha}_i$ be the label distribution for $\M[i]$. \textit{i.e.,} $\alpha_{ij}\in (0, 1)$ portion of nodes in $\M[i]$ have label $j$, and $\sum_{1\leq j \leq n}\alpha_{ij} = 1$.  
We are now ready to derive the optimization problem as follows (see \secref{appendix: opt derive} for detailed steps):
\vspace{-8pt}

\begin{equation}
\label{eq: optm p}
\set{\pg_i^*} = \arg\min_{\set{\pg_i}} \Lsmoe = \arg\min_{\set{\pg_i}}
\sum_{1\leq i\leq m} C\paren{\pg_i}\cdot \paren{\Lsg_{\bm{\alpha}_i}\paren{\pg_i} - \mu_i}
\end{equation}
\vspace{-8pt}

\noindent where $\Lsg_{\bm{\alpha}_i}\paren{\pg_i} = \frac{1}{\size{\M[i]}}\sum_{v\in\M[i]}\Ls\paren{\pg_i, \bm{y}_v} = -\sum_{1\leq j\leq n}\alpha_{ij}\cdot\log \hat{p}_{ij}$ is the average MLP cross-entropy loss on nodes in $\M[i]$;
$\mu_i = \frac{1}{\size{\M[i]}}\sum_{v\in\M[i]}\Ls\paren{\bm{p}'_v, \bm{y}_v}$ is a constant representing the average loss of the \emph{fixed} GNN on the $\M[i]$ nodes. 
We first summarize the properties of the loss as follows:

\begin{proposition}
\label{prop: min gap}
Given $\bm{\alpha}_i$, $\Lsg_{\bm{\alpha}_i}\paren{\pg_i}$ is a convex function of $\pg_i$ with unique minimizer $\pg_i^* = \bm{\alpha}_i$. 
Let $\Delta\paren{\bm{\alpha}} = \Lsg_{\bm{\alpha}}\paren{\bm{\alpha}}$ be a function of $\bm{\alpha}$, $\Delta$ is a concave function with unique maximizer $\bm{\alpha}^* = \frac{1}{n}\bm{1}$. 
\end{proposition}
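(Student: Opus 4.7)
The plan is to recognize $\Lsg_{\bm{\alpha}_i}\paren{\pg_i} = -\sum_j \alpha_{ij}\log \hat{p}_{ij}$ as the cross-entropy between $\bm{\alpha}_i$ and $\pg_i$, and $\Delta\paren{\bm{\alpha}}$ as the Shannon entropy of $\bm{\alpha}$; both claims then reduce to standard facts. Throughout I use the stated hypothesis that $\alpha_{ij} \in (0,1)$ for every $j$.

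First I would establish the first half of the proposition. Since $-\log(\cdot)$ is strictly convex on $(0,\infty)$, each summand $-\alpha_{ij}\log \hat{p}_{ij}$ is a strictly convex function of $\hat{p}_{ij}$ (the coefficient $\alpha_{ij}$ is strictly positive), so $\Lsg_{\bm{\alpha}_i}$ is a strictly convex function of $\pg_i$ on the relative interior of $\Ss[n-1]$. For the unique minimizer I would invoke Gibbs' inequality in the form
\begin{equation*}
\Lsg_{\bm{\alpha}_i}\paren{\pg_i} - \Lsg_{\bm{\alpha}_i}\paren{\bm{\alpha}_i} \;=\; \KL\paren{\bm{\alpha}_i \,\|\, \pg_i} \;\geq\; 0,
\end{equation*}
with equality if and only if $\pg_i = \bm{\alpha}_i$; alternatively, the Lagrangian $-\sum_j \alpha_{ij}\log \hat{p}_{ij} + \lambda\bigl(\sum_j \hat{p}_{ij}-1\bigr)$ yields $\hat{p}_{ij}=\alpha_{ij}/\lambda$ at stationarity, and normalization forces $\lambda = 1$, giving $\pg_i^{*}=\bm{\alpha}_i$. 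Strict convexity guarantees uniqueness.

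For the second half, I would rewrite $\Delta\paren{\bm{\alpha}} = \Lsg_{\bm{\alpha}}\paren{\bm{\alpha}} = -\sum_j \alpha_j \log \alpha_j$, i.e., the Shannon entropy $H\paren{\bm{\alpha}}$. Setting $\phi(x) = -x\log x$ with the convention $\phi(0)=0$, I have $\phi''(x) = -1/x < 0$ on $(0,1)$, so $\phi$ is strictly concave; summing coordinate-wise then yields strict concavity of $\Delta$ on $\Ss[n-1]$. The unique maximizer follows either from Jensen's inequality applied to $\phi$, giving $H\paren{\bm{\alpha}} \leq \log n$ with equality iff $\bm{\alpha}$ is uniform, or from the Lagrangian $-\log\alpha_j - 1 + \lambda = 0$, which forces $\alpha_j$ to be constant and hence $\bm{\alpha}^{*} = \tfrac{1}{n}\bm{1}$; strict concavity ensures uniqueness.

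The argument is essentially bookkeeping; the only subtlety is behavior at the boundary of $\Ss[n-1]$, where coordinates may vanish and $-\log$ blows up. I would handle this by noting that the hypothesis $\alpha_{ij} \in (0,1)$ puts $\bm{\alpha}_i$ in the relative interior, so the candidate optimizers $\pg_i^{*}=\bm{\alpha}_i$ and $\bm{\alpha}^{*}=\tfrac{1}{n}\bm{1}$ lie in the interior where strict convexity/concavity and the Lagrangian stationarity conditions apply cleanly, while boundary points are ruled out as optima because $\Lsg_{\bm{\alpha}_i}$ diverges there and $\Delta$ takes the value $0$ strictly below $\log n$ at any vertex of the simplex.
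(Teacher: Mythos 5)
Your proof is correct and follows essentially the same route as the paper's: the paper derives the minimizer $\pg_i^*=\bm{\alpha}_i$ via the Lagrangian/KKT conditions (which you give as your alternative) and identifies $\Delta$ with the Shannon entropy, citing its strict concavity and uniform maximizer, while you additionally supply the Gibbs'/KL-divergence shortcut and a direct second-derivative proof of the entropy's concavity. These are minor embellishments, not a different argument.
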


$\Delta\paren{\bm{\alpha}_i}$ reflects the \emph{best possible} performance of any MLP on a group $\mathcal{M}_i$. 
For optimization problem \ref{eq: optm p},
Theorem \ref{thm: C behavior} quantifies the positions of the minimizer $\pg_i^*$ and the corresponding values of $C$. 
In case 3 ($\Delta\paren{\bm{\alpha}
_i} <\mu_i$), it is impossible to derive the exact $\pg_i^*$ without knowing the specific $C$. 
Therefore, we bound $\pg^*_i$ via ``sub-level sets'', which characterize the shapes of the loss and confidence functions. 
The last sentence of Theorem \ref{thm: C behavior} shows the ``tightness'' of the bound. 

\begin{theorem}
\label{thm: C behavior}
Suppose $C=G\circ D$ follows Definition \ref{dfn: d g}. 
Denote $\Slsg_{\bm{\alpha}_i}^{\mu_i}=\set{\pg_i\given\Lsg_{\bm{\alpha}_i}\paren{\pg_i}=\mu_i}$ and $\Slsg_{\bm{\alpha}_i}^{<\mu_i}=\set{\pg_i\given\Lsg_{\bm{\alpha}_i}\paren{\pg_i}<\mu_i}$
 as the level set and strict sublevel set of $\Lsg_{\bm{\alpha}_i}$. 
Denote $\Scg^{<\mu_i} = \set{\pg_i\given C\paren{\pg_i}<\mu_i}$ as the strict sublevel set of $C$. 
For a given $\bm{\alpha}_i\neq \frac{1}{n}\bm{1}_n$, the minimizer $\pg_i^*$ satisfies:

\begin{itemize}[leftmargin=0.32cm]
\item If $\Delta\paren{\bm{\alpha}_i} > \mu_i$, then $\hat{\bm{p}}_i^* = \frac{1}{n}\bm{1}_n$ and $C\paren{\pg_i^*} = 0$. 
\item If $\Delta\paren{\bm{\alpha}_i} = \mu_i$, then $\pg_i^* = \bm{\alpha}_i$ or $\pg_i^*=\frac{1}{n}\bm{1}$. 
\item If $\Delta\paren{\bm{\alpha}_i} < \mu_i$, then $\pg_i^*\in \Slsg_{\bm{\alpha}_i}^{<\mu_i} - \Scg^{<\mu'}$ where $\mu' = C\paren{\bm{\alpha}_i}\leq C\paren{\pg_i^*} \leq G\paren{\max_{\pg_i\in\Slsg_{\bm{\alpha}_i}^{\mu_i}}D\paren{\pg_i}}$. 
Further, there exists $C$ such that $\pg_i^*=\bm{\alpha}_i$, or $\pg_i^*$ is sufficiently close to the level set $\Slsg_{\bm{\alpha}_i}^{\mu_i}$. 
\end{itemize} 
\end{theorem}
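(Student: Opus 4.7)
My plan is to exploit the separability of the objective: Equation~\ref{eq: optm p} decomposes as $\sum_i f_i(\pg_i)$ with $f_i(\pg_i) := C(\pg_i)\paren{\Lsg_{\bm{\alpha}_i}(\pg_i) - \mu_i}$, and the $\pg_i$'s are independent degrees of freedom because an MLP is a universal approximator and different partitions have distinct features. So it suffices to minimize each $f_i$ separately. The argument then case-splits on the sign of $\Delta(\bm{\alpha}_i) - \mu_i$, using as inputs Proposition~\ref{prop: min gap} (so $\bm{\alpha}_i$ uniquely minimizes $\Lsg_{\bm{\alpha}_i}$ with value $\Delta(\bm{\alpha}_i)$) and the consequence of Definition~\ref{dfn: d g} that $C(\pg) = 0$ exactly when $\pg = \frac{1}{n}\bm{1}$ and $C(\pg) > 0$ otherwise.

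Cases~1 and~2 reduce to sign-chasing. When $\Delta(\bm{\alpha}_i) > \mu_i$, the $\Lsg$-factor is strictly positive everywhere, so $f_i \geq 0$ and equality forces $C = 0$, giving $\pg_i^* = \frac{1}{n}\bm{1}$. When $\Delta(\bm{\alpha}_i) = \mu_i$, again $f_i \geq 0$, and equality forces one of the two factors to vanish, giving $\pg_i^* \in \set{\bm{\alpha}_i, \frac{1}{n}\bm{1}}$. In Case~3, $f_i(\bm{\alpha}_i) < 0$, so the minimum is strictly negative and any minimizer must simultaneously satisfy $C(\pg_i^*) > 0$ and $\Lsg_{\bm{\alpha}_i}(\pg_i^*) < \mu_i$; in particular $\pg_i^* \in \Slsg_{\bm{\alpha}_i}^{<\mu_i}$. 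The lower bound $C(\pg_i^*) \geq C(\bm{\alpha}_i) = \mu'$ falls out of rewriting $f_i(\pg_i^*) \leq f_i(\bm{\alpha}_i)$ as $C(\pg_i^*)\paren{\mu_i - \Lsg_{\bm{\alpha}_i}(\pg_i^*)} \geq C(\bm{\alpha}_i)\paren{\mu_i - \Delta(\bm{\alpha}_i)}$ and noticing that the left factor is at most the right factor by minimality of $\bm{\alpha}_i$; hence $C(\pg_i^*) \geq C(\bm{\alpha}_i)$, so $\pg_i^* \notin \Scg^{<\mu'}$.

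For the upper bound I would control $D(\pg_i^*)$ by maximizing it over the closed sublevel set $K := \Slsg_{\bm{\alpha}_i}^{\leq \mu_i}$. $K$ is compact and convex: convexity comes from convexity of $\Lsg_{\bm{\alpha}_i}$, and compactness from the fact that every $\alpha_{ij} > 0$ makes cross-entropy blow up on the simplex boundary, so $K$ lies in the relative interior of $\Ss[n-1]$ and has the level set $\Slsg_{\bm{\alpha}_i}^{\mu_i}$ as its relative boundary. Since $D$ is quasiconvex, the Krein--Milman-style fact that a quasiconvex function on a compact convex set attains its maximum at some extreme point, combined with the fact that extreme points of $K$ lie on its relative boundary, yields $\max_K D = \max_{\Slsg_{\bm{\alpha}_i}^{\mu_i}} D$; monotonicity of $G$ then delivers the claimed upper bound. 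For tightness I would exhibit two $C$'s: (i)~the saturated $G(x) = 1$ for $x > 0$ (allowed since $G$ is only required to be monotone non-decreasing with $G(0) = 0$) collapses $f_i$ to $\Lsg_{\bm{\alpha}_i} - \mu_i$ away from $\frac{1}{n}\bm{1}$ and makes $\pg_i^* = \bm{\alpha}_i$; (ii)~a family $G_k(x) = (x/x_0)^k$ on $[0, x_0]$ with $x_0 := \max_{\pg \in \Slsg_{\bm{\alpha}_i}^{\mu_i}} D(\pg)$ and $G_k = 1$ beyond makes the product $G_k(D(\pg))\paren{\mu_i - \Lsg_{\bm{\alpha}_i}(\pg)}$ concentrate near dispersion-maximizing $\pg$, driving $\pg_i^*$ arbitrarily close to the level set as $k \to \infty$.

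The main obstacle is the upper-bound argument. Two subtle points need care: first, quasiconvex functions need not attain their maximum \emph{only} at extreme points (unlike the strictly convex case), so I rely on the weaker Krein--Milman-based conclusion that at least one maximizer is extreme and then identify the extreme points of $K$ with the level set $\Slsg_{\bm{\alpha}_i}^{\mu_i}$ (using the cross-entropy blow-up on the simplex boundary to rule out ``outer'' extreme points); second, the tightness construction in (ii) may not attain its infimum because the resulting $C$ can be discontinuous or because the max of $D$ on $K$ might be achieved at interior points when $D$ is only weakly quasiconvex, but the theorem's phrasing ``sufficiently close to the level set'' already accommodates this.
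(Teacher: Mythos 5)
Your treatment of the first two cases, the membership claim $\pg_i^*\in \Slsg_{\bm{\alpha}_i}^{<\mu_i}$, the lower bound $C\paren{\pg_i^*}\geq C\paren{\bm{\alpha}_i}$ (your contrapositive rearrangement is the same inequality the paper uses), and the tightness construction (i) with the 0/1 step $G$ all match the paper's proof. Your route to the upper bound is a legitimate minor variant: the paper proves that the extreme points of $\Slsg_{\bm{\alpha}_i}^{\leq\mu_i}$ are \emph{exactly} the level set via strict convexity of $\Lsg_{\bm{\alpha}_i}$, then uses Krein--Milman, Caratheodory and a multi-point quasiconvexity inequality, plus the extreme value theorem, to get $\max_{\Slsg^{\leq\mu_i}}D=\max_{\Slsg^{\mu_i}}D$; you instead bound $D\paren{\pg_i^*}$ directly and argue that extreme points lie on the relative boundary, which equals the level set because the cross-entropy blows up on the simplex boundary. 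The inclusion you need is weaker than the paper's equality and your argument for it is sound (given continuity of $D$ for attainment and the same quasiconvex-combination fact hiding inside your ``Krein--Milman-style'' step), so this part buys a slightly shorter derivation of the same bound.

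The genuine gap is in the second tightness claim. Your family $G_k\paren{x}=\paren{x/x_0}^k$ only ``concentrates'' the objective near the level set if the maximizers of $D$ over $\Slsg_{\bm{\alpha}_i}^{\leq\mu_i}$ lie \emph{only} on $\Slsg_{\bm{\alpha}_i}^{\mu_i}$, which fails for merely quasiconvex $D$: take $D\paren{\pg}=\min\paren{\norm{\pg-\frac{1}{n}\bm{1}}^2,c}$ with the plateau containing $\bm{\alpha}_i$. Then $C=G\circ D$ is constant on the plateau for \emph{every} admissible $G$, and since shrinking $C$ off the plateau only raises the objective where $\Lsg_{\bm{\alpha}_i}<\mu_i$, the unique minimizer is $\bm{\alpha}_i$ no matter which $G$ you choose --- so your remark that the phrasing ``sufficiently close to the level set'' accommodates the plateau case is wrong; with such a $D$ no $G$ can push $\pg_i^*$ toward the level set, and the correct fix is to use the existential freedom in $C=G\circ D$ and pick $D$ strictly quasiconvex, exactly as the paper does (``Suppose $D$ is strictly quasiconvex''). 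Even granting that, your sketch asserts rather than proves the limiting behavior: you need a rate comparison showing that for any $\epsilon>0$ and large $k$, points at distance $\geq\epsilon$ from the level set have $D$ bounded away from $x_0$ (here strict quasiconvexity enters again, via the strict multi-point inequality the paper proves) and hence objective value $O\paren{\paren{x_1/x_0}^k}$, which is eventually beaten by a competitor of the form $\paren{1-t}\pg_0+t\bm{\alpha}_i$ near a dispersion maximizer $\pg_0\in\Slsg_{\bm{\alpha}_i}^{\mu_i}$. The paper avoids this limit argument altogether with a three-level step $G$ (values $0$, $\beta$, $1$ with threshold $D_{\max}$ over $\Slsg_{\bm{\alpha}_i}^{\leq\mu_i-\eta}$), which pins $\pg_i^*$ into the band $\Slsg_{\bm{\alpha}_i}^{<\mu_i}-\Slsg_{\bm{\alpha}_i}^{<\mu_i-\eta}$ for $\beta<\eta/\paren{\mu_i-\Delta\paren{\bm{\alpha}_i}}$, and then converts the loss gap $\eta$ into Euclidean distance to $\Slsg_{\bm{\alpha}_i}^{\mu_i}$ by a path-integral bound on the projected gradient; some argument of this kind (or your rate comparison, made explicit) is needed, and its absence plus the incorrect handling of the plateau case is where your proposal falls short.
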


Proposition \ref{prop: min gap} and Theorem \ref{thm: C behavior} jointly reveal factors affecting the MLP's contribution: 
\begin{enumerate*}
\item[(1)] \underline{Node feature information, $\bm{\alpha}_i$}: $\bm{\alpha}_i$ being closer to $\frac{1}{n}\bm{1}$ implies that node features are less informative. 
Thus, confidence $C$ will be lower since it is harder to reduce the MLP loss (Proposition \ref{prop: min gap}); 
\item[(2)] \underline{Relative strength of experts, $\Delta\paren{\bm{\alpha}_i} - \mu_i$}: when the best possible MLP loss $\Delta\paren{\bm{\alpha}_i}$ cannot beat the GNN loss $\mu_i$, the MLP simply learns to \emph{give up} on the node group $\mathcal{M}_i$ by generating random guess $\pg_i^* = \frac{1}{n}\bm{1}$. 
Otherwise, the MLP will beat the GNN by learning a $\pg_i^*$ in some neighborhood of $\bm{\alpha}_i$ and obtaining positive $C$. 
The worse the GNN (\textit{i.e.,} larger $\mu_i$), the easier to obtain a more dispersed $\pg_i^*$ (\textit{i.e.,} enlarged $\Slsg_{\bm{\alpha}_i}^{<\mu_i}$), and so the easier to achieve a larger $C$ (\textit{i.e.,} increased $\max_{\pg_i\in\Slsg_{\bm{\alpha}_i}^{\mu_i}}D\paren{\pg_i}$); 
\item[(3)] \underline{Shape of the confidence function $C$}: the sub-level set, $\Scg^{<\mu'}$, constrains the range of $\pg_i^*$. 
Additionally, changing $G$'s shape can push $C\paren{\pg_i^*}$ towards the lower bound $C\paren{\bm{\alpha}_i}$ or the upper bound $G\paren{\max_{\pg_i\in\Slsg_{\bm{\alpha}_i}^{\mu_i}}D\paren{\pg_i}}$. See \secref{appendix: proof optm} for specific $G$ constructions and all proofs. 
\end{enumerate*}

Point (1) is a data property. 
Point (2) leads to diverse training dynamics (\secref{sec: analysis behavior}). 
Additionally, it reflects {\moe}'s inherent bias: a better GNN completely dominates the MLP by $1-C\paren{\pg_i^*} = 1$, while a better MLP only softly deprecates the GNN with $C\paren{\pg_i^*} \leq 1$. 
Such bias is by design since we only cautiously activate the weak expert:
\begin{enumerate*}
\item[(a)] A GNN is harder to optimize, so even if it temporarily performs badly during training, we still preserve a certain weight $1-C$ hoping that it later catches up with the MLP; 
\item[(b)] Since a GNN generalizes better on the test set \citep{gnn_generalize}, we prefer it when the two experts have similar training performance. 
\end{enumerate*}
Point (3) justifies our learnable $G$ in \secref{sec: model conf}.

\subsection{Training dynamics}
\label{sec: analysis behavior}
\vspace{-5pt}

\parag{Specialization via data splitting. }
At the beginning of training ($r=1$, Algorithm \ref{algo: moe training}), a randomly initialized GNN approximately produces random guesses. 
Therefore, the MLP generates a distribution of $C$ over all training nodes purely based on the importance of self-features. 
Denote the subset of nodes with large $1-C$ as $\mathcal{S}_\text{GNN}$. 
When it is the GNN's turn to update its model parameters, the GNN optimizes $\mathcal{S}_\text{GNN}$ better due to the larger loss weight. 
In the subsequent round ($r=2$), the MLP will struggle to outperform the GNN on many nodes, especially on $\mathcal{S}_\text{GNN}$. 
According to Theorem \ref{thm: C behavior}, the MLP will completely ignore such challenging nodes by setting $C=0$ on $\mathcal{S}_\text{GNN}$,
thus simplifying the learning task for the weak expert. 
When MLP converges again, it specializes better on a smaller subset of the training set, and the updated $C$ distribution leads to a clearer data partition between the two experts. 
This iterative process continues, with each additional round of ``in-turn training'' reinforcing better specialization between the experts. 

\parag{Denoised fine-tuning. }
This is a special case of ``specialization'' where nodes are dominantly assigned to the GNN expert. This case can happen for two reasons:
\begin{enumerate*}
\item[(1)] GNN is intrinsically stronger;
\item[(2)] Our confidence-based gate favors the GNN (\secref{sec: analysis balance}).
\end{enumerate*}
Let $\Smlp$ and $\Sgnn$ represent the sets of nodes assigned to the two experts when the training of {\moe} has \emph{almost converged}. 
According to the ``relative strength'' analysis in \secref{sec: analysis balance}, $\Smlp$ may only constitute a small set of \emph{outlier} nodes with a high level of structural noises. 
Such $\Smlp$ does not capture a meaningful distribution of the training data, and thus the MLP overfits when it optimizes on $\Smlp$ and ignores $\Sgnn$. 
What's interesting is that now MLP's role switches from true specialization to noise filtering: 
Without the MLP, $\Smlp$ and $\Sgnn$ are both in the GNN's training set. 
While the size of $\Smlp$ is small, it may generate harmful gradients significant enough to make the GNN stuck in sub-optimum. 
With the MLP, $\Smlp$ is eliminated from the GNN's training set, enabling the GNN to further fine-tune on $\Sgnn$ without the negative impact from $\Smlp$. 
The improvement is not attributable to the MLP's performance on $\Smlp$, but rather to the enhancement of the GNN's model quality after training on a cleaner dataset.

\subsection{\moeplus}
\label{sec: moe star}
\vspace{-5pt}

We propose a model variant, {\moeplus}, by modifying the training loss of {\moe}. 
In Equation \ref{eq: train obj}, we compute the losses of the two experts separately, and then combine them via $C$ to obtain the overall loss $\Lsmoe$. 
In Equation \ref{eq: moeplus obj}, we first use $C$ to combine the predictions of the two experts, $\bm{p}_v$ and $\bm{p}_v'$, and then calculate a single loss for $\Lsmoeplus$. 
Note that the cross entropy loss, $L\paren{\cdot, \bm{y}_v}$ is convex, and the weighted sum based on $C$ also corresponds to a convex combination. 
We thus derive Proposition \ref{prop: loss bound}, which states that {\moeplus} is theoretically better than {\moe} due to a lower loss. 
\vspace{-8pt}

\begin{align}
\label{eq: moeplus obj}
\Lsmoeplus = \frac{1}{\size{\V}}\sum_{v\in\V}L\big(C\paren{\bm{p}_v}\cdot\bm{p}_v+\paren{1-C\paren{\bm{p}_v}}\cdot\bm{p}'_v, \;\bm{y}_v\big)
\end{align}
\vspace{-8pt}

\begin{proposition}
\label{prop: loss bound}
For any function $C$ with range in $[0, 1]$,  $\Lsmoe$ upper-bounds $\Lsmoeplus$. 
\end{proposition}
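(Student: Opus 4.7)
The plan is to reduce Proposition \ref{prop: loss bound} to a node-wise application of Jensen's inequality, using the convexity of the cross-entropy loss in its first argument. Since $C(\bm{p}_v) \in [0,1]$, the coefficients $C(\bm{p}_v)$ and $1 - C(\bm{p}_v)$ are non-negative and sum to $1$, so for each $v$ the expression $C(\bm{p}_v)\cdot\bm{p}_v + (1 - C(\bm{p}_v))\cdot\bm{p}'_v$ is a genuine convex combination of two probability vectors in $\Ss[n-1]$.

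First I would verify that $L(\cdot, \bm{y}_v)$ is convex on the simplex. Writing $L(\bm{q}, \bm{y}_v) = -\sum_{j=1}^n y_{vj} \log q_j$, each summand is a non-negative scalar multiple of $-\log q_j$, and $-\log$ is convex on $(0,\infty)$; hence $L$ is a non-negative combination of convex functions of $\bm{q}$ and is therefore convex in $\bm{q}$. Next I would apply Jensen's inequality with weights $C(\bm{p}_v)$ and $1 - C(\bm{p}_v)$ at each node $v$ to obtain
\begin{equation*}
L\big(C(\bm{p}_v)\cdot\bm{p}_v+(1-C(\bm{p}_v))\cdot\bm{p}'_v,\;\bm{y}_v\big) \;\leq\; C(\bm{p}_v)\cdot L(\bm{p}_v,\bm{y}_v) + (1-C(\bm{p}_v))\cdot L(\bm{p}'_v,\bm{y}_v).
\end{equation*}
Finally, summing this inequality over $v \in \V$ and dividing by $\size{\V}$ yields exactly $\Lsmoeplus \leq \Lsmoe$ by comparison with Equation \ref{eq: train obj} and Equation \ref{eq: moeplus obj}.

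There is no substantive obstacle here, since the whole argument is a direct pointwise application of Jensen's inequality. The only minor care to take is at the boundary of the simplex, where some coordinate of $\bm{p}_v$ or $\bm{p}'_v$ may vanish and $L$ can be $+\infty$; this case is harmless because the inequality then holds trivially (either both sides are $+\infty$, or the right-hand side is $+\infty$ while the left-hand side remains a valid extended real). No assumption on $C$ beyond range $[0,1]$ is needed, matching the statement.
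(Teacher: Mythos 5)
Your proposal is correct and follows essentially the same route as the paper's proof: convexity of the cross-entropy loss in its first argument plus the fact that $C(\bm{p}_v)\in[0,1]$ makes the mixed prediction a convex combination, then a pointwise Jensen step summed over $v\in\V$. Your extra remarks (explicitly verifying convexity of $L$ and handling the simplex boundary where the loss may be $+\infty$) are fine refinements but not a different argument.
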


Since {\moeplus} only has a single loss term, we no longer employ the ``in-turn training'' of Algorithm \ref{algo: moe training}. 
Instead, we directly differentiate $\Lsmoeplus$ and update the MLP, GNN, and the learnable $C$ altogether. 
During inference, {\moeplus} computes both experts and predicts via $C\paren{\bm{p}_v}\cdot \bm{p}_v + \paren{1-C\paren{\bm{p}_v}}\cdot \bm{p}_v'$. 

\parag{\moe~\emph{vs.} \moeplus. }
Both variants have similar collaboration behaviors (\textit{e.g.,} \secref{sec: analysis behavior}), primarily driven by the ``weak-strong'' design choice and the confidence mechanism. 
Regarding trade-offs, {\moe} may be easier to optimize as its ``in-turn training'' fully decouples the experts' different model architectures, 
while {\moeplus} has a theoretically lower loss. 
Both variants can be practically useful.

\subsection{Expressive power and computation complexity}
\label{sec: expressive power}

We jointly analyze {\moe} and {\moeplus} due to their commonalities. 
Since the MLP and GNN experts execute independently before being combined, it is intuitive that our system can express each expert alone by simply disabling the other expert. 
In the following theoretical results, the term ``expressive power'' is used in accordance with its standard definition \citep{exp_pow}, which characterizes the neural network's ability to approximate functions. The formal definition and proof are in \secref{appendix sec: proof expressive power}.
In Proposition \ref{prop: expressivity gnn}, we construct a specific $G$ so that the confidence function acts as a binary gate between experts. 
Theorem \ref{prop: expressivity gcn} provides a stronger result on the GCN architecture \citep{gcn}: when neighbors are noisy (or even adversarial), their aggregation brings more harm than benefit. An MLP is a simple yet effective way to eliminate such neighbor noises. 
The proof share inherent connections with the empirically observed ``denoising'' behavior in \secref{sec: analysis behavior}.

\begin{proposition}
\label{prop: expressivity gnn}
{\moe} and {\moeplus} are at least as expressive as the MLP or GNN expert alone. 
\end{proposition}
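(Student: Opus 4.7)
The plan is to prove both inclusions by explicit construction: for each expert, I exhibit a setting of the other expert's parameters together with a choice of confidence function $G$ that forces the mixture output to coincide with that expert's own output. Both {\moe} and {\moeplus} will be handled within the same configuration, since they differ only in whether the $C$-weighted combination is stochastic (sampling one branch) or deterministic (convex combination of predictions).

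\textbf{GNN direction.} To realize an arbitrary function computable by the GNN expert alone, I would configure the MLP so that it produces the uniform prediction $\frac{1}{n}\bm{1}$ on every input --- for example by zeroing out the weights and biases of the MLP's final layer so the pre-softmax logits are identically $\bm{0}$. By Definition \ref{dfn: d g}, $D\paren{\frac{1}{n}\bm{1}} = 0$, hence $C \equiv G(0) = 0$ for any admissible $G$. For {\moeplus}, the combination $C\cdot\bm{p}_v + \paren{1-C}\cdot\bm{p}'_v$ collapses to $\bm{p}'_v$. For {\moe}, Algorithm \ref{algo: moe inference} invokes the MLP branch only when $q < C = 0$, an event of probability zero for $q$ uniform on $[0,1]$, so the GNN branch is executed almost surely.

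\textbf{MLP direction.} Here I would instead choose a particular $G$ that forces $C = 1$ whenever the MLP output is not uniform. Concretely, let $G(x) = 1$ for $x > 0$ and $G(0) = 0$; this function is monotonically non-decreasing, satisfies $G(0)=0$, and takes values in $(0,1]$ for positive inputs, so it lies in the class of Definition \ref{dfn: d g}. Any function an MLP alone can (approximately) realize admits an MLP whose outputs are everywhere strictly non-uniform --- if necessary, after an $\varepsilon$-perturbation of the last-layer bias that preserves approximation quality. For such an MLP, $D\paren{\bm{p}_v} > 0$ and therefore $C\paren{\bm{p}_v} = 1$ for every node $v$, so both {\moe} (which picks the MLP branch almost surely when $C=1$) and {\moeplus} (whose convex combination reduces to $\bm{p}_v$) output the MLP's prediction.

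\textbf{Main obstacle.} The only subtlety I anticipate is the perturbation step in the MLP direction: one cannot simply take $G \equiv 1$, because Definition \ref{dfn: d g} insists on $G(0) = 0$, and a genuinely uniform MLP output would collapse $C$ to $0$ and hand the prediction back to the GNN. I plan to resolve this by appealing to the formal approximation-based definition of expressive power (cf.~\secref{appendix sec: proof expressive power}) and to the density of strictly non-uniform functions in the MLP's approximation class, so that the boundary case does not shrink the set of realizable functions. The GNN direction is an exact reduction and presents no such difficulty.
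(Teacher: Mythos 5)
Your construction is essentially the paper's: the same binary gate $G(x)=0$ for $x\leq 0$ and $G(x)=1$ for $x>0$, and the same idea of trivializing one expert so that the gate hands the output to the other, with the GNN direction (zero out the MLP so $\bm{p}_v=\frac{1}{n}\bm{1}$, hence $C\equiv 0$) matching the paper's argument exactly. Where you diverge is the MLP direction's boundary case, which you correctly flag as the only subtlety: you handle an MLP whose output is exactly uniform at some node by an $\varepsilon$-perturbation plus a density argument. The paper avoids this entirely with a simpler, exact trick: in the MLP direction it configures the \emph{GNN} expert to output $\frac{1}{n}\bm{1}$ on every node, so that at any node where the MLP happens to output exactly $\frac{1}{n}\bm{1}$ (hence $C=0$ and the GNN branch is taken), the mixture's output is $\frac{1}{n}\bm{1}$ and still coincides with the MLP's prediction; thus {\moe} and {\moeplus} reproduce \emph{any} MLP expert's predictions identically, with no perturbation needed. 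This matters because the formal definition in \secref{appendix sec: proof expressive power} requires \emph{identical} predictions, not approximation: under that definition your perturbation step realizes a nearby MLP exactly rather than the given one, leaving a small gap, whereas under an approximation-based notion of expressive power your density argument suffices. The gap is easily closed by adopting the paper's device (or, equivalently, by symmetrizing your own GNN-direction construction), so the substance of your proof stands.
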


\begin{theorem}
\label{prop: expressivity gcn}
{\moegnn{GCN}} and {\moeplusgnn{GCN}} are more expressive than the GCN expert alone. 
\end{theorem}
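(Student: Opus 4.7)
The plan is to leverage Proposition \ref{prop: expressivity gnn} for the subsumption direction, so it remains to exhibit one target-node classification function that is realizable by {\moegnn{GCN}} (and {\moeplusgnn{GCN}}) but not by any standalone GCN. The strategy mirrors the ``denoising'' intuition of Section \ref{sec: analysis behavior}: I will build a graph on which any GCN is forced by its aggregation invariance to confuse two target nodes of opposing labels, while the confidence mechanism routes those nodes away from the GCN expert to an MLP that resolves them cleanly.

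First, I would construct a graph $\G$ with two distinguished nodes $u,v$ of opposing labels $\bm{y}_u\ne\bm{y}_v$ whose rows in every propagated feature matrix $\Asym^\ell \X$ coincide for all $\ell\ge 1$, while their raw self-features differ, $\bm{x}_u\ne\bm{x}_v$. A clean realization takes two isomorphic connected components whose features are permutations of one another, tuned so that the symmetrically normalized multi-hop aggregations at $u$ and at $v$ agree though the raw inputs do not. Because each GCN layer is a per-row function of the aggregated matrix, any choice of GCN weights produces $\bm{p}'_u=\bm{p}'_v$ at all depths, so no GCN can match both labels.

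Next, I would realize the same labeling with {\moegnn{GCN}}. By universal approximation of MLPs on a finite set of distinct feature vectors, there is an MLP producing a one-hot logit at $u$ matching $\bm{y}_u$, a one-hot logit at $v$ matching $\bm{y}_v$, and exactly $\frac{1}{n}\bm{1}$ on all other nodes. Choose $G$ to be the step function $G(0)=0$ and $G(x)=1$ for $x>0$, which still obeys Definition \ref{dfn: d g}, together with any valid $D$ from Section \ref{sec: model conf}. Then $C\paren{\bm{p}_u}=C\paren{\bm{p}_v}=1$ while $C\paren{\bm{p}_w}=0$ on every other $w$; Algorithm \ref{algo: moe inference} deterministically outputs the MLP at $u,v$ and the GCN elsewhere, and the convex combination in {\moeplusgnn{GCN}} reduces at $u$ to $1\cdot\bm{p}_u+0\cdot\bm{p}'_u=\bm{p}_u$ (similarly at $v$), so both variants deliver the target labeling while the GCN expert may be instantiated to fit whatever auxiliary labels are imposed on the remaining nodes.

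The main obstacle is ensuring indistinguishability at every GCN depth simultaneously rather than at a single layer. The cleanest route is the isomorphic-components construction above, which turns the multi-layer collision into an elementary symmetry argument: the permutation that exchanges the two components fixes all $\ell$-hop neighborhood multisets of $u$ with those of $v$, so $\Asym^\ell\X$ agrees at rows $u,v$ for every $\ell$. A secondary subtlety is aligning with the formal ``expressive power'' definition of \citep{exp_pow}: the impossibility for GCN is a hard equality constraint on its output pair $\paren{\bm{p}'_u,\bm{p}'_v}$, so the target function lies strictly outside the GCN's realizable set while remaining inside that of {\moegnn{GCN}}, which combined with Proposition \ref{prop: expressivity gnn} gives strict containment.
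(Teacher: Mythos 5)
Your overall strategy is the same as the paper's: exhibit a graph with two differently-labeled nodes $u,v$ that no GCN can separate but whose raw features differ, then use a step-function $G$ (with the MLP outputting $\frac{1}{n}\bm{1}$ off $\{u,v\}$) to route $u,v$ to the MLP and everything else to the GCN. Your second half --- the MLP construction, the binary confidence gate, and the preservation of the GCN's behavior on the remaining nodes --- matches the paper's argument and is fine.

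The gap is in the GCN-indistinguishability step. Your stated sufficient condition --- that rows $u$ and $v$ of $\Asym^\ell\X$ coincide for all $\ell$ --- controls only the \emph{linearized} propagation; a $K$-layer GCN interleaves nonlinearities, computing $\sigma(\Asym\,\sigma(\Asym\X\W[1])\W[2])\cdots$, and $\Asym\,\sigma(\Asym\X\W)$ is not a function of $\Asym^2\X$. For the layer-2 outputs at $u$ and $v$ to agree for \emph{every} weight choice, you need the weighted collections of layer-1 embeddings over their respective neighborhoods to match, which your condition does not deliver. Your fallback symmetry argument has the same problem in a sharper form: a GCN is equivariant only under permutations that preserve node features, so a component-exchanging automorphism forces $\bm{x}_u=\bm{x}_{F(u)}=\bm{x}_v$, contradicting the very hypothesis $\bm{x}_u\neq\bm{x}_v$ that your MLP needs. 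The paper closes this hole with a different device: it imposes that every node $w$ in the relevant neighborhoods has a ``private'' next-hop neighbor whose feature can be set to cancel the aggregated sum, so that $\sum_{w'}c_{ww'}\bm{h}^{(0)}_{w'}=\bm{0}$ and the layer-1 output collapses to the feature-independent constant $\sigma(\bm{b}^{(0)})$ on all $(K{-}1)$-hop neighbors of both $u$ and $v$; from layer 2 onward the two neighborhoods are isomorphic in both structure and (now constant) features, so induction gives identical outputs for any GCN. You would need either this feature-annihilation trick or some other argument that survives the interleaved nonlinearities to make your construction rigorous.
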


For computation complexity,
we consider the average cost of predicting one node in inference. 
For sake of discussion, we consider the GCN architecture and assume that both experts have the same number of layers $\ell$ and feature dimension $f$. 
In the worst case, both experts are activated (as noted in Algorithm \ref{algo: moe inference}, we can skip the strong expert with probability $C$). 
The cost of the GCN is lower bounded by $\Omega\paren{f^2\cdot \paren{\ell + b_{\ell-1}}}$, where $b_{\ell-1}$ is the average number of $\paren{\ell-1}$-hop neighbors. The cost of the MLP is $O\paren{f^2\cdot \ell}$. 
On large graphs, the neighborhood size $b_{\ell-1}$ can grow exponentially with $\ell$, so realistically, $b_{\ell-1} \gg \ell$. This means the worst-case cost of {\moe} or {\moeplus} is similar to that of a vanilla GCN. 
The conclusion still holds if we use an additional, lightweight MLP to compute the confidence $C$ (since such an MLP only approximates a scalar function $G$, it is not expensive).

\subsection{Mixture of progressively stronger experts}
\label{sec: moe multi expert}

Suppose we have a series of $k$ \emph{progressively stronger} experts where expert $i$ is stronger than $j$ if $i > j$ (\textit{e.g.,} 3 experts consisting of MLP, SGC \citep{sgc} and GCN \citep{gcn}). 
{\moe} and {\moeplus} can be generalized following a \emph{recursive} formulation. 
Let $L_i^v$ be the loss of expert $i$ on node $v$, $L_{\geq i}^v$ be the loss of a {\moe} sub-system consisting of experts $i$ to $k$, and $C^v_i$ be the confidence of expert $i$ computed by $i$'s prediction on node $v$. 
Equation \ref{eq: train obj} then generalizes to $\Lsmoe = \frac{1}{\size{\V}} \sum_{v\in\V}\Ls_{\geq 1}^v$ and $\Ls_{\geq i}^v = C_i^v\cdot \Ls_i^v + \paren{1-C_i^v} \cdot \Ls_{\geq i+1}^v$. 
During inference, similar to Algorithm \ref{algo: moe inference}, the strong expert will be activated only when all previous weaker experts are bypassed due to their low confidence. 
The case for {\moeplus} can be derived similarly. See \secref{appendix: algo 3 expert loss} for the final form of $\Lsmoe$. 
Due to the recursive nature of adding new experts, all our analyses and observations in previous sections still hold. 
For instance, the expressive power is ensured as follows: 

\begin{proposition}
\label{prop: expressivity many gnns}
{\moe} and {\moeplus} are at least as expressive as any expert alone. 
\end{proposition}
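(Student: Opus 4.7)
The plan is to proceed by induction on the number of experts $k$, leveraging the recursive definition $\Ls_{\geq i}^v = C_i^v\cdot \Ls_i^v + \paren{1-C_i^v}\cdot \Ls_{\geq i+1}^v$ from \secref{sec: moe multi expert}. The base case $k=2$ is exactly Proposition \ref{prop: expressivity gnn}. For the inductive step I would exploit a simple structural observation: the $k$-expert loss $C_1^v\cdot \Ls_1^v + \paren{1-C_1^v}\cdot \Ls_{\geq 2}^v$ has precisely the form of a two-expert {\moe} in which the ``weak'' slot is occupied by expert $1$ and the ``strong'' slot is occupied by the $(k-1)$-expert sub-{\moe} over experts $2,\ldots,k$. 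The analogous decomposition holds for {\moeplus} at the level of predictions, since its combined prediction is $C_1^v\cdot \bm{p}_1^v + \paren{1-C_1^v}\cdot \tilde{\bm{p}}_{\geq 2}^v$ for the sub-system prediction $\tilde{\bm{p}}_{\geq 2}^v$.

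To express an arbitrary target expert $j$, I would split into two cases. If $j=1$, I directly apply Proposition \ref{prop: expressivity gnn} to the outer two-expert view to collapse the system onto expert $1$. If $j\geq 2$, I again apply Proposition \ref{prop: expressivity gnn} to collapse the outer system onto the $(k-1)$-expert sub-system, and then invoke the inductive hypothesis to have the sub-system express expert $j$ alone. Composing these two steps yields the desired expressivity for every $j\in\set{1,\ldots,k}$.

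The main obstacle will be verifying that the construction underlying Proposition \ref{prop: expressivity gnn}, originally instantiated with a concrete MLP and a concrete GNN, remains valid when the ``strong'' slot is filled by a compound {\moe} sub-system rather than a single model. That proof should specify a confidence function (for instance $G\paren{x}=1$ for $x>0$ and $G\paren{0}=0$, which satisfies Definition \ref{dfn: d g}) together with weight settings that force $C_1^v$ to be identically $1$ or identically $0$, thereby selecting one side of the mixture. These ingredients are agnostic to the internal architecture of either branch; they only require that each branch can, on demand, emit either a non-uniform target distribution or the uniform distribution $\tfrac{1}{n}\vone$. This property is satisfied by every expert under consideration (MLP, SGC, GCN, and any {\moe} sub-system built from them) via straightforward weight zeroing, so once this architectural neutrality is made explicit the induction closes cleanly.
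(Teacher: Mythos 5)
Your proposal is correct and is essentially the paper's own argument: the paper likewise takes the binary gate $G$ (0 for $x\leq 0$, 1 for $x>0$), forces every non-target expert to emit the uniform prediction $\frac{1}{n}\bm{1}$ so its confidence vanishes, and handles the edge case where the target expert itself outputs $\frac{1}{n}\bm{1}$ by noting the fallback output coincides with it. The only difference is bookkeeping — you induct on the recursive loss $\Ls_{\geq i}^v = C_i^v\Ls_i^v + (1-C_i^v)\Ls_{\geq i+1}^v$ and reuse Proposition \ref{prop: expressivity gnn} with a compound ``strong'' branch (correctly noting its architecture-agnosticism), whereas the paper argues directly on the activation coefficients $\tau_m^v=\big(\prod_{i<m}(1-C_i^v)\big)C_m^v$ of the $M$-expert loss; unrolled, the two constructions are identical.
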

\section{Experiments}
\label{sec: exp}

\begin{table}[ht]
    \centering
    \caption{{\moe} outperforms baselines under the same number of layers and hidden dimension. Values with `$\dagger$', `$\ddagger$' and `$\dagger\dagger$' are from \citet{ogb}, \citet{linkx}, and \citet{graphmoe}. For each graph, we show the \textbf{best} and \underline{second best} results, and \textbf{\greentext{absolute gains}} against the GNN counterparts (\textit{e.g.,} {\moebothgnn{GCN}} \emph{vs.} GCN and GraphMoE-GCN). All results are averaged over 10 runs. }
    \label{tab:main_res}
    \adjustbox{max width=\textwidth}{
\begin{tabular}{lcccccc}
\toprule[1.1pt]
                            & \flickr       & \products & \arxiv    & \penn       & \pokec        & \twitch\\
                            \midrule
MLP                         & 46.93 \small{$\textcolor{gray}{\pm 0.00}$} & 61.06$^{\dagger}$ \small{$\textcolor{gray}{\pm 0.08}$} & 55.50$^{\dagger}$ \small{$\textcolor{gray}{\pm 0.23}$} & 73.61$^{\ddagger}$ \small{$\textcolor{gray}{\pm 0.40}$} & 62.37$^{\ddagger}$ \small{$\textcolor{gray}{\pm 0.02}$} & 60.92$^{\ddagger}$ \small{$\textcolor{gray}{\pm 0.07}$} \\
GAT                         & 52.47 \small{$\textcolor{gray}{\pm 0.14}$} & OOM          & 71.58 \small{$\textcolor{gray}{\pm 0.17}$} & 81.53$^{\ddagger}$ \small{$\textcolor{gray}{\pm 0.55}$} & 71.77$^{\ddagger}$ \small{$\textcolor{gray}{\pm 6.18}$} & 59.89$^{\ddagger}$ \small{$\textcolor{gray}{\pm 4.12}$} \\
GPR-GNN                     & 53.23 \small{$\textcolor{gray}{\pm 0.14}$} & 72.41 \small{$\textcolor{gray}{\pm 0.04}$} & 71.10 \small{$\textcolor{gray}{\pm 0.22}$} & 81.38$^{\ddagger}$ \small{$\textcolor{gray}{\pm 0.16}$} & \underline{78.83}$^{\ddagger}$ \small{$\textcolor{gray}{\pm 0.05}$} & 61.89$^{\ddagger}$ \small{$\textcolor{gray}{\pm 0.29}$}  \\
AdaGCN                      & 48.96 \small{$\textcolor{gray}{\pm 0.06}$} & 69.06 \small{$\textcolor{gray}{\pm 0.04}$} & 58.45 \small{$\textcolor{gray}{\pm 0.50}$} & 74.42 \small{$\textcolor{gray}{\pm 0.58}$} & 55.92 \small{$\textcolor{gray}{\pm 0.35}$} & 61.02 \small{$\textcolor{gray}{\pm 0.14}$} \\
\midrule
GCN                         & 53.86 \small{$\textcolor{gray}{\pm 0.37}$} & 75.64$^{\dagger}$ \small{$\textcolor{gray}{\pm 0.21}$} & 71.74$^{\dagger}$ \small{$\textcolor{gray}{\pm 0.29}$} & 82.17 \small{$\textcolor{gray}{\pm 0.04}$} & 76.01 \small{$\textcolor{gray}{\pm 0.49}$} & 62.42 \small{$\textcolor{gray}{\pm 0.53}$} \\
GCN-skip &  52.98 \small{$\textcolor{gray}{\pm 0.00}$} &   -  & 69.56 \small{$\textcolor{gray}{\pm 0.00}$} & 76.58 \small{$\textcolor{gray}{\pm 0.53}$} & 73.46 \small{$\textcolor{gray}{\pm 0.04}$} & 61.05 \small{$\textcolor{gray}{\pm 0.23}$} \\
GraphMoE-GCN                & 53.03 \small{$\textcolor{gray}{\pm 0.14}$} & 73.90 \small{$\textcolor{gray}{\pm 0.00}$} & 71.88$^{\dagger\dagger}$ \small{$\textcolor{gray}{\pm 0.32}$} & 81.61 \small{$\textcolor{gray}{\pm 0.27}$} & 76.99 \small{$\textcolor{gray}{\pm 0.10}$} & 62.76 \small{$\textcolor{gray}{\pm 0.22}$}  \\
\multirow{2}{*}{\moebothgnn{GCN}}  & \underline{54.62} \small{$\textcolor{gray}{\pm 0.23}$} & 76.49 \small{$\textcolor{gray}{\pm 0.22}$} & \textbf{72.52} \small{$\textcolor{gray}{\pm 0.07}$} & \underline{83.19} \small{$\textcolor{gray}{\pm 0.43}$} & 77.28 \small{$\textcolor{gray}{\pm 0.08}$} & 63.74 \small{$\textcolor{gray}{\pm 0.23}$} \\
                            & \small{(\greentext{\textbf{+0.76}})}        & \small{(\greentext{\textbf{+0.85}})}        & \small{(\greentext{\textbf{+0.64}})}        & \small{(\greentext{\textbf{+1.02}})}        & \small{(\greentext{\textbf{+0.29}})}        & \small{(\greentext{\textbf{+0.83}})}        \\
                            \midrule
GIN                & 53.71 \small{$\textcolor{gray}{\pm 0.35}$} &   - &   69.39   \small{$\textcolor{gray}{\pm 0.56}$}         & 82.68 \small{$\textcolor{gray}{\pm 0.32}$} & 53.37 \small{$\textcolor{gray}{\pm 2.15}$} & 61.76 \small{$\textcolor{gray}{\pm 0.60}$}\\ 
\multirow{2}{*}{\moebothgnn{GIN}}         & \textbf{55.48} \small{$\textcolor{gray}{\pm 0.32}$} &  - & 71.43 \small{$\textcolor{gray}{\pm 0.26}$} & \textbf{84.56} \small{$\textcolor{gray}{\pm 0.31}$} & 76.11 \small{$\textcolor{gray}{\pm 0.39}$} & 64.32 \small{$\textcolor{gray}{\pm 0.34}$} \\
& \small{(\greentext{\textbf{+1.77}})}     &    & \small{(\greentext{\textbf{+2.04}})}        & \small{(\greentext{\textbf{+1.88}})}        & \small{(\greentext{\textbf{+22.74}})}        & \small{(\greentext{\textbf{+2.56}})}        \\
GIN-skip         & 52.70 \small{$\textcolor{gray}{\pm 0.00}$} & - & 71.28 \small{$\textcolor{gray}{\pm 0.00}$} & 80.32 \small{$\textcolor{gray}{\pm 0.43}$} & 76.29 \small{$\textcolor{gray}{\pm 0.51}$} & 64.27 \small{$\textcolor{gray}{\pm 0.25}$} \\
\multirow{2}{*}{\moebothgnn{GIN-skip}}        & 53.19 \small{$\textcolor{gray}{\pm 0.31}$} & - & 71.79 \small{$\textcolor{gray}{\pm 0.23}$} & 81.20 \small{$\textcolor{gray}{\pm 0.55}$} & \textbf{79.70} \small{$\textcolor{gray}{\pm 0.23}$} & \textbf{64.91} \small{$\textcolor{gray}{\pm 0.22}$}  \\
& \small{(\greentext{\textbf{+0.49}})}   &    & \small{(\greentext{\textbf{+0.51}})}        & \small{(\greentext{\textbf{+0.88}})}        & \small{(\greentext{\textbf{+3.41}})}        & \small{(\greentext{\textbf{+0.64}})}        \\
\midrule

GraphSAGE                    & 53.51 \small{$\textcolor{gray}{\pm 0.05}$} & \underline{78.50}$^{\dagger}$ \small{$\textcolor{gray}{\pm 0.14}$} & 71.49$^{\dagger}$ \small{$\textcolor{gray}{\pm 0.27}$} & 76.75 \small{$\textcolor{gray}{\pm 0.52}$} & 75.76 \small{$\textcolor{gray}{\pm 0.04}$} & 61.99 \small{$\textcolor{gray}{\pm 0.30}$} \\
GraphMoE-SAGE               & 52.16 \small{$\textcolor{gray}{\pm 0.13}$} & 77.79 \small{$\textcolor{gray}{\pm 0.00}$} & 71.19 \small{$\textcolor{gray}{\pm 0.15}$} & 77.04 \small{$\textcolor{gray}{\pm 0.55}$} & 76.67 \small{$\textcolor{gray}{\pm 0.08}$} & 63.42 \small{$\textcolor{gray}{\pm 0.23}$}\\
\multirow{2}{*}{\moebothgnn{SAGE}} & 53.90 \small{$\textcolor{gray}{\pm 0.18}$} & \textbf{79.38} \small{$\textcolor{gray}{\pm 0.44}$} & \underline{72.04} \small{$\textcolor{gray}{\pm 0.24}$} & 79.07 \small{$\textcolor{gray}{\pm 0.43}$} & 77.84 \small{$\textcolor{gray}{\pm 0.04}$} & \underline{64.38} \small{$\textcolor{gray}{\pm 0.14}$} \\
                            & \small{(\greentext{\textbf{+0.39}})}        & \small{(\greentext{\textbf{+0.88}})}        & \small{(\greentext{\textbf{+0.55}})}        & \small{(\greentext{\textbf{+2.03}})}        & \small{(\greentext{\textbf{+1.33}})}        & \small{(\greentext{\textbf{+1.05}})}    \\ 
                            \bottomrule[1.1pt]
\end{tabular}}
\end{table}

\parag{Setup. }
We evaluate {\moeboth} on a diverse set of benchmarks, including 3 homophilous graphs ({\flickr} \citep{graphsaint}, {\arxiv} and {\products} \citep{ogb}) and 3 heterophilous graphs ({\penn}, {\pokec} and {\twitch} \citep{linkx}). 
Following the literature, we perform node classification using the ``accuracy'' metric, with standard training / validation / test splits. 
The graph sizes range from 89K (\flickr) to 2.4M (\products) nodes. 
See also \secref{appendix: data}. % for dataset statistics. 

\subparag{Baselines. } 
GCN \citep{gcn}, GraphSAGE \citep{graphsage}, GAT \citep{gat}, and GIN \citep{gin} are the most widely used GNN architectures achieving state-of-the-art accuracy on both homophilous \citep{ogb, lmc} and heterophilous \citep{linkx, local_heterophily, critical_look} graphs. 
In addition, GPR-GNN \citep{gprgnn} 
decouples the aggregation of neighbors within different hops, which effectively addresses the challenges on heterophilous graphs \citep{linkx, critical_look}.
We also compare with H\textsubscript{2}GCN \citep{homo-tricks}, the state-of-the-art heterophilous GNN. 
Additionally, we construct variants of the baselines by adding skip connections. 
Since GraphSAGE and H\textsubscript{2}GCN have already implemented skip connections in their original design, we thus only integrate skip connection into GCN and GIN. Due to resource constraints, we exclude these variants in our experiments on the largest {\products} graph.
% (see Appendix \ref{appendix: baseline details}). 
AdaGCN \citep{adagcn} and GraphMoE \citep{graphmoe} are the state-of-the-art
ensemble and MoE models, both combining multiple GNNs.

\subparag{Hyperparameters. }
For {\flickr}, {\products} and {\arxiv}, we follow the original literature \citep{ogb, shaDow} to set the number of layers as 3 and hidden dimension as 256, for all the baselines as well as for both the MLP and GNN experts of {\moeboth}. 
Regarding {\penn}, {\pokec} and {\twitch}, the authors of the original paper \citep{linkx} searched for the best network architecture for each baseline independently. We follow the same protocol and hyperparameter space for our baselines and {\moeboth}. 
We set an additional constraint on {\moeboth} to ensure fair comparison under similar computation costs: 
we first follow \cite{linkx} to determine the number of layers $\ell$ and hidden dimension $d$ of the vanilla GNN baselines, and then set the same $\ell$ and $d$ for the corresponding {\moe} models. 
We use another MLP to implement the learnable $G$ function (\secref{sec: model conf}). 
To reduce the size of the hyperparameter space, we set the number of layers and hidden dimension of $G$'s MLP the same as those of the experts. 
See \secref{appendix:hyperparameters} for the hyperparameter space (\textit{e.g.,} learning rate, dropout) and the grid-search methodology.

\parag{Comparison with state-of-the-art. }
As shown in Table \ref{tab:main_res}, {\moeboth} consistently achieves significant accuracy improvement on all datasets.
It also performs the best compared to all the other baselines.
Moreover, we perform a comparison within two sub-groups, one with GCN as the backbone architecture including GCN, GraphMoE-GCN and {\moebothgnn{GCN}} and the other with GraphSAGE including GraphSAGE, GraphMoE-SAGE and \moebothgnn{SAGE}. 
In both sub-groups, {\moeboth} achieves significant accuracy improvement over the second best. 
The accuracy improvement is consistently observed across both homophilous and heterophilous graphs, showing that the decoupling of the self-features and neighbor structures, along with the denoising effect of the weak expert are generally beneficial.

\begin{wraptable}{r}{7.3cm}
\vspace{-12pt}
\caption{Comparison with H\textsubscript{2}GCN on heterophilous graphs.}
\vspace{-.3cm}
\resizebox{0.52\textwidth}{!}{

\begin{tabular}{cccc}
\toprule[1.1pt]
            & \penn       & \pokec        & \twitch \\
            \midrule
GCN    &     82.17 \small{$\textcolor{gray}{\pm 0.04}$} & 76.01 \small{$\textcolor{gray}{\pm 0.49}$} & 62.42 \small{$\textcolor{gray}{\pm 0.53}$}            \\
\multirow{2}{*}{\moebothgnn{GCN}}  &  83.19 \small{$\textcolor{gray}{\pm 0.43}$} & 77.28 \small{$\textcolor{gray}{\pm 0.08}$} & 63.74 \small{$\textcolor{gray}{\pm 0.23}$}  \\
                    &       \small{(\greentext{\textbf{+1.02}})}        & \small{(\greentext{\textbf{+0.29}})}        & \small{(\greentext{\textbf{+0.83}})}       \\
                            \midrule
H\textsubscript{2}GCN       & 82.71 \small{$\textcolor{gray}{\pm 0.67}$} & 80.89 \small{$\textcolor{gray}{\pm 0.16}$} & 65.70 \small{$\textcolor{gray}{\pm 0.20}$}  \\
\multirow{2}{*}{\moebothgnn{\htwogcn}}  & \textbf{83.39} \small{$\textcolor{gray}{\pm 0.43}$} & \textbf{83.02} \small{$\textcolor{gray}{\pm 0.30}$} & \textbf{66.03} \small{$\textcolor{gray}{\pm 0.16}$} \\
& \small{(\greentext{\textbf{+0.68}})}        & \small{(\greentext{\textbf{+2.13}})}        & \small{(\greentext{\textbf{+0.33}})}  \\
\bottomrule[1.1pt]
\end{tabular}
}
\label{tab:main_comp_h2gcn}
\vspace{-.2cm}
\end{wraptable}
Table~\ref{tab:main_comp_h2gcn} presents the experimental results for H\textsubscript{2}GCN and {\moe} on three heterophilous graphs. H\textsubscript{2}GCN consistently outperforms the GCN baseline, demonstrating its ability to effectively handle heterophilous neighborhoods. More importantly, when compared to H\textsubscript{2}GCN, {\moebothgnn{\htwogcn}} achieves a \textit{significant and consistent improvement} in accuracy across all three graphs. This shows that {\moe} can substantially enhance the performance of a state-of-the-art heterophilous GNN like H\textsubscript{2}GCN, with the help of a relatively simple expert such as a standard MLP.

\parag{MLP expert \emph{vs.} skip connections. }
We compare with the ``skip connection'' variants of the GNN baselines. Since the accuracy of GIN-skip is close to that of {\moebothgnn{GIN}} on {\arxiv}, {\pokec} and {\twitch}, we additionally run {\moebothgnn{GIN-skip}}.  We observe that {\moebothgnn{GIN}} \emph{consistently improves} the accuracy of the GIN baseline by a \emph{large margin}. 
Effects of skip connections on GCN, GraphSAGE, GIN and H\textsubscript{2}GCN clearly indicate that the MLP expert integrated by {\moe} enhances model capacity in a \emph{unique} and \emph{valuable} way. 
The MLP expert is not simply providing a shortcut to bypass certain neighborhood aggregation operations. 
Rather, it meaningfully interact with the GNN expert to let the full system personalize on each target node. 
Note that there is no need to run {\moebothgnn{GCN-skip}} since {\moebothgnn{GCN}} already outperforms GCN-skip significantly w.r.t. both accuracy and computation cost. The analysis on computation cost can be found in \secref{appendix: cost gnn sc}. 

\begin{wraptable}{r}{7.3cm}
\vspace{-12pt}
\caption{Comparison of test set accuracy.}
\vspace{-.3cm}
\resizebox{0.52\textwidth}{!}{
\begin{tabular}{lccc}
\toprule
& \flickr & \pokec & \twitch\\
\midrule
{\moegnn{GCN}} (joint) & 53.47\std{0.36} & 76.62\std{0.11} & 63.44\std{0.22}\\
{\moegnn{GCN}} & \textbf{54.62}\std{0.23} & 77.12\std{0.09} & \textbf{63.74}\std{0.23}\\
{\moeplusgnn{GCN}} & 53.94\std{0.37} & \textbf{77.28}\std{0.08} & 63.59\std{0.11}\\
\bottomrule
\end{tabular}
}
\label{tab:ablation}
\vspace{-.2cm}
\end{wraptable}
\parag{{\moe} \emph{vs.} {\moeplus}. }
Using the GCN architecture as an example, 
we compare three model variants: 
\begin{enumerate*}
\item[(1)] {\moe}, 
\item[(2)] {\moeplus} and 
\item[(3)] {\moe} (joint), a vanilla version of {\moe} that performs training by simultaneously updating the two experts from the gradients of the whole $\Lsmoe$ (Equation \ref{eq: train obj}). {\moe} (joint) does not follow the ``in-turn training'' strategy discussed in Algorithm \ref{algo: moe training}. 
\end{enumerate*}
From Table \ref{tab:ablation}, we observe that {\moe} (joint) achieves lower test accuracy on all three graphs. Intuitively, since the two experts may have difference convergence rates, updating one expert with the other one fixed may help stabilize training and improve the overall convergence quality. Moreover, {\moe} or {\moeplus} may outperform the other variant depending on the property of the graph, which is consistent with our trade-off analysis in \secref{sec: moe star}. {\moeplus} can theoretically achieve a lower loss, while {\moe} may be easier to optimize. 
In practice, both variants can be useful.

\parag{Training dynamics. }
The two behaviors described in \secref{sec: analysis behavior} may be observed on both {\moe} and {\moeplus}. 
For demonstration, we visualize each behavior using one model variant. The empirical findings on ``denoised fine-tuning'' are provided in \secref{appendix sec: denoised fine-tuning}.

\begin{wrapfigure}{r}{0.43\textwidth} 
 \centering
 \vspace{-.5cm}
 \includegraphics[width=0.43\textwidth]{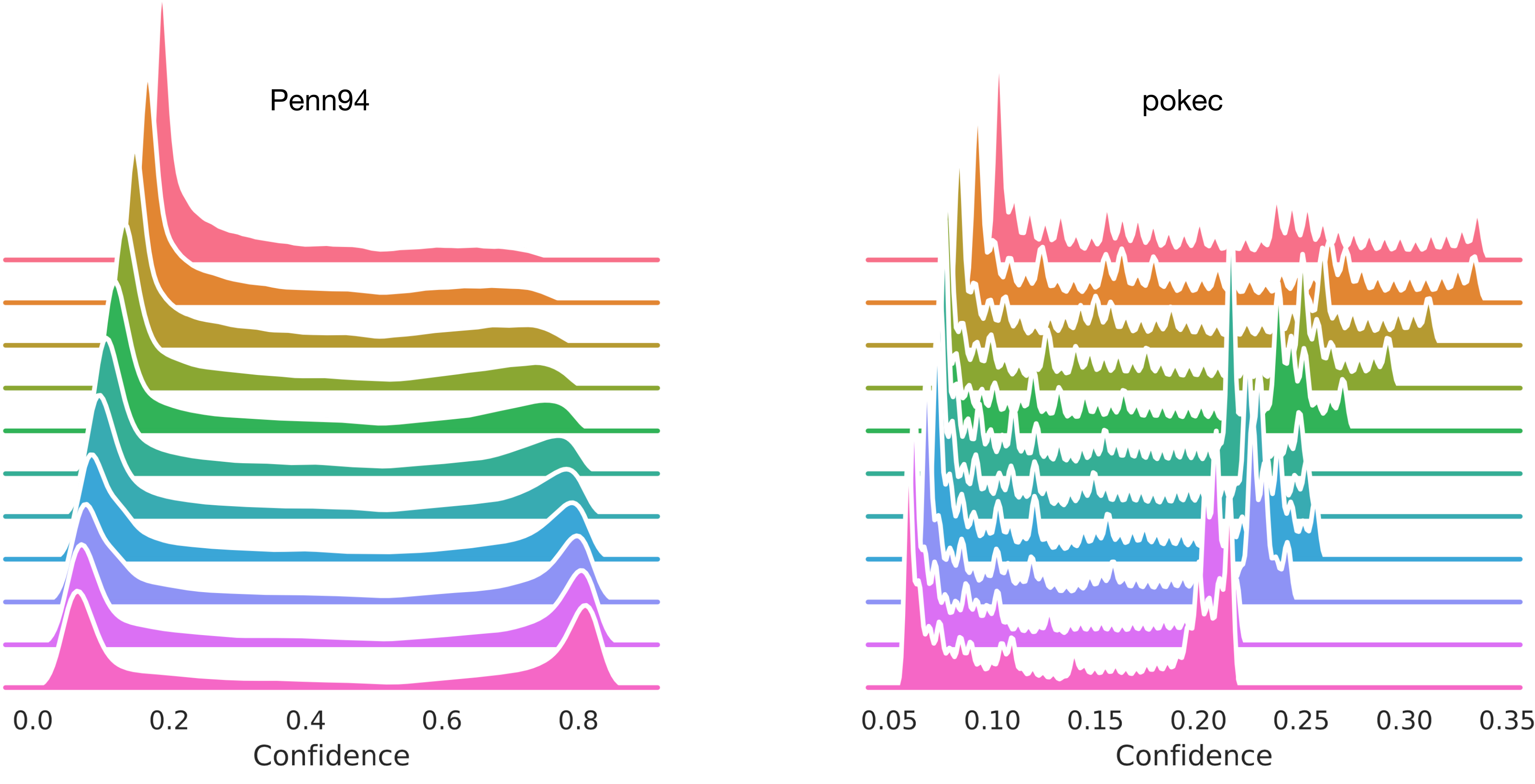}
 \vspace{-15pt}
  \caption{Evolution of the $C$ distribution.}\label{fig:split}
  \vspace{-10pt}
\end{wrapfigure}
\textsc{Specialization via data splitting. }\hspace{.3cm}
We track the evolution of the confidence score $C$ during {\moeplusgnn{GCN}} training. 
In Figure \ref{fig:split}, for each benchmark, we plot $C$'s distribution corresponding to different model snapshots. 
The distribution on the upper side corresponds to the earlier stage of training, while the distribution on the lower side reflects the weighting between the two experts after convergence. 
In each distribution, the height shows the percentage of nodes assigned with such $C$. 
On {\penn}, the GNN dominates initially. The MLP gradually learns to specialize on a significant portion of the data. Eventually, the whole graph is clearly split between the two experts, indicated by the two large peaks around 0 and 0.8 (corresponding to nodes assigned to GNN and MLP, respectively). 
For {\pokec}, the specialization is not so strong. 
When training progresses, the MLP becomes less confident when the GNN is optimized better -- nodes initially having larger $C$ gradually reduce $C$ to form a peak at around 0.2. 
For different graphs, the two experts will adapt their extent of collaboration by minimizing the confidence-weighted loss.

\section{Related Work}
\label{sec: related work}

Our work is closely related to Mixture-of-Experts, model ensemble methods on graphs, as well as the decoupling models for feature and structure information. Please refer to \secref{appendix sec: extended related work} for further discussion.

\parag{Mixture-of-Experts. }
MoE can effectively increase model capacity \citep{moe_survey, moe_survey2}. 
The key idea is to localize / specialize different expert models to different partitions of the data. 
Typically, experts have comparable strengths. 
Thus, the numerous gating functions in the literature have a symmetric form w.r.t. all experts without biasing towards a specific one (\textit{e.g.,} variants of softmax \citep{h_moe, sparse_moe, soft_moe}
In comparison, {\moe} deliberately breaks the balance among experts via a biased gating. 
Another concern in MoE is the increased computation cost. 
Even though efficient dense gating designs exist \citep{r6_1}, sparse gating is still the most common technique
to save computation by conditionally deactivating some experts. However, it is widely observed that such sparsity causes training instability \citep{sparse_gate_survey, sparse_moe, soft_moe}. 
Under our confidence-based gating, {\moe} is both easy to train (\secref{sec: analysis behavior}) and computationally efficient (\secref{sec: expressive power}). 
Finally, many recent designs incorporate MoE in each layer of a large model \citep{vision_moe, glam, gshard}, whereas {\moe} mixes only at the output layers. 
We fully decouple the experts with imbalanced strengths in consideration of their different training dynamics. 

\parag{MoE \& model ensemble on graphs. }
GraphDIVE \citep{graphdive} proposes mixture of multi-view experts to address the class-imbalance issue in graph classification. 
GraphMoE \citep{graphmoe} introduces mixing multiple GNN experts via the top-$K$ sparse gating \citep{sparse_moe}. 
\cite{graphmoe_snap} explore a similar idea of mixing multiple GNNs with existing gating functions. 
AdaGCN \citep{adagcn} proposes a model ensemble technique for GNNs, based on the classic AdaBoost algorithm \citep{adaboost}. 
In addition, AM-GCN \citep{amgcn} performs graph convolution on the decoupled topological and feature graphs to learn multi-channel information. 

\section{Conclusion}

In this study, we introduce a novel mixture-of-experts design to combine a weak MLP expert with a strong GNN expert to effectively decouple the self-feature and neighbor structure modalities on graphs. 
The ``weak-strong'' collaboration emerges under a gating mechanism based on the weak expert's prediction confidence. 
We theoretically and empirically demonstrate intriguing training dynamics that evolve based on the properties of the graph and the relative strengths of the experts. 
We show significant accuracy improvement over state-of-the-art methods across both homophilous and heterophilous graphs. 
Future work will focus on a comprehensive evaluation of the generalized {\moe} framework, which progressively mixes stronger experts, as outlined in \secref{sec: moe multi expert}. 
Additionally, exploring whether the ``weak-strong'' combination can serve as a general MoE design paradigm with benefits extending \textit{beyond graph learning} is an interesting direction for further research.

\subsubsection*{Acknowledgments}
We are grateful to Jingyang Lin, Dr. Wei Zhu, and Dr. Wei Xiong for their constructive suggestions. Luo was supported in part by NSF Award \#2238208.

\bibliography{citation}
\bibliographystyle{iclr2024_conference}

\appendix
% \appendix
\newpage   
\DoToC
\section{More Details on Experiments}

\subsection{Dataset description \& statistics}
\label{appendix: data}

Table \ref{tab:exp_data} provides a summary of the basic statistics for the six benchmark graphs. The {\flickr} dataset is originally introduced in \cite{graphsaint}, while {\products} and {\arxiv} are first proposed in \cite{ogb}. The {\penn}, {\pokec}, and {\twitch} datasets are originally proposed in \cite{linkx}. A brief overview of the dataset construction is as follows:

\begin{table}[h]
    \centering
    \caption{\textbf{Statistics of the datasets.}}
    \label{tab:exp_data}
\begin{tabular}{lrrrr}
\toprule
             & \multicolumn{1}{l}{\# nodes} & \multicolumn{1}{l}{\# edges}  & \# classes & \# node features \\
             \midrule
\flickr       & 89,250                       & 899,756                      & 7 & 500\\
\products & 2,449,029 & 61,859,140 & 47 & 100\\
\arxiv    & 169,343                      & 2,332,486                     & 40 & 128\\
\penn       & 41,554                       & 1,362,229           & 2 & 5       \\
\pokec        & 1,632,803                    & 30,622,564      & 2      & 65       \\
\twitch & 168,114                      & 6,797,557          & 2      & 7   \\
\bottomrule
\end{tabular}
\end{table}

{\flickr}, proposed by \citet{graphsaint}, is a graph where each node represents an image uploaded to Flickr. An undirected edge is drawn between two nodes if the corresponding images share common properties, such as the same geographic location, same gallery, or were commented on by the same user. The node features are 500-dimensional bag-of-words representations of the images.

{\arxiv}, a paper citation network curated by \citet{ogb}, consists of nodes representing arXiv papers and directed edges indicating citation relationships. The 128-dimensional node features are derived by averaging the embeddings of words in the paper's title and abstract, with embeddings generated by a word2vec model trained on the MAG corpus~\citep{mag}.

{\products}, also curated by \citet{ogb}, represents an Amazon product co-purchasing network. Each node corresponds to an Amazon product, and edges between nodes indicate that the two products are frequently purchased together. The node features are 100-dimensional bag-of-words vectors of the product descriptions.

{\penn}~\citep{linkx,penn94dataset} is a Facebook 100 network from 2005 representing a friendship network of university students. Each node represents a student, with node features including major, second major/minor, dorm/house, year, and high school.

{\pokec}~\citep{linkx,pokecdataset} is the friendship graph of a Slovak online social network. Nodes represent users, and edges represent directed friendship relations. Node features include profile information such as geographical region, registration time, and age.

{\twitch}~\citep{linkx,twitchdataset} is a connected undirected graph representing relationships between accounts on the streaming platform Twitch. Each node represents a Twitch user, and an edge indicates mutual followers. Node features include the number of views, creation and update dates, language, lifetime, and whether the account is dead.

\subsection{Hyperparameters}\label{appendix:hyperparameters}
We perform grid search within the entire hyperparameter space defined as follows. Note that for GPR-GNN~\citep{gprgnn} and AdaGCN~\citep{adagcn}, we use the same grid search as reported in their original papers.
\begin{itemize}
    \item MLP, GCN, GCN-skip, GraphSAGE, GAT, GIN, GIN-skip: learning rate $lr \in \{.1, .01, .001\}$ for all graphs. For all the homophilous graphs, the dropout ratio is searched in $\{.1, .2, .3, .4, .5\}$, hidden dimension is set as 256 and the number of layers is 3 according to \citet{ogb}. For the heterophilous graphs, we follow the same hyperparameter space as \citet{linkx}. 
    \item GPR-GNN: $lr \in \{.01, .05, .002\}$,  $\alpha \in \{.1, .2, .5, .9\}$, hidden dimension $\in \{16, 32, 64, 128, 256\}$.
    \item AdaGCN:  $lr \in \{.01, .005, .001\}$, number of layers $\in \{5, 10\}$, dropout ratio $\in \{.0, .5\}$. 
    \item GraphMoE: $lr \in \{.1, .01, .001\}$,  dropout ratio $\in \{.1, .2, .3, .4, .5\}$, number of experts $\in \{2, 8\}$.
    \item H\textsubscript{2}GCN: $lr \in \{.01, .001\}$,  dropout ratio $\in \{.1, .3, .5\}$.
    \item {\moe} and {\moeplus}:  The learning rate and dropout ratio of the MLP expert and the GNN expert are kept the same. The girds for them are $lr \in \{.1, .01, .001\}$, dropout ratio $\in \{.1, .2, .3, .4, .5\}$. As for the gating module, we also conduct a grid search to find the optimal combination of the learning rate and dropout ratio. The grid for them are $lr \in \{.1, .01, .001\}$, dropout ratio $\in \{.1, .2, .3, .4, .5\}$.
\end{itemize}

\subsection{Implementation details}\label{appendix:implementation_details}

The implementations for {\moe} and the GNN baselines are built using PyTorch and the PyTorch Geometric library. The code for GraphMoE,\footnote{\url{https://github.com/VITA-Group/Graph-Mixture-of-Experts}} AdaGCN,\footnote{\url{https://github.com/datake/AdaGCN}} and H\textsubscript{2}GCN\footnote{\url{https://github.com/GemsLab/H2GCN}} are sourced from the official repositories associated with their respective original papers. The GPR-GNN implementation is adapted from the repository provided by \cite{linkx}. All models, including our models and the baselines, are trained on NVIDIA A100 GPUs with 80GB of memory. Training is conducted using full-batch gradient descent with the Adam optimizer \citep{adam}, without resorting to neighborhood or subgraph sampling techniques.

\subsection{Pretraining}\label{appendix:pretraining}
In our experiments, we find that pretraining the individual experts before the training process of {\moe} (as described in Algorithm \ref{algo: moe training}) can also improve the optimization. There are four pretraining scenarios for {\moe}: pretraining only the weak expert, pretraining only the strong expert, pretraining both experts, and not pretraining either expert.

\subsection{Criteria for fair comparison}
\label{appendix: fair comparison}

For a fair experimental comparison between {\moe} and its corresponding vanilla GNN baseline (\textit{e.g.,} \moegnn{GCN} and GCN), two potential criteria can be considered:

\begin{itemize}
\item \textbf{Model size}: {\moe} and its corresponding vanilla GNN baseline should have the same number of parameters. 
\item \textbf{Computation cost}: {\moe} and its corresponding vanilla GNN baseline should have similar computation cost. 
\end{itemize}

\textbf{The above two criteria may not be simultaneously satisfied}. 
Consider a scenario where both the MLP and GNN experts in {\moe} have the same number of layers and hidden dimensions $f$. If the vanilla GNN baseline is configured to have an architecture identical to {\moe}'s GNN expert, then {\moe} will have more model parameters due to the additional MLP expert. However, {\moe} will have a computation cost comparable to that of the vanilla GNN baseline (detailed derivation provided in \secref{appendix: cost gnn sc}). In this case, \emph{the ``computation cost'' criterion is met, but the ``model size'' criterion is not}.

Alternatively, we can enhance the vanilla GNN baseline by adding a skip connection to each layer, where the skip connection performs a linear transformation with an $f\times f$ weight matrix. In essence, this approach constructs the baseline by integrating each layer of the MLP expert with each layer of the GNN. As a result, the modified GNN and {\moe} have the same number of parameters. However, this configuration violates the ``computation cost'' criterion. As detailed in \secref{appendix: cost gnn sc}, the computation cost of a GNN with a skip connection is significantly higher than that of the original GNN without a skip connection, and therefore also higher than that of {\moe}. Thus, \emph{the ``model size'' criterion is met, but the ``computation cost'' criterion is not}.

Given that no setting can be perfectly fair, we need to prioritize one criterion over the other. In our experiments, we choose to emphasize the ``\textbf{computation cost}'' criterion, considering the significant challenges GNNs face in terms of high computation costs, rather than large model sizes. As highlighted in \secref{appendix: comp gnn}, the computation cost of GNNs can be particularly high due to the well-known ``neighborhood explosion'' phenomenon \citep{fastgcn, graphsage, pinsage, graphsaint}. This leads to GNN models typically being shallow (with 2 to 3 layers), resulting in a \textbf{small model size}. For instance, the notable work PinSAGE \citep{pinsage} employs a 2-layer GraphSAGE model with a 2,048 hidden dimension for Pinterest recommendations, resulting in a total model size of only 84MB under 32-bit floating point representation. Despite its compact size, the computational cost of running such a GNN is substantial due to neighborhood explosion, requiring 3 days on 16 GPUs to train PinSAGE. This unique challenge in graph learning highlights that \textbf{a GNN can incur a high computation cost even with a small model size}.

Therefore, when comparing {\moe} with its corresponding vanilla GNN baseline, we ensure a similar computation cost by setting the architecture of {\moe}'s GNN expert to be identical to the vanilla GNN baseline. It is important to note that we still conduct experiments on the GNN baselines augmented with skip connections (\secref{appendix: exp gin}), but these experiments are primarily for architectural exploration.

\subsection{Further discussion on the comparison with the state-of-the-art on heterophilous graphs}
\label{appendix: exp h2gcn}

Heterophilous graphs are among the types of graphs that can benefit from our techniques, since it is commonly believed that heterophilous neighborhoods are noisy for GNN's aggregation.\footnote{However, note that heterophilous graphs are \emph{\textbf{not}} the only type of graphs we target.} 
In this section, we present additional experimental results showing that \textbf{even on top of a state-of-the-art heterophilous GNN, {\moe} can still significantly improve its accuracy with the help from a weak expert} which is just a standard MLP. 

We compare with the state-of-the-art heterophilous GNN baseline, H\textsubscript{2}GCN~\citep{homo-tricks}. 
H\textsubscript{2}GCN integrates model components that are specifically optimized for aggregating heterophilous neighbors. 
It separates ego and neighbor embeddings, recognizing that a node's self-features may differ significantly from its neighbors' aggregated embeddings in heterophilous settings. Additionally, it leverages higher-order neighborhood aggregation to glean information from wider, potentially homophily-dominant contexts, providing more relevant signals. Finally, H\textsubscript{2}GCN merges intermediate representations to capture both local and global graph information.

Table~\ref{tab:main_comp_h2gcn} presents the experimental results for H\textsubscript{2}GCN and {\moe} on three heterophilous graphs. 
The experimental setup and hyperparameter search methodology are consistent with those described in \secref{sec: exp} and \secref{appendix:hyperparameters}, respectively. Specifically, for both H\textsubscript{2}GCN and {\moebothgnn{\htwogcn}}, we follow the hyperparameter space defined by \citet{linkx}. Similar to Table~\ref{tab:main_res}, we add an additional constraint on the model architecture of {\moe}: after identifying the optimal H\textsubscript{2}GCN, we replicate its architecture to construct the GNN expert of {\moe}. The MLP expert of {\moe} is then configured with the same number of layers and hidden dimensions as the H\textsubscript{2}GCN expert. 
Table~\ref{tab:main_comp_h2gcn} clearly shows the following:
\begin{itemize}
\item \textbf{Effectiveness of the H\textsubscript{2}GCN baseline: } H\textsubscript{2}GCN achieves significantly higher accuracy than the GCN, GraphSAGE and GIN baselines. This shows that H\textsubscript{2}GCN's architecture can handle heterophilous neighborhoods very well. 
\item \textbf{Performance boost by {\moebothgnn{\htwogcn}}: } Compared with H\textsubscript{2}GCN, {\moebothgnn{\htwogcn}} further \emph{improves the accuracy significantly and consistently} on the three graphs. The average accuracy improvement is $1.05$. 
\end{itemize}

\subsection{Further discussion on the comparisons with GIN \& skip connection-based GNNs}
\label{appendix: exp gin}

In addition to the three {\moe} variants in Table~\ref{tab:main_res} and Table~\ref{tab:main_comp_h2gcn}, we further construct {\moe} on GIN and its variants, and show that {\moe} significantly boosts the baseline accuracy across all five datasets (due to resource constraints, we exclude the largest {\products} graph when running experiments in this section). 
We further add skip connection to the GIN and GCN baselines, and show that
\begin{itemize}
    \item Skip connection can help boost the baseline performance in some cases, but can also result in accuracy degradation. 
    \item When skip connection improves the baseline, building {\moe} with the ``skip-connection GNN'' expert can further boost the accuracy significantly. 
\end{itemize}

Among the four GNN backbone architectures on which we have built {\moe}, \textbf{GraphSAGE and H\textsubscript{2}GCN already incorporate skip connections in their native architectural definitions}. Therefore, we only need to implement the skip-connection variants for GCN and GIN. The implementation details are as folllows:

Consider a node $v$. Let its output embedding of layer $i$ be $\bm{x}_i^v$. Let the set of neighbors of $v$ be $\N^v$ (excluding $v$ itself). 

Without the skip connection, a GCN layer $i$  performs the following:

\begin{align}
\bm{x}_i^v = \sigma\paren{\bm{W}_i\cdot \sum_{u\in\N^v\cup\set{v}}\alpha_{uv} \bm{x}_{i-1}^u}
\end{align}

where $\bm{W}_i$ is the weight matrix, $\alpha_{uv}$ is a scalar weight determined by the graph structure and $\sigma$ is the non-linear activation (\textit{e.g.,} ReLU). 

With the skip connection, a GCN-skip layer $i$ performs the following: 

\begin{align}
\bm{x}_i^v = \sigma\paren{{\bm{W}_i'\cdot \bm{x}_{i-1}^v + } \bm{W}_i\cdot \sum_{u\in\N^v\cup\set{v}}\alpha_{uv} \bm{x}_{i-1}^u}
\end{align}

Without the skip connection, a GIN layer $i$ performs the following:

\begin{align}
\bm{x}_i^v = \sigma\paren{h_{\bm{\Theta}}\paren{\paren{1+\epsilon}\cdot \bm{x}_{i-1}^v + \sum_{u\in\N^v}\bm{x}_{i-1}^u}}
\end{align}

where $\epsilon$ is a learnable scalar, and $h_{\bm{\Theta}}\paren{\cdot}$ is a 2-layer MLP with ReLU activation in the hidden layer (but not in the last layer), with $\bm{\Theta}$ denoting the learnable parameters of such an MLP. 

With the skip connection, a GIN-skip layer $i$ performs the following:

\begin{align}
\bm{x}_i^v = \sigma\paren{{\bm{W}_i'\cdot \bm{x}_{i-1}^v + }h_{\bm{\Theta}}\paren{\paren{1+\epsilon}\cdot \bm{x}_{i-1}^v + \sum_{u\in\N^v}\bm{x}_{i-1}^u}}
\end{align}

In Table \ref{tab:main_res}, we compare GCN with GCN-skip, and GIN with GIN-skip. We observe that GCN-skip consistently degrades the performance of GCN, while GIN-skip improves the accuracy of GIN on {\arxiv}, {\pokec} and {\twitch} significantly. 
There is no need to run {\moebothgnn{GCN-skip}} since {\moebothgnn{GCN}} already outperforms GCN-skip by a large margin with respect to both accuracy and computation cost (see analysis of computation cost in \secref{appendix: cost gnn sc}). 
On the other hand, since the accuracy of GIN-skip is close to that of {\moebothgnn{GIN}} on {\arxiv}, {\pokec} and {\twitch}, we run {\moebothgnn{GIN-skip}} for additional comparisons. 
We summarize our findings as follows:

\begin{itemize}
\item Adding the skip connection to a GNN layer can either enhance or degrade accuracy. 
\item \textbf{{\moebothgnn{GIN}} consistently achieves a substantial improvement in accuracy over the GIN baseline}. Notably, on {\pokec}, the GIN baseline exhibits poor convergence. Adding an MLP expert of {\moe} dramatically improves the convergence quality. 
\item \textbf{{\moebothgnn{GIN-skip}} further boosts the accuracy of GIN-skip significantly}. 
\item The results of skip connections on GCN (Table \ref{tab:main_res}), GraphSAGE (Table \ref{tab:main_res}), GIN (Table \ref{tab:main_res}), and {\htwogcn} (Table \ref{tab:main_comp_h2gcn}) demonstrate that:
\begin{enumerate*}
\item[(1)] {\moe} is a general framework that is \textbf{applicable to various GNN architecture variants}, and
\item[(2)] the contributions of the MLP expert and skip connection are different, with the weak MLP expert providing \textbf{unique value in enhancing the quality of the GNN model}.
\end{enumerate*}
\end{itemize}

\paragraph{Remark. }
Based on the reasoning in \secref{appendix: fair comparison}, the following pairs in Table \ref{tab:main_res} satisfy the fair comparison criteria: ``GCN \emph{vs.} \moebothgnn{GCN}'', ``GIN \emph{vs.} {\moebothgnn{GIN}}'' and ``GIN-skip \emph{vs.} {\moebothgnn{GIN-skip}}''.

\section{Extended Related Work}
\label{appendix sec: extended related work}

In addition to the related work (\secref{sec: related work}) in Mixture-of-Experts and ensemble methods on graphs, our work is also closely related to the \textbf{decoupling of feature and structure information and heterophilous GNNs}.
GraphSAGE \citep{graphsage}, GCNII \citep{gcnii} and H\textsubscript{2}GCN \citep{homo-tricks} use various forms of residual connections to facilitate self-feature propagation, with H\textsubscript{2}GCN especially excelling on heterophilous graphs. 
GPR-GNN \citep{gprgnn}, MixHop \citep{mixhop} and SIGN \citep{sign} blend information from various hops away via different weighting strategies. 
NDLS \citep{ndls} performs adaptive sampling to customize per-node receptive fields. 
GLNN \citep{glnn} distills structural information into an MLP. 
These models decouple the feature and structure information within one model, while {\moe} does so explicitly via separate experts.

\section{Additional Experiments}
\label{appendix: additional exp}

\subsection{Alternative choice for weak \& strong experts}

In \secref{sec: analysis balance}, \secref{sec: analysis behavior}, and \secref{sec: expressive power}, we have discussed the advantages of designing {\moe} with an MLP as the weak expert and a GNN as the strong expert. Here, we explore an alternative approach and its trade-offs. A question one would naturally ask is: Should we keep the architecture of the two experts the same, and construct the weak \& strong variants only by varying the architectural parameters. 
For instance, the two experts could be a shallow and a deep GNN, respectively. 

We first affirm that the current setting of {\moe} is already following the above proposal, and then conduct additional experiments by replacing an MLP with a shallow GCN.

\subsubsection{MLP as a shallow GNN}

We first show that MLP and GNN are \textbf{not} fundamentally two different architectures. 
In fact, an MLP is \textbf{exactly} a 0-hop GNN when the GNN's layer architecture follows popular designs such as GraphSAGE \citep{graphsage}, GCN \citep{gcn} or GIN \citep{gin}. 

\paragraph{Notations. } Let $\bm{x}_i^v$ denote the embedding vector of node $v$ output by layer $i$, $\bm{W}_i$ the weight parameter matrix of layer $i$, and $\N^v$ the set of neighbor nodes of $v$. 
Let $\sigma\paren{\cdot}$ represent the non-linear activation function. 

\paragraph{GraphSAGE. } Each layer $i$ performs $\bm{x}_i^v = \sigma\paren{\bm{W}_i^1\cdot \bm{x}_{i-1}^v + \bm{W}_i^2 \cdot \sum_{u\in\N^v}\bm{x}_{i-1}^u}$. 
In a 0-hop version, where the neighbor set $\N^v = \emptyset$, the operation simplifies to $\bm{x}_i^v =\sigma\paren{\bm{W}_i^1\cdot \bm{x}_{i-1}^v}$, which is identical to the operation of an MLP layer. 

\paragraph{GCN. } Each layer $i$ performs $\bm{x}_i^v = \sigma\paren{\bm{W}_i\cdot \paren{\alpha_{vv}\bm{x}_{i-1}^v + \sum_{u\in\N^v}\alpha_{uv}\bm{x}_{i-1}^u}}$, where $\alpha_{uu}$ and $\alpha_{uv}$ are constants pre-defined by the graph structure. 
If we perform 0-hop aggregation and ignore the graph structure, then $\alpha_{uu}$ and $\alpha_{uv}$ become 1 and $\N^v = \emptyset$. The 0-hop GCN performs $\bm{x}_i^v = \sigma\paren{\bm{W}_i\cdot \bm{x}_{i-1}^v}$, which is again identical to the operation of an MLP layer. 

A similar process can be applied to other architectures such as GIN \citep{gin}, {\htwogcn} \citep{homo-tricks}, and GAT \citep{gat} to reduce them to a standard MLP.

Therefore, following our design in \secref{sec: exp}, our weak MLP expert is \textbf{already consistent with} our strong GNN expert from the perspective of layer architecture.

\subsubsection{Results on mixture of shallow-deep GCNs}

We further construct a variant of {\moebothgnn{GCN}} by replacing the weak MLP expert with a shallow GCN. 
In Table~\ref{tab:comp_weak_strong}, we refer to this variant as ``Weak-Strong GCN''. The hidden dimension of the two experts in Weak-Strong GCN are the same as that of \moebothgnn{GCN}. The ``weak'' expert of the ``Weak-Strong GCN'' is a 2-layer GCN while the ``strong'' expert is a 3-layer GCN for {\flickr}, {\arxiv}, and {\products}. For {\penn}, {\pokec}, and {\twitch}, the ``weak'' expert is a 1-layer GCN, and the ``strong'' expert is a 2-layer GCN.  
The detailed experimental settings and hyperparameter search methodology are the same as Table~\ref{tab:main_res}.

We observe the following from Table~\ref{tab:comp_weak_strong}:
\begin{itemize}
    \item Adding a weak, shallow GCN to the strong GCN is overall still beneficial.
    \item The accuracy gains fluctuate across datasets. For example, ``Weak-Strong GCN'' achieves significant accuracy improvement on {\twitch}, but has accuracy degradation on {\flickr}.
    \item The overall accuracy gain from ``Weak-Strong GCN'' is lower than that from the original {\moebothgnn{GCN}}. 
\end{itemize}

\begin{table}[h]
    \centering
    \caption{Test set accuracy comparison among the baseline GCN and two variants of {\moe}. ``Weak-Strong GCN'' consists of a 2-layer GCN as the weak expert and a 3-layer GCN as the strong expert. 
    For each graph, we highlight the \textbf{best} accuracy. The experimental setting and hyperparameter search methodology are the same as Table \ref{tab:main_res}. }
    \label{tab:comp_weak_strong}
    \adjustbox{max width=\textwidth}{
\begin{tabular}{lcccccc}
\toprule[1.1pt]
                            & \flickr       & \products & \arxiv    & \penn       & \pokec        & \twitch \\
                            \midrule
MLP                         & 46.93 \small{$\textcolor{gray}{\pm 0.00}$} & 61.06$^{\dagger}$ \small{$\textcolor{gray}{\pm 0.08}$} & 55.50$^{\dagger}$ \small{$\textcolor{gray}{\pm 0.23}$} & 73.61$^{\ddagger}$ \small{$\textcolor{gray}{\pm 0.40}$} & 62.37$^{\ddagger}$ \small{$\textcolor{gray}{\pm 0.02}$} & 60.92$^{\ddagger}$ \small{$\textcolor{gray}{\pm 0.07}$} \\
Weak-Strong MLP                         & 46.87 \small{$\textcolor{gray}{\pm 0.16}$} &  61.24 \small{$\textcolor{gray}{\pm 0.10}$} & 55.98 \small{$\textcolor{gray}{\pm 0.10}$} & 73.78 \small{$\textcolor{gray}{\pm 0.32}$} & 62.44 \small{$\textcolor{gray}{\pm 0.05}$} & 60.68 \small{$\textcolor{gray}{\pm 0.12}$} \\
GCN                         & 53.86 \small{$\textcolor{gray}{\pm 0.37}$} & 75.64$^{\dagger}$ \small{$\textcolor{gray}{\pm 0.21}$} & 71.74$^{\dagger}$ \small{$\textcolor{gray}{\pm 0.29}$} & 82.17 \small{$\textcolor{gray}{\pm 0.04}$} & 76.01 \small{$\textcolor{gray}{\pm 0.49}$} & 62.42 \small{$\textcolor{gray}{\pm 0.53}$} \\
Weak-Strong GCN                        & 54.19 \small{$\textcolor{gray}{\pm 0.27}$} & 74.47 \small{$\textcolor{gray}{\pm 0.24}$} & 71.92 \small{$\textcolor{gray}{\pm 0.19}$} & 82.38 \small{$\textcolor{gray}{\pm 0.32}$} & 76.51 \small{$\textcolor{gray}{\pm 0.08}$} & 63.36 \small{$\textcolor{gray}{\pm 0.25}$}\\

\moebothgnn{GCN}  & \textbf{54.62} \small{$\textcolor{gray}{\pm 0.23}$} & \textbf{76.49} \small{$\textcolor{gray}{\pm 0.22}$} & \textbf{72.52} \small{$\textcolor{gray}{\pm 0.07}$} & \textbf{83.19} \small{$\textcolor{gray}{\pm 0.43}$} & \textbf{77.28} \small{$\textcolor{gray}{\pm 0.08}$} & \textbf{63.74} \small{$\textcolor{gray}{\pm 0.23}$} \\
                            \bottomrule[1.1pt]
\end{tabular}}
\end{table}

We perform the following trade-off analysis on the ``Weak-Strong GCN'' variant. 
First of all, according to \secref{sec: model}, our {\moe} design is not restricted to a specific expert architecture. 
Therefore, theoretically ``Weak-Strong GCN'' still improves the model capacity of the baseline GCN. 
While upgrading the weak expert from an MLP to a shallow GCN has the potential to further increase the model capacity (\textit{e.g.,} ``Weak-Strong GCN'' on {\twitch}), {\moebothgnn{GCN}} may still be more favorable than ``Weak-Strong GCN''. 
We reveal the reasons by analyzing the \emph{specialization} challenge in ``Weak-Strong GCN'' (see \secref{sec: analysis behavior}).

Recall that in the original {\moe} design, when the two experts specialize, the MLP expert would specialize to nodes with rich self-features but high level of structural noises. 
The GNN expert would specialize to nodes with a large amount of useful structure information. 
Following the above design consideration, in ``Weak-Strong GCN'', the 2-layer GCN expert will specialize to nodes with rich information in the 2-hop neighborhood, but noisy connections in the 3-hop neighborhood. 
However, in realistic graphs, a shallow neighborhood (e.g., 2-hop) already contains most of the useful structural information \citep{shaDow} (consequently, the accuracy boost by upgrading a 2-layer GNN to a 3-layer one is much smaller than the boost by upgrading an MLP to a GNN). 
As a result, the 3-hop noises in a 3-layer GNN can be much less harmful than the 1-hop or 2-hop noises. 
Thus, it would be challenging for the 2-layer GCN to find nodes that it can specialize on. 
The intuitive explanation is that, the functionality of the 2-layer GCN expert overlaps significantly with the 3-layer GCN expert, and thus the benefits reduces when mixing two experts that are similar.

\begin{wrapfigure}{r}{0.43\textwidth} 
 \centering
 \vspace{-.3cm}
 \includegraphics[width=0.43\textwidth]{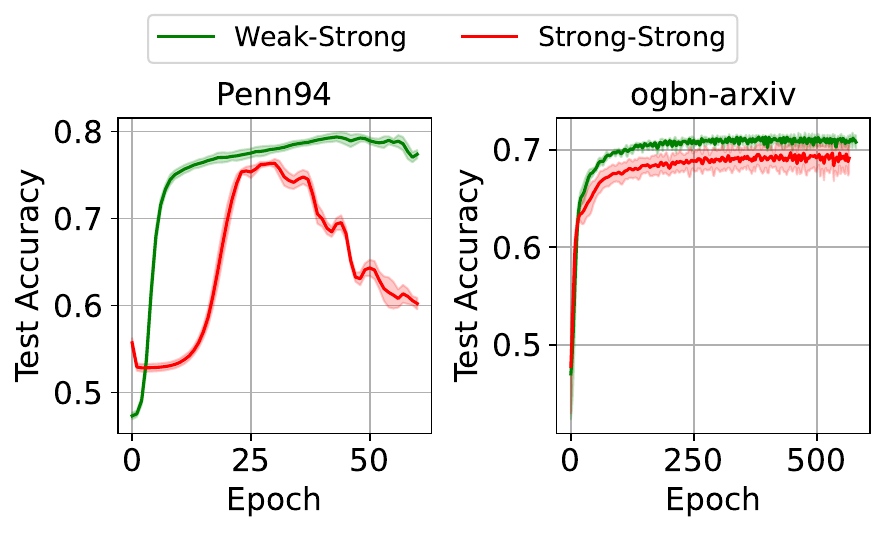}
  \caption{Test accuracy comparison between ``weak-strong'' and ``strong-strong''.}\label{fig:weak_strong}
  \vspace{-.3cm}
\end{wrapfigure}
\textsc{Weak-strong \emph{vs.} strong-strong. }\hspace{.3cm}
We implement a ``strong-strong'' variant of {\moeplus} consisting of two GNN experts with identical network architecture. 
Figure \ref{fig:weak_strong} shows the convergence curves of the test set accuracy during training, with GCN as the strong experts' architecture. 
On both {\penn} and {\arxiv}, we observe that the convergence quality of the ``strong-strong'' variant is much worse. 
Specifically, on {\penn}, the ``strong-strong'' model does not identify a suitable collaboration mode between the two GNNs, causing the collapse of the accuracy curve. 
On {\arxiv}, the collapse of the ``strong-strong'' model does not happen. Yet it is still clear that 
\begin{enumerate*}
\item[(1)] training a ``weak-strong'' model is more stable, indicated by the much smaller variance, and
\item[(2)] the accuracy of the ``strong-strong'' variant is significantly lower. 
\end{enumerate*}
Lastly, note that the $x$-axis denotes number of epochs. 
The per-epoch execution time for ``strong-strong'' is also significantly longer, indicating an even longer overall convergence time than the ``weak-strong'' model.

\subsection{Denoised fine-tuning}\label{appendix sec: denoised fine-tuning}

\begin{wrapfigure}{r}{0.33\textwidth} 
 \centering
 \vspace{-20pt}
 \includegraphics[width=0.34\textwidth]{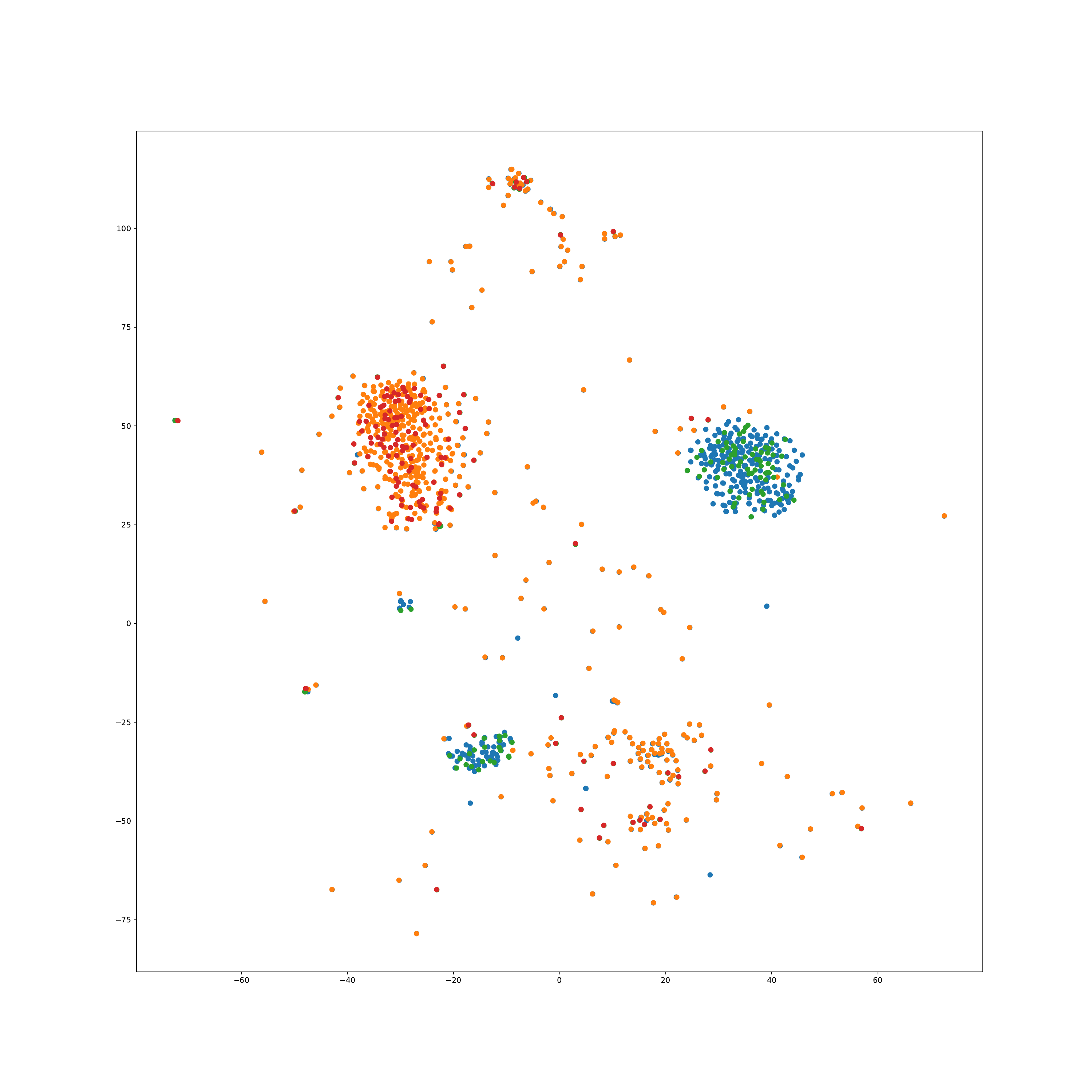}
 \vspace{-25pt}
  \caption{t-SNE visualization on GNN embeddings for {\flickr}.}\label{fig:denoise}
  \vspace{-10pt}
\end{wrapfigure}
In Figure~\ref{fig:denoise},
we examine two distinct node groups and visualize their positions in a shared latent space using t-SNE~\citep{van2008visualizing} applied to the second-to-last layer of the GNN expert. The first node group comprises nodes that elicit high confidence from the MLP expert (nodes with top-25\% $C$). 
The second group includes the remaining nodes. 
Green and blue dots represent the initial positions of the group 1 and 2 nodes at the end of training round $r$. 
Red and orange dots represent the final positions of the group 1 and 2 nodes at the end of training round $r+10$ (\textit{e.g.,} a node starting from a green dot will end at a red dot). 
Notably, we observe that a significant portion of nodes remains relatively stable throughout the training process, and thus their start and end positions almost completely overlap. 
We refrain from visualizing them, as their GNN's predictions remain unaltered. Instead, we only visualize nodes whose movement (Euclidean distance) exceeds the 75$^{th}$ percentile. 
In Figure \ref{fig:denoise}, the blue and green nodes are clustered together, indicating that the GNN cannot differentiate them. After the MLP filters out the green nodes of high $C$, the GNN is then able to better optimize the blue nodes even if doing so will increase the GNN loss on the green ones (since the loss on green nodes is down-weighted). Eventually, the GNN pushes the blue nodes to a better position belonging to their true class (the shift of the green nodes is a side effect, as the GNN cannot differentiate the green from the blue ones).

\clearpage

\section{Proof and Derivation}
\label{appendix: proof}

\subsection{Proofs related to model optimization}
\label{appendix: proof optm}

\begin{definition}{(Convex function \citep{cvx_book})}
\label{dfn: cvx}
Let $\mathcal{D}$ be a convex subset of $\mathbb{R}^n$. A function $f:\mathcal{D}\mapsto \mathbb{R}$ is convex if $f\paren{\lambda\cdot x +\paren{1-\lambda}\cdot y} \leq \lambda \cdot f\paren{x} +\paren{1-\lambda}\cdot f\paren{y}$, $\forall x,y\in\mathcal{D}$ and $\forall \lambda \in [0,1]$. 
\end{definition}

\begin{definition}
{(Quasiconvex function \citep{quasiconvex_defn})}
\label{dfn: quasicvx}
Consider a real-valued function $f$ whose domain $\mathcal{D}$ is convex. 
Function $f$ is quasiconvex if for 
any $\gamma \in \R$, its $\gamma$-sublevel set $\set{x \in \mathcal{D} \given f(x) \leq \gamma}$ is convex. 
Equivalently, $f$ is quasiconvex if, 
for all $x,y\in\mathcal{D}$ and $\lambda \in [0, 1]$, we have $f\paren{\lambda\cdot x +\paren{1-\lambda}\cdot y} \leq \max\left\{f\paren{x},f\paren{y}\right\}$. 
\end{definition}

\begin{definition}
{(Strictly quasiconvex function)}
\label{dfn: strictly quasicvx}
A real-valued function $f$ defined on a convex domain $\mathcal{D}$ is strictly quasiconvex if, for all $x,y\in\mathcal{D}$, $x\neq y$ and $\lambda \in (0, 1)$, we have $f\paren{\lambda\cdot x +\paren{1-\lambda}\cdot y} < \max\left\{f\paren{x},f\paren{y}\right\}$. 
\end{definition}

\subsubsection{Proof of Proposition \ref{prop: C cvx}}\label{appendix: proof_of_proposition c cvx}

\begin{proposition}{(Originally Proposition \ref{prop: C cvx})}
$C=G\circ D$ is quasiconvex. % if $D$ is quasiconvex (e.g., variance, negative entropy).
\end{proposition}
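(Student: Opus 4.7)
The plan is to use the standard fact that the composition of a non-decreasing scalar function with a quasiconvex function is quasiconvex, appealing directly to the inequality characterization of quasiconvexity (Definition \ref{dfn: quasicvx}).

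First I would fix arbitrary $\bm{p}, \bm{q} \in \Ss[n-1]$ and $\lambda \in [0,1]$, noting that $\Ss[n-1]$ is convex so $\lambda \bm{p} + (1-\lambda)\bm{q} \in \Ss[n-1]$ and $D$ is defined there. Since $D$ is quasiconvex by Definition \ref{dfn: d g}, I have
\begin{align*}
D\bigl(\lambda \bm{p} + (1-\lambda)\bm{q}\bigr) \;\leq\; \max\{D(\bm{p}),\, D(\bm{q})\}.
\end{align*}
Next, since $G$ is monotonically non-decreasing by Definition \ref{dfn: d g}, applying $G$ to both sides preserves the inequality:
\begin{align*}
G\bigl(D(\lambda \bm{p} + (1-\lambda)\bm{q})\bigr) \;\leq\; G\bigl(\max\{D(\bm{p}), D(\bm{q})\}\bigr).
\end{align*}
The final step is to observe that monotonicity of $G$ also gives $G(\max\{a,b\}) = \max\{G(a), G(b)\}$ for any real $a,b$, so the right-hand side equals $\max\{C(\bm{p}), C(\bm{q})\}$. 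Chaining the two inequalities yields $C(\lambda \bm{p} + (1-\lambda)\bm{q}) \leq \max\{C(\bm{p}), C(\bm{q})\}$, which is exactly the defining inequality of quasiconvexity for $C$ on the convex domain $\Ss[n-1]$.

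There is essentially no obstacle here; the result is a direct consequence of the composition rule. The only thing worth being careful about is not overclaiming convexity of $C$: even if $D$ were convex, composing with a merely monotone (not necessarily convex) $G$ only guarantees quasiconvexity, which is precisely what the proposition states. If space permits, I would also remark that both canonical choices of $D$, namely variance $D(\bm{p}) = \sum_i (p_i - 1/n)^2$ and negative entropy $D(\bm{p}) = \sum_i p_i \log p_i + \log n$, are in fact convex (hence quasiconvex) on $\Ss[n-1]$ and satisfy $D(\tfrac{1}{n}\bm{1}) = 0$ with $D(\bm{p}) > 0$ otherwise, thereby confirming that the function class in Definition \ref{dfn: d g} is nonempty.
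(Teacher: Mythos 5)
Your proof is correct and follows essentially the same route as the paper: both reduce the claim to the composition rule that a non-decreasing scalar function applied to a quasiconvex function is quasiconvex, the only difference being that the paper cites this rule while you prove it inline (correctly, via $G(\max\{a,b\})=\max\{G(a),G(b)\}$ for non-decreasing $G$). Your closing remark on variance and negative entropy mirrors the paper's own supplementary verification that the canonical dispersion functions satisfy Definition \ref{dfn: d g}.
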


\begin{proof}{}
The proof directly follows \cite{quasiconvex_prop}, which states that if $g: \mathcal{D}\mapsto \R$ is a quasiconvex function and $h$ is a non-decreasing real-valued function on the real line, then $f = h\circ g$ is quasiconvex. 
By Definition \ref{dfn: d g}, we have $C = G\circ D$ is quasiconvex. 

The only thing remains to be shown is the quasiconvexity of typical dispersion functions. 
We show the convexity of the variance and negative entropy functions by following Definition \ref{dfn: cvx}. Next, we complete the proof since any convex function is also quasiconvex (based on Definitions \ref{dfn: cvx} and \ref{dfn: quasicvx}: if $f$ is convex, then $f\paren{\lambda x +\paren{1-\lambda}y}\leq \lambda f(x) + \paren{1-\lambda}f(y)\leq \lambda \max\{f(x),f(y)\} +\paren{1-\lambda}\max\{f(x),f(y)\}=\max\{f(x),f(y)\}$). 

To show the convexity of the dispersion function $D\paren{\bm{p}}$, we first note that its domain
is a $\paren{n-1}$-simplex (\textit{i.e.,} $\sum_{1\leq i\leq n} p_i=1$ and $p_i\geq 0$ for $i=1,2,\hdots,n$), which is a convex set. 
Next, for specific $D$:

\paragraph{Variance function} is defined as $D\paren{\bm{p}} = \sum_{1\leq i\leq n}\paren{p_i - \frac{1}{n}}^2$. For two points $\bm{p}$ and $\bm{p}'$ and $\lambda\in[0,1]$, it is easy to show that $D\paren{\lambda\bm{p} +\paren{1-\lambda}\bm{p}'} \leq \lambda D\paren{\bm{p}} +\paren{1-\lambda}D\paren{\bm{p}'}$ (we can follow a similar treatment as Equation \ref{eq: ce cvx} by noting that scalar function $d(x) = \paren{x-\frac{1}{n}}^2$ is convex). 

\paragraph{Negative entropy function} is defined as $D\paren{\bm{p}} = c + \sum_{1\leq i\leq n} p_i\cdot\log p_i$, where $c$ is just a constant to satisfy Definition \ref{dfn: d g} that $D\paren{\frac{1}{n}\bm{1}} = 0$. 
It is a well-known result that the negative entropy is convex. See the proof in \cite{entropy_max}. 
\end{proof}
\subsubsection{Proof of Proposition \ref{prop: min gap}}

\begin{proposition}{(Originally Proposition \ref{prop: min gap})}
Given $\bm{\alpha}_i$, $\Lsg_{\bm{\alpha}_i}\paren{\pg_i}$ is a convex function of $\pg_i$ with unique minimizer $\pg_i^* = \bm{\alpha}_i$. 
Let $\Delta\paren{\bm{\alpha}} = \Lsg_{\bm{\alpha}}\paren{\bm{\alpha}}$ be a function of $\bm{\alpha}$, $\Delta$ is a concave function with unique maximizer $\bm{\alpha}^* = \frac{1}{n}\bm{1}$. 
\end{proposition}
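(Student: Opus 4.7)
The plan is to treat the two claims separately since they rely on standard properties of cross-entropy and Shannon entropy on the probability simplex. Both $\pg_i$ and $\bm{\alpha}_i$ live in the simplex $\Ss[n-1]$, so I would work under the constraints $\sum_j \hat{p}_{ij} = 1$ and $\hat{p}_{ij} \geq 0$ (and analogously for $\bm{\alpha}_i$), using Lagrange multipliers where helpful.

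For the first claim, I would first observe that $\Lsg_{\bm{\alpha}_i}(\pg_i) = -\sum_j \alpha_{ij}\log \hat{p}_{ij}$ is a nonnegative combination (the $\alpha_{ij}$ are fixed nonnegative weights summing to one) of the functions $\pg_i \mapsto -\log \hat{p}_{ij}$. Each such function is strictly convex on $(0,1]$ since its second derivative $1/\hat{p}_{ij}^{\,2}$ is positive, hence the sum is strictly convex in $\pg_i$. To identify the unique minimizer, I would write the Lagrangian $\mathcal{L}(\pg_i,\lambda) = -\sum_j \alpha_{ij}\log \hat{p}_{ij} + \lambda\bigl(\sum_j \hat{p}_{ij} - 1\bigr)$, set $\partial\mathcal{L}/\partial \hat{p}_{ij} = -\alpha_{ij}/\hat{p}_{ij} + \lambda = 0$, and solve $\hat{p}_{ij} = \alpha_{ij}/\lambda$; the simplex constraint forces $\lambda = 1$, giving $\pg_i^* = \bm{\alpha}_i$. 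Uniqueness is automatic from strict convexity. (An equivalent route is Gibbs' inequality: $\Lsg_{\bm{\alpha}_i}(\pg_i) - \Lsg_{\bm{\alpha}_i}(\bm{\alpha}_i) = \mathrm{KL}(\bm{\alpha}_i\,\|\,\pg_i) \geq 0$, with equality iff $\pg_i = \bm{\alpha}_i$.)

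For the second claim, note that $\Delta(\bm{\alpha}) = \Lsg_{\bm{\alpha}}(\bm{\alpha}) = -\sum_j \alpha_j \log \alpha_j$ is the Shannon entropy of $\bm{\alpha}$. Concavity follows because each scalar map $\alpha \mapsto -\alpha \log \alpha$ has second derivative $-1/\alpha < 0$ on $(0,1]$, so it is strictly concave, and a sum of strictly concave functions (in disjoint coordinates) is strictly concave in $\bm{\alpha}$. To locate the maximizer on the simplex, I would again use Lagrange multipliers: $\partial/\partial\alpha_j\bigl(-\sum_k \alpha_k\log\alpha_k + \lambda(\sum_k \alpha_k - 1)\bigr) = -\log\alpha_j - 1 + \lambda = 0$ gives $\alpha_j = e^{\lambda-1}$, independent of $j$, so all coordinates are equal and the simplex constraint yields $\bm{\alpha}^* = \tfrac{1}{n}\bm{1}$. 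Strict concavity ensures uniqueness.

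I do not anticipate a genuine obstacle here — both results are classical — but the one place to be careful is behavior at the boundary of the simplex, where $\log 0 = -\infty$. The standard convention $0\log 0 = 0$ handles the entropy case cleanly, and for the cross-entropy case I would either restrict attention to the relative interior (which contains the minimizer $\bm{\alpha}_i$ under the paper's assumption $\alpha_{ij}\in(0,1)$) or note that $\Lsg_{\bm{\alpha}_i}(\pg_i) = +\infty$ whenever $\hat{p}_{ij} = 0$ for some $j$ with $\alpha_{ij} > 0$, so no boundary point can compete with the interior critical point.
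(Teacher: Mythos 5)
Your proposal is correct and follows essentially the same route as the paper: Lagrange-multiplier/KKT stationarity on the simplex to identify $\pg_i^* = \bm{\alpha}_i$, strict convexity of the cross-entropy for uniqueness, and the standard strict concavity of Shannon entropy with uniform maximizer for $\Delta$ (the paper simply cites this last fact rather than re-deriving it). The only point worth noting is that strict convexity of the sum $-\sum_j \alpha_{ij}\log\hat p_{ij}$ in the full vector $\pg_i$ relies on every weight $\alpha_{ij}$ being strictly positive, which is exactly the paper's standing assumption $\alpha_{ij}\in(0,1)$, so your argument is sound as stated.
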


\begin{proof}{}
We can formulate the problem of finding the minimizer $\pg_i^*$ of $\Lsg_{\bm{\alpha}_i}\paren{\pg_i}$ as 
\begin{align}
\min_{\pg_i} \quad & -\sum_{1\leq j\leq n}\alpha_{ij}\cdot\log \hat{p}_{ij}\\
\textrm{s.t.} \quad & \hat{p}_{ij}\geq 0, \textrm{  for } j=1, 2, ... n\\
&\sum_{1\leq j \leq n} \hat{p}_{ij}=1
\end{align}
The Larangian is 
\begin{align}
    \mathcal{L}\paren{\pg_i, \gamma, \lambda_1, \lambda_2, ..., \lambda_n}=\sum_{1\leq j\leq n}\alpha_{ij}\cdot\log \hat{p}_{ij} + \gamma\paren{\sum_{1\leq j \leq n} \hat{p}_{ij}-1}+\sum_{1\leq j \leq n}{\lambda_j \hat{p}_{ij}}
\end{align}
Applying the KKT condition \citep{cvx_book_theory}, we have
\begin{equation}
\label{eq: kkt pstar}
\begin{cases}
    \begin{aligned}
    &\alpha_{ij}\cdot \frac{1}{\hat{p}^{*}_{ij}} + \gamma^* + \lambda^*_j = 0, & \textrm{for } j = 1, 2, ..., n \quad& \textrm{  (Stationarity)} \\
    &\lambda_j^{*}\cdot \hat{p}^{*}_{ij}=0, & \textrm{ for }j=1, 2,..., n \quad & \textrm{   (Complementary slackness)}\\
    &\hat{p}^{*}_{ij}\geq 0, & \textrm{ for } j=1, 2, ..., n \quad & \textrm{    (Primal feasibility)}\\
    &\sum_{1\leq j \leq n} {\hat{p}^{*}_{ij}}=1 &\quad & \textrm{   (Primal feasibility)}\\
    &\lambda^{*}_{j}\geq 0  & \textrm{  for }j=1, 2,..., n \quad& \textrm{ (Dual feasibility)}\\
    &\gamma^{*} \geq 0 & \quad & \textrm{   (Dual feasibility)}
    \end{aligned}
    \end{cases}
\end{equation}

Solving the system of equations in \ref{eq: kkt pstar}, we have
$\hat{p}^{*}_{ij} = \alpha_{ij}/\paren{\sum_{1\leq j \leq n}{\alpha_{ij}}}$. Since $\bm{\alpha}_{i}$ forms a distribution, we further have $\hat{p}^{*}_{ij} = \alpha_{ij}$, \textit{i.e.,}
\begin{align}
    \pg_{i}^{*} = \bm{\alpha}_i
\end{align}

Since $\Lsg_{\bm{\alpha_i}}$ is also strictly convex (Proposition \ref{prop: L cvx}), its minimizer is unique. 

In particular, $\Delta\paren{\bm{\alpha}_i} = \Lsg_{\bm{\alpha}_i}\paren{\bm{\alpha}_i} =  -\sum_{1\leq j\leq n}\alpha_{ij}\cdot\log \alpha_{ij}=\mathcal{H}\paren{\bm{\alpha}_i}$. It is easy to show that the entropy function $\mathcal{H}$ is strictly concave w.r.t. the probability mass function and attains its maximum when $\bm{\alpha}_i^{*} = \frac{1}{n}\bm{1}$ \citep{entropy_max}.

\end{proof}
\subsubsection{Proofs related to Theorem \ref{thm: C behavior}}
\label{appendix: proof thm coro}
We first observe that the optimization problem defined in Equation \ref{eq: optm p} can be fully decomposed:

\begin{align}
\min_{\set{\pg_i}}\sum_{1\leq i\leq m} C\paren{\pg_i}\cdot\paren{\Lsg_{\bm{\alpha}_i}\paren{\pg_i}-\mu_i} = \sum_{1\leq i\leq m}\min_{\pg_i}\paren{C\paren{\pg_i}\cdot\paren{\Lsg_{\bm{\alpha}_i}\paren{\pg_i}-\mu_i}}
\end{align}

It is evident that the original optimization problem (Equation \ref{eq: optm p}) can be decomposed into $m$ independent sub-problems:

\begin{align}
\label{eq: opt root}
\pg_i^* = \arg\min_{\pg_i}\paren{C\paren{\pg_i}\cdot\paren{\Lsg_{\bm{\alpha}_i}\paren{\pg_i}-\mu_i}},\qquad 1\leq i\leq m
\end{align}

Therefore, in the following, we \textbf{omit the subscript} $i$. 

\begin{definition}{(Extreme points \citep{cvx_book})}
Given a nonempty convex set $\mathcal{D}$, a point $x\in\mathcal{D}$ is an extreme point if there do not exist $y\in\mathcal{D}$ and $z\in\mathcal{D}$ with $y\neq x$ and $z\neq x$, and a scalar $\lambda \in (0, 1)$, such that $x = \lambda y + \paren{1-\lambda}z$. 
\end{definition}

\begin{proposition}
\label{prop: L cvx}
The loss function $\Lsg_{\bm{\alpha}}\paren{\pg}$ is strictly convex. 
For all $\mu$, the sub-level set $\Slsg_{\bm{\alpha}}^{\leq\mu}$ is convex and compact. 
\end{proposition}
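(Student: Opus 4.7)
My plan is to reduce each claim to standard properties of the cross-entropy function $\Lsg_{\bm{\alpha}}(\pg) = -\sum_{j=1}^n \alpha_j \log \hat{p}_j$ on the simplex $\Ss[n-1]$, exploiting the hypothesis (inherited from the setup of Equation~\ref{eq: optm p}) that every $\alpha_j \in (0,1)$.

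First I would establish strict convexity. The scalar map $t \mapsto -\log t$ is strictly convex on $(0,\infty)$, since its second derivative $1/t^2$ is strictly positive. Because each coefficient $\alpha_j$ is strictly positive and $\hat{p}_j$ depends linearly (in fact as a coordinate projection) on $\pg$, each summand $-\alpha_j \log \hat{p}_j$ is a strictly convex function of $\pg$ on the relatively open simplex. A sum of strictly convex functions is strictly convex, which gives the first claim. (To cover the boundary of the simplex cleanly, I would extend $\Lsg_{\bm{\alpha}}$ by $+\infty$ wherever some $\hat{p}_j = 0$; this extension preserves convexity since each term is lower semicontinuous.)

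Next, convexity of the sub-level set $\Slsg_{\bm{\alpha}}^{\leq \mu} = \{\pg \in \Ss[n-1] : \Lsg_{\bm{\alpha}}(\pg) \leq \mu\}$ is an immediate corollary of convexity of $\Lsg_{\bm{\alpha}}$: for $\pg, \pg' \in \Slsg_{\bm{\alpha}}^{\leq \mu}$ and $\lambda \in [0,1]$, convexity gives $\Lsg_{\bm{\alpha}}(\lambda\pg + (1-\lambda)\pg') \leq \lambda\mu + (1-\lambda)\mu = \mu$, so the convex combination still lies in the set.

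Finally I would argue compactness by showing the sub-level set is closed and bounded in $\R^n$. Boundedness is trivial since $\Slsg_{\bm{\alpha}}^{\leq \mu} \subseteq \Ss[n-1]$ and the simplex is bounded. For closedness, the key observation, and the only delicate point, is the behavior near the relative boundary of the simplex: whenever $\pg$ approaches a boundary point where some coordinate $\hat{p}_j \to 0^+$, the corresponding summand $-\alpha_j \log \hat{p}_j \to +\infty$ (here we use $\alpha_j > 0$), and the remaining summands stay bounded below since $\hat{p}_k \leq 1$ implies $-\log \hat{p}_k \geq 0$. Hence $\Lsg_{\bm{\alpha}}(\pg) \to +\infty$ on every approach to the boundary, so for any finite $\mu$ the set $\Slsg_{\bm{\alpha}}^{\leq \mu}$ is bounded away from the boundary of $\Ss[n-1]$ and therefore closed in $\R^n$ as the preimage of the closed set $(-\infty, \mu]$ under a function that is continuous on the relative interior and $+\infty$ elsewhere. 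Combining closed and bounded in finite dimensions gives compactness.

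The main obstacle is precisely the boundary behavior just described: without the assumption $\alpha_j > 0$ for all $j$, the argument breaks because missing coordinates would allow the sub-level set to touch the boundary of the simplex, spoiling both strict convexity (through the flat direction of a zero coefficient) and the blow-up that forces closedness. The hypothesis $\bm{\alpha}_i \in (0,1)^n$ from the setup of the optimization problem is exactly what makes the argument clean.
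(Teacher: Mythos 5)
Your proposal is correct in substance and follows the same core route as the paper: strict convexity of $-\log$ together with $\alpha_j>0$ gives strict convexity of $\Lsg_{\bm{\alpha}}$, convexity of the sub-level sets is immediate, and compactness follows from boundedness of the simplex plus closedness. One step as written is inaccurate, though: the claim that \emph{each} summand $-\alpha_j\log\hat{p}_j$ is strictly convex as a function of $\pg$ is false, since that summand is constant along any segment on which the $j$-th coordinate is fixed (such segments exist in the simplex for $n\geq 3$), so it is only convex and the lemma ``a sum of strictly convex functions is strictly convex'' does not literally apply. The repair is exactly the paper's argument (its Equation for the chord inequality): for distinct $\pg\neq\pg'$ there is at least one coordinate $j$ with $\hat{p}_j\neq\hat{p}_j'$, and since $\alpha_j>0$ that single term is strictly below its chord while all remaining terms are weakly below, so the \emph{sum} is strictly convex. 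With that fix your conclusion stands. On compactness your route is slightly different and somewhat more self-contained: you show the loss blows up as any coordinate tends to $0$, so $\Slsg_{\bm{\alpha}}^{\leq\mu}$ stays away from the relative boundary and is closed where the function is genuinely continuous; the paper instead treats $\Lsg_{\bm{\alpha}}$ as (lower semi)continuous and cites a standard result that sub-level sets of such functions are closed, then uses boundedness of $\Ss[n-1]$. Your version makes explicit the boundary behavior that the paper's brief continuity claim leaves implicit, at the cost of the extra extended-value bookkeeping; both yield the same conclusion.
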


\begin{proof}
For strict convexity, first of all, the domain of $\Lsg_{\bm{\alpha}}$, the $\paren{n-1}$-simplex, is convex. Then consider two points $\pg$ and $\pg'$ in $\Lsg_{\bm{\alpha}}$'s domain and some $\lambda \in (0,1)$:

\begin{align}
\label{eq: ce cvx}
    \Lsg_{\bm{\alpha}}\paren{\lambda\cdot \pg + \paren{1-\lambda}\cdot \pg'} =& -\sum_{1\leq i \leq n}\alpha_i\cdot \log \paren{\lambda\cdot \hat{p}_i +\paren{1-\lambda}\cdot \hat{p}_i'}\nonumber\\
    \stackrel{(a)}{<}& \sum_{1\leq i \leq n}\alpha_i\cdot \paren{-\lambda\cdot \log\hat{p}_i - \paren{1-\lambda}\cdot \log\hat{p}_i'} \nonumber\\
    =& \lambda\cdot \paren{-\sum_{1\leq i\leq n}\alpha_i\cdot \log\hat{p}_i} + \paren{1-\lambda}\cdot\paren{-\sum_{1\leq i\leq n}\alpha_i\cdot\log\hat{p}_i'}\nonumber\\
    =& \lambda\cdot \Lsg_{\bm{\alpha}}\paren{\pg} + \paren{1-\lambda}\cdot\Lsg_{\bm{\alpha}}\paren{\pg'}
\end{align}

where ``$\stackrel{(a)}{<}$'' is due to the strict convexity of the $\log$ function. 
This proves that $\Lsg_{\bm{\alpha}}$ is strictly convex. 
Then it directly follows that all its sub-level sets are convex.

Note that $\Lsg_{\bm{\alpha}}$ is continuous (and so lower semicontinuous as well). By Proposition 1.2.2 of \cite{cvx_book}, all sublevel sets of $\Lsg_{\bm{\alpha}}$ are closed. Further, since $\Lsg_{\bm{\alpha}}$ is defined on the $\paren{n-1}$-simplex, $\Slsg_{\bm{\alpha}}^{\leq\mu}$ is also bounded. Therefore, $\Slsg_{\bm{\alpha}}^{\leq \mu}$ is compact (compact means closed and bounded). 
\end{proof}

\begin{proposition}
\label{prop: extreme point}
The set of extreme points of $\Slsg_{\bm{\alpha}}^{\leq \mu}$ equals $\Slsg_{\bm{\alpha}}^\mu$.
\end{proposition}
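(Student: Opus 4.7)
The plan is to prove both set inclusions separately, relying on the two conclusions of Proposition~\ref{prop: L cvx}: strict convexity of $\Lsg_{\bm{\alpha}}$ and compactness/convexity of $\Slsg_{\bm{\alpha}}^{\leq\mu}$. I will additionally exploit the continuity of $\Lsg_{\bm{\alpha}}$ together with the setup assumption $\alpha_{ij}\in(0,1)$, which ensures $\bm{\alpha}$ has full support on the simplex and therefore that any point of finite loss lies in the relative interior of the $(n-1)$-simplex.

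For the inclusion $\Slsg_{\bm{\alpha}}^\mu \subseteq$ (extreme points of $\Slsg_{\bm{\alpha}}^{\leq\mu}$), I would take an arbitrary $\pg$ with $\Lsg_{\bm{\alpha}}(\pg)=\mu$ and argue by contradiction. If $\pg=\lambda\pg_1+(1-\lambda)\pg_2$ for two distinct $\pg_1,\pg_2\in\Slsg_{\bm{\alpha}}^{\leq\mu}$ and some $\lambda\in(0,1)$, strict convexity yields $\Lsg_{\bm{\alpha}}(\pg)<\lambda\Lsg_{\bm{\alpha}}(\pg_1)+(1-\lambda)\Lsg_{\bm{\alpha}}(\pg_2)\leq \mu$, contradicting $\Lsg_{\bm{\alpha}}(\pg)=\mu$. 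For the reverse inclusion, I would show that any $\pg\in\Slsg_{\bm{\alpha}}^{\leq\mu}$ with $\Lsg_{\bm{\alpha}}(\pg)<\mu$ fails to be extreme. Since $\Lsg_{\bm{\alpha}}(\pg)$ is finite and every $\alpha_j>0$, each coordinate $\hat{p}_j$ must be strictly positive, placing $\pg$ in the relative interior of the simplex. By continuity of $\Lsg_{\bm{\alpha}}$ there, a small relative-open ball around $\pg$ remains contained in $\Slsg_{\bm{\alpha}}^{\leq\mu}$, so two distinct antipodal points $\pg_1,\pg_2$ on a short segment through $\pg$ inside this ball yield $\pg=\tfrac{1}{2}(\pg_1+\pg_2)$, certifying non-extremality.

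The main obstacle lies in this second direction, specifically ensuring that a genuine two-sided perturbation of $\pg$ exists within the sublevel set. This is precisely where the full-support hypothesis $\alpha_{ij}\in(0,1)$ is essential: without it, a simplex vertex on a face where $\bm{\alpha}$ vanishes could sit inside $\Slsg_{\bm{\alpha}}^{\leq\mu}$ with loss strictly below $\mu$ yet have no room to perturb in both directions, thereby being extreme but lying outside the level set and falsifying the proposition. Given full support, all finite-loss points live in the relative interior of the simplex and admit a genuine open neighborhood in the supporting affine hull, so the local perturbation argument closes the proof cleanly.
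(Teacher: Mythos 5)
Your proposal is correct and follows essentially the same route as the paper: the inclusion $\Slsg_{\bm{\alpha}}^\mu\subseteq$ (extreme points) via the strict-convexity contradiction, and the converse by noting that any point with loss below $\mu$ has all coordinates strictly positive (hence lies in the relative interior of the simplex) and admits a symmetric perturbation $\pg\pm\bm{\epsilon}$ staying in the sublevel set by continuity, which is exactly the paper's construction. Your explicit remark that the full-support assumption $\alpha_{ij}\in(0,1)$ is what rules out boundary extreme points off the level set is a nice clarification of a hypothesis the paper uses implicitly, but it does not change the argument.
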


\begin{proof}
Denote $\mathcal{P}$ as the set of extreme points of $\Slsg_{\bm{\alpha}}^{\leq \mu}$. 
We first show that $\Slsg_{\bm{\alpha}}^\mu\subseteq \mathcal{P}$. 
Consider a point $\pg_0\in\Slsg_{\bm{\alpha}}^\mu$.
Suppose there exist some other points $\pg_1,\pg_2\in\Slsg_{\bm{\alpha}}^{\leq \mu}$ (with $\pg_1\neq \pg_0$ and $\pg_2\neq \pg_0$), such that $\lambda \pg_1 + \paren{1-\lambda}\pg_2 = \pg_0$ for some $\lambda \in(0, 1)$. 
By strict convexity shown in Proposition \ref{prop: L cvx}, we have
$\mu = \Lsg_{\bm{\alpha}}\paren{\pg_0} = \Lsg_{\bm{\alpha}}\paren{\lambda \pg_1 + \paren{1-\lambda}\pg_2}< \lambda \Lsg_{\bm{\alpha}}\paren{\pg_1} + \paren{1-\lambda}\Lsg_{\bm{\alpha}}\paren{\pg_2}$. 
This implies that $\Lsg_{\bm{\alpha}}\paren{\pg_1} > \mu$ or $\Lsg_{\bm{\alpha}}\paren{\pg_2} > \mu$, which contradicts with the condition that $\pg_1,\pg_2\in\Slsg_{\bm{\alpha}}^{\leq \mu}$. 
Thus, any point $\pg_0\in\Slsg_{\bm{\alpha}}^\mu$ is an extreme point of $\Slsg_{\bm{\alpha}}^{\leq\mu}$.

Now we show $\mathcal{P}\subseteq\Slsg_{\bm{\alpha}}^\mu$. 
Consider a point $\pg_0\in\Slsg_{\bm{\alpha}}^{<\mu}$. 
For any $\mu\in\mathbb{R}$, we always have $\hat{p}_{0i}> 0$, for all $i=1,2,\hdots,n$ (otherwise, $\Lsg_{\bm{\alpha}}\paren{\pg_0}\rightarrow\infty$). 
This means that we can find a vector $\bm{\epsilon}$, such that 
\begin{enumerate*}
\item[(1)] $\sum_{1\leq i\leq n} \epsilon_i = 0$,
\item[(2)] $|\epsilon_i| < \hat{p}_{0i}$, for all $1\leq i\leq n$, and 
\item[(3)] $\Lsg_{\bm{\alpha}}\paren{\pg_0+\bm{\epsilon}} \leq\mu$ and $\Lsg_{\bm{\alpha}}\paren{\pg_0-\bm{\epsilon}} \leq\mu$.
\end{enumerate*}
Point (1) and (2) ensure that $\pg_0+\bm{\epsilon}$ and $\pg_0-\bm{\epsilon}$ are both with the domain of $\Lsg_{\bm{{\alpha}}}$. 
For Point (3), we can always find small enough $\epsilon_i$  due to the continuity of the loss function $\Lsg_{\bm{\alpha}}$. 
We thus find two points, $\pg_1=\pg_0+\bm{\epsilon}\in\Slsg_{\bm{\alpha}}^{\leq\mu}$ and $\pg_2 = \pg_0 - \bm{\epsilon}\in\Slsg_{\bm{\alpha}}^{\leq\mu}$, where $\pg_0 = \frac{1}{2}\pg_1 +\frac{1}{2}\pg_2$. 
Thus, $\pg_0\in\Slsg_{\bm{\alpha}}^{<\mu}$ is not an extreme point of $\Slsg_{\bm{\alpha}}^{\leq\mu}$. 

In summary, $\mathcal{P} = \Slsg_{\bm{\alpha}}^\mu$. 
\end{proof}

\begin{proposition}{(Krein-Milman Theorem \citep{cvx_book})}
\label{prop: cvx hull}
Let $\mathcal{D}$ be a nonempty convex subset of $\mathbb{R}^n$. If $\mathcal{D}$ is compact, then $\mathcal{D}$ is equal to the convex hull of its extreme points. 
\end{proposition}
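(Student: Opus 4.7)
The plan is to prove Proposition~\ref{prop: cvx hull} by induction on the ambient dimension $n$, combining the supporting hyperplane theorem with a reduction to the boundary. One inclusion, $\mathrm{conv}(\mathrm{ext}(\mathcal{D})) \subseteq \mathcal{D}$, is immediate since $\mathcal{D}$ is convex and contains all its extreme points, so the whole work lies in showing the reverse inclusion.

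For the base case $n=1$, a nonempty compact convex subset of $\mathbb{R}$ is a closed bounded interval $[a,b]$; its extreme points are exactly $\{a,b\}$, and $\mathrm{conv}(\{a,b\}) = [a,b]$. For the inductive step, assume the result for all dimensions strictly less than $n$, and let $\mathcal{D} \subseteq \mathbb{R}^n$ be nonempty, compact, and convex. I first reduce to the full-dimensional case: if the affine hull of $\mathcal{D}$ has dimension strictly less than $n$, I apply the inductive hypothesis to $\mathcal{D}$ viewed inside its affine hull (which is isometric to some $\mathbb{R}^k$ with $k<n$). So assume $\mathcal{D}$ has nonempty interior in $\mathbb{R}^n$.

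Take any $x \in \mathcal{D}$ and split into two subcases. If $x$ is a boundary point, apply the supporting hyperplane theorem to obtain a hyperplane $H$ through $x$ such that $\mathcal{D}$ lies entirely in one closed half-space determined by $H$. Then $\mathcal{D} \cap H$ is a nonempty compact convex subset of $H$, which is affinely an $(n-1)$-dimensional space, so by the inductive hypothesis $x$ is a convex combination of extreme points of $\mathcal{D} \cap H$. The key auxiliary claim is that every extreme point of $\mathcal{D}\cap H$ is an extreme point of $\mathcal{D}$: if $p \in \mathcal{D}\cap H$ decomposes as $p = \lambda y + (1-\lambda)z$ with $y,z \in \mathcal{D}$ and $\lambda \in (0,1)$, then because the linear functional defining $H$ attains its extremum over $\mathcal{D}$ at $p$, both $y$ and $z$ must themselves lie on $H$; extremality in $\mathcal{D}\cap H$ then forces $y=z=p$. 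Hence $x \in \mathrm{conv}(\mathrm{ext}(\mathcal{D}))$. If instead $x$ lies in the interior of $\mathcal{D}$, any line through $x$ meets the compact set $\mathcal{D}$ in a closed segment $[y_1,y_2]$ with $y_1,y_2 \in \partial \mathcal{D}$; then $x$ is a convex combination of $y_1$ and $y_2$, each already shown to be a convex combination of extreme points, so $x \in \mathrm{conv}(\mathrm{ext}(\mathcal{D}))$ as well.

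The main obstacle is the auxiliary claim that extreme points of a supporting face $\mathcal{D}\cap H$ are extreme in all of $\mathcal{D}$; it is easy to state but must be argued carefully using the fact that $H$ lies on the boundary of the half-space containing $\mathcal{D}$, so any convex decomposition of a point of $H \cap \mathcal{D}$ using points of $\mathcal{D}$ cannot ``escape'' $H$. A secondary subtlety is the affine-dimension reduction: when $\mathcal{D}$ has empty interior in $\mathbb{R}^n$, one must verify that extreme points computed inside the affine hull coincide with extreme points in $\mathbb{R}^n$, which holds because extremality only depends on convex combinations within $\mathcal{D}$ itself. Everything else is a routine combination of compactness (to guarantee that lines exit $\mathcal{D}$ at boundary points) and the standard supporting hyperplane theorem.
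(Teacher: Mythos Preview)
Your argument is correct and is the standard induction-on-dimension proof of the finite-dimensional Krein--Milman (Minkowski) theorem: reduce to the boundary via a line through an interior point, handle boundary points with a supporting hyperplane, and use the auxiliary claim that extreme points of the supporting face $\mathcal{D}\cap H$ are extreme in $\mathcal{D}$. All steps, including the affine-hull reduction and the face-extremality claim, are argued correctly.

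Note, however, that the paper does not give its own proof of this proposition at all: it simply states the result and cites a convex-analysis textbook, treating it as a standard fact used downstream in the proof of Theorem~2.4. So there is no ``paper's approach'' to compare against; you have supplied a full proof where the paper only provides a reference.
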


\begin{proposition}{(Caratheodory's Theorem \citep{cvx_book_theory})}
\label{prop: cvx internal point}
Let $\mathcal{D}$ be a nonempty subset of $\mathbb{R}^n$. Every point from the convex hull of $\mathcal{D}$ can be represented as a convex combination of $x_1, \hdots, x_m$ from $\mathcal{D}$, where $m \leq n+1$ and $x_2-x_1, \hdots, x_m-x_1$ are linearly independent. 
\end{proposition}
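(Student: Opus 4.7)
The plan is to establish Caratheodory's Theorem by a standard reduction argument: starting from any convex-combination representation of a point $x$ in the convex hull of $\mathcal{D}$, I would iteratively shrink the number of summands until the required linear independence condition on the differences holds.

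First, pick any $x$ in the convex hull and write $x = \sum_{i=1}^{k}\lambda_i x_i$ with $x_i \in \mathcal{D}$, $\lambda_i > 0$, and $\sum_i \lambda_i = 1$, taking $k$ minimal among all such representations. I claim that at this minimal $k$ the vectors $x_2 - x_1, \ldots, x_k - x_1$ are linearly independent. Suppose for contradiction they are dependent, so there exist scalars $\mu_2, \ldots, \mu_k$, not all zero, with $\sum_{i=2}^{k}\mu_i(x_i - x_1) = 0$. Setting $\mu_1 = -\sum_{i=2}^{k}\mu_i$ yields coefficients $\mu_1, \ldots, \mu_k$ satisfying $\sum_{i=1}^k \mu_i x_i = 0$ and $\sum_{i=1}^k \mu_i = 0$. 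Define the perturbation $\lambda_i(t) = \lambda_i - t\mu_i$; then $\sum_i \lambda_i(t) x_i = x$ and $\sum_i \lambda_i(t) = 1$ for every $t$. Because the $\mu_i$ sum to zero and are not all zero, at least one is strictly positive, so $t^\star = \min\bigl\{\lambda_i / \mu_i \,:\, \mu_i > 0\bigr\}$ is a well-defined positive scalar. Let $i^\star$ attain the minimum; then $\lambda_{i^\star}(t^\star) = 0$, every other $\lambda_i(t^\star) \geq 0$, and we have represented $x$ as a convex combination of at most $k-1$ points of $\mathcal{D}$, contradicting minimality of $k$. Linear independence of the $k-1$ vectors $x_2 - x_1, \ldots, x_k - x_1$ in $\mathbb{R}^n$ then forces $k \leq n+1$.

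The conceptual flow is classical, so the main obstacle is the perturbation bookkeeping: one must verify that $t^\star > 0$ (which follows from $\lambda_i > 0$ and the existence of a positive $\mu_i$), that the new coefficients remain nonnegative (for indices with $\mu_i \leq 0$ the quantity $\lambda_i - t\mu_i$ only grows with $t \geq 0$, while for $\mu_i > 0$ it stays nonnegative precisely for $t \leq \lambda_i/\mu_i$, which the $\min$ choice enforces), and that the affine constraint $\sum_i \lambda_i(t) = 1$ is preserved (which uses $\sum_i \mu_i = 0$). A minor stylistic point is that the theorem statement fixes the indexing $x_2 - x_1, \ldots, x_m - x_1$; since linear independence of those $m-1$ vectors is invariant under relabeling of $\{x_2, \ldots, x_m\}$, a reduction step that drops $x_{i^\star}$ with $i^\star \neq 1$ directly yields such an indexing, and if $i^\star = 1$ we simply relabel a surviving point as the new $x_1$.
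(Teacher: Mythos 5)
Your proof is correct. Note, however, that the paper does not prove this proposition at all: it is quoted as Caratheodory's Theorem directly from \citet{cvx_book_theory} and used as a black box in the proof of Theorem 3, so there is no in-paper argument to compare against. What you supply is the standard textbook proof — take a representation with a minimal number of terms, show that an affine dependence $\sum_i \mu_i x_i = 0$, $\sum_i \mu_i = 0$ would allow the perturbation $\lambda_i - t\mu_i$ to zero out a coefficient while preserving the convex combination, contradicting minimality — and your bookkeeping is sound: $t^\star>0$ because the $\lambda_i$ are strictly positive and some $\mu_i>0$ (they sum to zero and are not all zero), nonnegativity is preserved by the choice of $t^\star$ as the minimum ratio, and the constraint $\sum_i\lambda_i(t)=1$ follows from $\sum_i\mu_i=0$. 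Your closing remark is also right: linear independence of $x_2-x_1,\hdots,x_m-x_1$ is exactly affine independence of $\{x_1,\hdots,x_m\}$, which is symmetric in the points, so the particular indexing in the statement is recovered by relabeling, and $k-1\leq n$ gives $m\leq n+1$.
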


\begin{proposition}
\label{prop: quasicvx multi}
If $f$ is a quasiconvex function defined on $\mathcal{D}$, then for all $x_1,\hdots,x_m\in\mathcal{D}$ and $\lambda_1\geq 0, \hdots,\lambda_m\geq 0$ such that $\sum_{1\leq i\leq m}\lambda_i=1$, we have $f\paren{\sum_{1\leq i\leq m}\lambda_i x_i} \leq \max_{1\leq i\leq m}\{f\paren{x_i}\}$. 
\end{proposition}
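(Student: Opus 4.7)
The plan is to proceed by induction on $m$, bootstrapping from the two-point quasiconvexity inequality in Definition \ref{dfn: quasicvx}. The base case $m = 2$ is precisely the definition. For the inductive step, I would assume the claim holds for convex combinations of $m-1$ points and then reduce an $m$-term combination to a two-term combination whose second component is itself an $\paren{m-1}$-term combination.

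Concretely, given $x_1, \hdots, x_m \in \mathcal{D}$ and weights $\lambda_1, \hdots, \lambda_m$ summing to $1$, I would set $\mu = \sum_{i=1}^{m-1} \lambda_i = 1 - \lambda_m$ and write
\begin{equation*}
\sum_{i=1}^m \lambda_i x_i = \mu \cdot y + \lambda_m \cdot x_m, \qquad \text{where} \qquad y = \sum_{i=1}^{m-1} \frac{\lambda_i}{\mu}\, x_i .
\end{equation*}
Convexity of $\mathcal{D}$ ensures $y \in \mathcal{D}$, since $\set{\lambda_i/\mu}_{i=1}^{m-1}$ are non-negative and sum to $1$. Applying the two-point quasiconvexity bound to the pair $\paren{y, x_m}$ with weights $\paren{\mu, \lambda_m}$ and then the inductive hypothesis to $y$ chains into
\begin{equation*}
f\paren{\sum_{i=1}^m \lambda_i x_i} \leq \max\set{f\paren{y}, f\paren{x_m}} \leq \max_{1 \leq i \leq m} f\paren{x_i},
\end{equation*}
as required.

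The only real obstacle is the degenerate case $\mu = 0$ (equivalently $\lambda_m = 1$), where dividing by $\mu$ is undefined. I would dispatch this with an up-front case split: if some $\lambda_j = 1$, the sum collapses to $x_j$ and the inequality is trivial; otherwise, by relabeling the indices I can assume $\lambda_m < 1$ and carry out the reduction above. Beyond this edge case, the argument is a textbook induction with no subtle issues, and it uses only the two-point definition of quasiconvexity together with convexity of the domain $\mathcal{D}$.
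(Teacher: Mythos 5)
Your induction is correct and matches the paper's argument: the paper proves the strict version (Proposition on strictly quasiconvex functions) by exactly this split-off-one-point-and-renormalize induction and notes the non-strict case is the same but easier, and your handling of the degenerate weight $\lambda_m = 1$ is a reasonable tidy-up of an edge case the paper does not need to address explicitly. No gaps.
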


\begin{proposition}
\label{prop: strict quasicvx multi}
If $f$ is a strictly quasiconvex function defined on $\mathcal{D}$, then for
\begin{enumerate*}
\item[(1)] $x_1,\hdots,x_m\in\mathcal{D}$ such that $x_1\neq x_2 \hdots \neq x_m$ and $x_2-x_1,\hdots,x_m-x_1$ are linearly independent, and 
\item[(2)] $\lambda_1 > 0, \hdots,\lambda_m > 0$ such that $\sum_{1\leq i\leq m}\lambda_i=1$,
\end{enumerate*}
we have $f\paren{\sum_{1\leq i\leq m}\lambda_i x_i} < \max_{1\leq i\leq m}\{f\paren{x_i}\}$. 
\end{proposition}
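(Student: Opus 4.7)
The plan is to reduce the $m$-point statement to the two-point definition of strict quasiconvexity (Definition~\ref{dfn: strictly quasicvx}) by peeling off a single summand, and to bound the residual convex combination using the already-proven $m$-point quasiconvex inequality (Proposition~\ref{prop: quasicvx multi}). Concretely, I would set $z = \sum_{i=2}^m \frac{\lambda_i}{1-\lambda_1}\, x_i$ and write
\[
\sum_{i=1}^m \lambda_i x_i \;=\; \lambda_1 x_1 + (1-\lambda_1)\, z.
\]
Since every $\lambda_i > 0$ and they sum to $1$, we have $\lambda_1 \in (0,1)$, and $z$ is a proper convex combination of $x_2, \ldots, x_m \in \mathcal{D}$ with strictly positive weights $\frac{\lambda_i}{1-\lambda_1}$ summing to $1$, so $z \in \mathcal{D}$ (the domain is convex).

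Next I would verify that $x_1 \neq z$. If equality held, one could rewrite $x_1 = z$ as $\sum_{i=2}^m \frac{\lambda_i}{1-\lambda_1}(x_i - x_1) = 0$, which is a nontrivial linear dependence (all coefficients strictly positive) among $x_2-x_1, \ldots, x_m-x_1$, contradicting the linear-independence hypothesis. With $x_1 \neq z$ and $\lambda_1 \in (0,1)$, Definition~\ref{dfn: strictly quasicvx} yields $f(\lambda_1 x_1 + (1-\lambda_1) z) < \max\{f(x_1), f(z)\}$. Proposition~\ref{prop: quasicvx multi} (requiring only quasiconvexity) then bounds $f(z) \leq \max_{2 \leq i \leq m} f(x_i)$, and combining the two gives
\[
f\Bigl(\sum_{i=1}^m \lambda_i x_i\Bigr) < \max\{f(x_1), f(z)\} \leq \max_{1 \leq i \leq m} f(x_i),
\]
which is the desired conclusion.

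The only delicate step is ensuring $x_1 \neq z$; this is precisely where the linear-independence hypothesis is indispensable. Without it, the convex combination $z$ of the remaining points could collapse onto $x_1$, in which case the strict two-point inequality would not apply and the conclusion could fail. Note that no induction on $m$ is required: a single peel-off suffices, because quasiconvexity (rather than strict quasiconvexity) is enough to bound $f(z)$ while still preserving the strict inequality produced by the $x_1$-versus-$z$ step.
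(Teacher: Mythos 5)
Your proof is correct, and it takes a genuinely leaner route than the paper's. The paper proves the statement by induction on $m$: it peels off $x_1$ exactly as you do, but then applies the \emph{induction hypothesis} (the strict $m{-}1$-point inequality) to the inner combination, which forces it to verify an auxiliary fact along the way, namely that $x_3-x_2,\hdots,x_m-x_2$ are again linearly independent. You avoid the induction entirely by bounding the tail $z=\sum_{i\geq 2}\frac{\lambda_i}{1-\lambda_1}x_i$ with the already-established non-strict multi-point bound (Proposition~\ref{prop: quasicvx multi}) and invoking strictness only once, in the two-point step $f\paren{\lambda_1 x_1+\paren{1-\lambda_1}z}<\max\set{f\paren{x_1},f\paren{z}}$, for which the linear-independence hypothesis is used exactly where it is needed, to rule out $z=x_1$. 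This is a cleaner division of labor and also makes transparent that the full-strength independence hypothesis is only there to prevent the tail from collapsing onto $x_1$. The one small point you should make explicit is that Proposition~\ref{prop: quasicvx multi} is stated for quasiconvex $f$, so you need the (immediate) observation that a strictly quasiconvex function in the sense of Definition~\ref{dfn: strictly quasicvx} is quasiconvex in the sense of Definition~\ref{dfn: quasicvx}: for $x\neq y$ and $\lambda\in(0,1)$ the strict inequality gives the non-strict one, and in the degenerate cases $x=y$ or $\lambda\in\set{0,1}$ the convex combination equals one of the endpoints, so the non-strict bound holds trivially. With that sentence added, your argument is complete and, if anything, simpler than the paper's.
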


\begin{proof}
For brevity, we only show the proof for Proposition \ref{prop: strict quasicvx multi} as the proof for Proposition \ref{prop: quasicvx multi} is similar (and easier). 
We prove by induction. 
The base case of $m=2$ automatically holds by Definition \ref{dfn: quasicvx}. 

For $m > 2$, note that 

\begin{align}
\sum_{1\leq i\leq m}\lambda_i x_i = \lambda_1 x_1 + \paren{\sum_{2\leq i\leq m}\lambda_i}\cdot \paren{\sum_{2\leq j\leq m}\frac{\lambda_j}{\sum_{2\leq i\leq m}\lambda_i}x_j}
\end{align}

Let $\lambda_j' = \frac{\lambda_j}{\sum_{2\leq i\leq m}\lambda_i}$. 
Apparently, $\sum_{2\leq j\leq m}\lambda_j'=1$ and $\lambda_j' > 0$ for all $2\leq j \leq m$. 
In addition, $x_3-x_2,\hdots,x_m-x_2$ are linearly independent 
(To show this: $\sum_{2\leq j\leq m}\beta_j\paren{x_j-x_1} = \sum_{3\leq j\leq m}\beta_j\paren{x_j-x_2} + \paren{\sum_{2\leq j\leq m}\beta_j}\paren{x_2-x_1}$. If $x_3-x_2,\hdots,x_m-x_2$ are linearly dependent, then there exists $\beta_3,\hdots,\beta_m$ such that $\sum_{3\leq j\leq m}\beta_j\paren{x_j-x_2}=0$. Then by letting $\beta_2=-\sum_{3\leq j\leq m}\beta_j$, we have $\sum_{2\leq j\leq m}\beta_j\paren{x_j-x_1}=0$ -- contradiction with the hypothesis that $x_2-x_1,\hdots,x_m-x_1$ are linearly independent). 
Therefore, by induction hypothesis, we have 
\begin{align}
f\paren{\sum_{2\leq j\leq m}\lambda_j' x_j} < \max_{2\leq j\leq m}\{f\paren{x_j}\}
\end{align}

Let $x' = \sum_{2\leq j\leq m}\lambda_j' x_j$. 
Since $x_2-x_1,\hdots,x_m-x_1$ are linearly independent, we must have $\sum_{2\leq j\leq m}\lambda_j'\paren{x_j-x_1}\neq 0$. Equivalently, $\sum_{2\leq j\leq m}\lambda_j' x_j =x' \neq x_1$. 
Thus, 

\begin{align}
f\paren{\sum_{1\leq i\leq m}\lambda_i x_i} =& f\paren{\lambda_1 x_1 + \paren{1-\lambda_1}x'} < \max\{f\paren{x_1},f\paren{x'}\} \\
<& \max\left\{f\paren{x_1},\max_{2\leq i\leq m}\{f\paren{x_i}\}\right\}\\
=& \max_{1\leq i\leq m}\{f\paren{x_i}\}
\end{align}

This completes the induction process and thus concludes the proof. 
\end{proof}

\begin{theorem}{(Originally Theorem \ref{thm: C behavior})}
Suppose $C=G\circ D$ follows Definition \ref{dfn: d g}. 
Denote $\Slsg_{\bm{\alpha}}^{\mu}=\set{\pg\given\Lsg_{\bm{\alpha}}\paren{\pg}=\mu}$ and $\Slsg_{\bm{\alpha}}^{<\mu}=\set{\pg\given\Lsg_{\bm{\alpha}}\paren{\pg}<\mu}$
 as the level set and strict sublevel set of $\Lsg_{\bm{\alpha}}$. 
Denote $\Scg^{<\mu} = \set{\pg\given C\paren{\pg}<\mu}$ as the strict sublevel set of $C$. 
For a given $\bm{\alpha}\neq \frac{1}{n}\bm{1}_n$, the minimizer $\pg^*$ satisfies:

\begin{itemize}[leftmargin=0.32cm]
\item If $\Delta\paren{\bm{\alpha}} > \mu$, then $\hat{\bm{p}}^* = \frac{1}{n}\bm{1}_n$ and $C\paren{\pg^*} = 0$. 
\item If $\Delta\paren{\bm{\alpha}} = \mu$, then $\pg^* = \bm{\alpha}$ or $\pg^*=\frac{1}{n}\bm{1}$. 
\item If $\Delta\paren{\bm{\alpha}} < \mu$, then $\pg^*\in \Slsg_{\bm{\alpha}}^{<\mu} - \Scg^{<\mu'}$ where $\mu' = C\paren{\bm{\alpha}}\leq C\paren{\pg^*} \leq G\paren{\max_{\pg\in\Slsg_{\bm{\alpha}}^{\mu}}D\paren{\pg}}$. 
Further, there exists $C$ such that $\pg^*=\bm{\alpha}$, or $\pg^*$ is sufficiently close to the level set $\Slsg_{\bm{\alpha}}^{\mu}$. 
\end{itemize}
\end{theorem}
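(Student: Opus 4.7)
The plan is to exploit the decomposability of the objective in Equation~\ref{eq: optm p} into $m$ independent sub-problems of the form~\ref{eq: opt root}, reducing everything to a single partition, and then to work out the sign structure of $C\paren{\pg}\paren{\Lsg_{\bm{\alpha}}\paren{\pg}-\mu}$ case by case. In Case~1, Proposition~\ref{prop: min gap} gives $\Lsg_{\bm{\alpha}}\paren{\pg}\geq\Delta\paren{\bm{\alpha}}>\mu$ everywhere, so the product is nonnegative and reaches $0$ only by killing the first factor; Definition~\ref{dfn: d g} then pins $\pg^*=\frac{1}{n}\bm{1}$ and $C\paren{\pg^*}=0$. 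In Case~2, both $\bm{\alpha}$ (which zeros the loss gap) and $\frac{1}{n}\bm{1}$ (which zeros $C$) attain objective $0$, while any other point yields a strictly positive product by the strict convexity of $\Lsg_{\bm{\alpha}}$ (Proposition~\ref{prop: L cvx}) combined with the positivity of $C$ away from the simplex center.

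Case~3 carries the substance. First I would evaluate the objective at $\pg=\bm{\alpha}$: it equals $C\paren{\bm{\alpha}}\paren{\Delta\paren{\bm{\alpha}}-\mu}<0$, so any minimizer must also be strictly negative, immediately forcing $\pg^*\in\Slsg_{\bm{\alpha}}^{<\mu}$. For the lower bound $C\paren{\pg^*}\geq C\paren{\bm{\alpha}}=\mu'$, I would rearrange the optimality inequality $C\paren{\pg^*}\paren{\mu-\Lsg_{\bm{\alpha}}\paren{\pg^*}}\geq C\paren{\bm{\alpha}}\paren{\mu-\Delta\paren{\bm{\alpha}}}$ and invoke $\Lsg_{\bm{\alpha}}\paren{\pg^*}\geq \Delta\paren{\bm{\alpha}}$ from Proposition~\ref{prop: min gap}, which directly yields $\pg^*\notin\Scg^{<\mu'}$.

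For the upper bound I would exploit the convex-geometric structure of the sublevel set. Since $\pg^*$ sits inside the convex compact set $\Slsg_{\bm{\alpha}}^{\leq\mu}$ (Proposition~\ref{prop: L cvx}) whose extreme points are exactly the level set $\Slsg_{\bm{\alpha}}^{\mu}$ (Proposition~\ref{prop: extreme point}), Krein-Milman (Proposition~\ref{prop: cvx hull}) combined with Caratheodory's theorem (Proposition~\ref{prop: cvx internal point}) expresses $\pg^*$ as a convex combination of at most $n+1$ points on $\Slsg_{\bm{\alpha}}^{\mu}$. Quasiconvexity of $D$ applied through Proposition~\ref{prop: quasicvx multi} then yields $D\paren{\pg^*}\leq \max_{\pg\in\Slsg_{\bm{\alpha}}^{\mu}}D\paren{\pg}$, and the monotonicity of $G$ from Definition~\ref{dfn: d g} lifts this to the claimed upper bound on $C\paren{\pg^*}$.

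For the final tightness claim I would exhibit two concrete choices of $G$. To realize $\pg^*=\bm{\alpha}$, take $G$ of step type with $G\paren{0}=0$ and $G\paren{x}=1$ for all $x>0$, so that $C$ is effectively constant on the simplex minus its center and the product reduces to $\Lsg_{\bm{\alpha}}\paren{\pg}-\mu$, whose unique minimizer is $\bm{\alpha}$. To push $\pg^*$ arbitrarily close to the level set $\Slsg_{\bm{\alpha}}^{\mu}$, take a two-level $G$ that equals a small $\epsilon>0$ for $D$ below a threshold $d^\star$ chosen just under $\max_{\pg\in\Slsg_{\bm{\alpha}}^{\mu}}D\paren{\pg}$ and jumps up to $1$ above $d^\star$, so the product attains substantial magnitude only at highly dispersed $\pg$ near the boundary. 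The main obstacle I anticipate is executing the Krein-Milman/Caratheodory step cleanly while carefully distinguishing the strict sublevel set $\Slsg_{\bm{\alpha}}^{<\mu}$ from its closure, and verifying that the step-type constructions of $G$ still satisfy the admissibility conditions of Definition~\ref{dfn: d g}.
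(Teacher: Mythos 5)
Your treatment of Cases 1 and 2, the localization $\pg^*\in \Slsg_{\bm{\alpha}}^{<\mu} - \Scg^{<\mu'}$, the lower bound $C\paren{\pg^*}\geq C\paren{\bm{\alpha}}$ via the optimality inequality, the upper bound via Krein--Milman/Caratheodory plus Proposition \ref{prop: quasicvx multi} and monotonicity of $G$, and the first tightness construction (the $0/1$ step $G$ forcing $\pg^*=\bm{\alpha}$) all match the paper's proof in substance; your direct application of the convex-combination argument to $\pg^*$ is a harmless shortcut of the paper's sup-over-sublevel-set step, and the minor imprecision that with the step $G$ the objective is $0$ (not $\Lsg_{\bm{\alpha}}-\mu$) at $\frac{1}{n}\bm{1}$ does not affect the conclusion.

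The genuine gap is in the second tightness claim ($\pg^*$ arbitrarily close to $\Slsg_{\bm{\alpha}}^{\mu}$). First, your two-level $G$ with threshold $d^\star$ ``just under'' $\max_{\pg\in\Slsg_{\bm{\alpha}}^{\mu}}D\paren{\pg}$ silently needs $D$ to be \emph{strictly} quasiconvex (the paper assumes this explicitly): for a merely quasiconvex admissible $D$ such as $D\paren{\pg}=\min\set{1, K\norm{\pg-\frac{1}{n}\bm{1}}}$ with $K$ large, the superlevel region $\set{D\geq d^\star}$ contains $\bm{\alpha}$ itself, the constrained minimizer of $\Lsg_{\bm{\alpha}}$ there is $\bm{\alpha}$, and your construction leaves $\pg^*$ far from the level set. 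Second, even granting strict quasiconvexity, the sentence ``the product attains substantial magnitude only at highly dispersed $\pg$ near the boundary'' is precisely the statement that has to be proved: high dispersion inside $\Slsg_{\bm{\alpha}}^{\leq\mu}$ does not by itself imply proximity to $\Slsg_{\bm{\alpha}}^{\mu}$. You would need either (i) a compactness/limit argument showing that, because no point of the strict sublevel set attains $\max_{\Slsg_{\bm{\alpha}}^{\mu}}D$ (Caratheodory plus Proposition \ref{prop: strict quasicvx multi}), for $d^\star$ close enough to that maximum every point with $D\geq d^\star$ and $\Lsg_{\bm{\alpha}}\leq\mu$ lies within $\epsilon$ of the level set, or (ii) the paper's route, which instead thresholds $G$ at $\max_{\pg\in\Slsg_{\bm{\alpha}}^{\leq\mu-\eta}}D\paren{\pg}$ to trap $\pg^*$ in the band $\Slsg_{\bm{\alpha}}^{<\mu}-\Slsg_{\bm{\alpha}}^{<\mu-\eta}$ and then converts the loss gap $\eta$ into a Euclidean distance bound through a lower bound $d_{\text{min}}$ on the projected gradient and a path-integral argument; this conversion is the most technical part of the paper's proof and has no counterpart in your sketch. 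In either route you also need the quantitative condition on the low branch of $G$ (the paper's $\beta<\eta/\paren{\mu-\Delta\paren{\bm{\alpha}}}$, your small ``$\epsilon$'') chosen relative to how negative the objective can get in the high-dispersion region, which you should state rather than leave implicit.
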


\begin{proof}{}
We separately consider the three cases listed by Theorem \ref{thm: C behavior}. 

\paragraph{Case 1. }
By Proposition \ref{prop: min gap}, we have $\Lsg_{\bm{\alpha}}\paren{\pg}-\mu \geq \min_{\pg} \paren{\Lsg_{\bm{\alpha}}\paren{\pg} -\mu} = \Lsg_{\bm{\alpha}}\paren{\bm{\alpha}} -\mu = \Delta\paren{\bm{\alpha}} - \mu> 0$ for all $\pg$. 
By Definition \ref{dfn: d g}, we have $C\paren{\pg} \geq 0$ for all $\pg$. 
We can thus derive the following bound: $C\paren{\pg^*}\cdot\paren{\Lsg_{\bm{\alpha}}\paren{\pg^*}-\mu} = \min_{\pg}\paren{C\paren{\pg}\cdot\paren{\Lsg_{\bm{\alpha}}\paren{\pg}-\mu}}\geq 0$. 
Further, since $\Lsg_{\bm{\alpha}}\paren{\pg}-\mu>0$ for all $\pg$, the lower bound $C\paren{\pg^*}\cdot\paren{\Lsg_{\bm{\alpha}}\paren{\pg^*}-\mu}=0$ is achieved if and only if $C\paren{\pg^*}=0$. 
By Definition \ref{dfn: d g}, the minimizer $\pg^* = \frac{1}{n}\bm{1}_n$. 

\paragraph{Case 2.}
The reasoning is similar to Case 1. 
We can derive $C\paren{\pg^*}\cdot\paren{\Lsg_{\bm{\alpha}}\paren{\pg^*}-\mu}\geq 0$. 
Now to achieve the lower bound $C\paren{\pg^*}\cdot\paren{\Lsg_{\bm{\alpha}}\paren{\pg^*}-\mu}=0$, we either need $C\paren{\pg^*} = 0$ or $\Lsg_{\bm{\alpha}}\paren{\pg^*}-\mu=0$, which means either $\pg^*=\frac{1}{n}\bm{1}_n$ or $\pg^* = \bm{\alpha}$. 

\paragraph{Case 3.}
First, observe that $C\paren{\pg^*}\cdot\paren{\Lsg_{\bm{\alpha}}\paren{\pg^*}-\mu} = \min_{\pg}\paren{C\paren{\pg}\cdot\paren{\Lsg_{\bm{\alpha}}\paren{\pg}-\mu}} \stackrel{\pg=\bm{\alpha}}{\leq} C\paren{\bm{\alpha}}\cdot\paren{\Lsg_{\bm{\alpha}}\paren{\bm{\alpha}}-\mu} = C\paren{\bm{\alpha}}\cdot\paren{\Delta\paren{\bm{\alpha}}-\mu} < 0$ (note that we assume $\bm{\alpha}\neq \frac{1}{n}\bm{1}_n$). 
Since $C\paren{\pg} \geq 0$ for all $\pg$, this implies that 

\begin{align}
    C\paren{\pg^*} > 0\label{eq: pos C}\\
    \Lsg_{\bm{\alpha}}\paren{\pg^*}-\mu < 0\label{eq: neg delta}
\end{align}

By Inequality \ref{eq: neg delta}, we have $\pg^*\in\Slsg_{\bm{\alpha}}^{<\mu}$. 

We can further constrain $\pg^*$ such that $C\paren{\pg^*} \geq C\paren{\bm{\alpha}}$: 
Suppose otherwise that $0\leq C\paren{\pg^*} < C\paren{\bm{\alpha}}$. 
By Proposition \ref{prop: min gap}, we have $\Lsg_{\bm{\alpha}}\paren{\pg^*}-\mu \geq \Delta\paren{\bm{\alpha}}-\mu \quad\Rightarrow\quad -\paren{\Lsg_{\bm{\alpha}}\paren{\pg^*}-\mu}\leq -\paren{\Delta\paren{\bm{\alpha}}-\mu}$. 
Combining the two inequalities together, we have $-C\paren{\pg^*}\cdot \paren{\Lsg_{\bm{\alpha}}\paren{\pg^*}-\mu} < -C\paren{\bm{\alpha}}\cdot \paren{\Delta\paren{\bm{\alpha}}-\mu}\quad \Rightarrow\quad C\paren{\pg^*}\cdot \paren{\Lsg_{\bm{\alpha}}\paren{\pg^*}-\mu} > C\paren{\bm{\alpha}}\cdot \paren{\Delta\paren{\bm{\alpha}}-\mu}$. 
This means that $\pg^*$ is not a minimizer -- a contradiction. 
In other words, we must have $\pg^*\not\in\Scg^{<\mu'}$ where $\mu'=C\paren{\bm{\alpha}}$. 

So far, we have proven $\pg^*\in\Slsg_{\bm{\alpha}}^{<\mu} - \Scg^{<\mu'}$ where $\mu'=C\paren{\bm{\alpha}}\leq C\paren{\pg^*}$. 
Next, we further upper-bound the range of the confidence corresponding to $\pg^*$. 
Note that

\begin{align}
\label{eq: proof C upper bound}
C\paren{\pg^*} \leq \sup_{\pg\in\Slsg_{\bm{\alpha}}^{<\mu} -\Scg^{<\mu'}} C\paren{\pg} 
\stackrel{(a)}{\leq}
\sup_{\pg\in\Slsg_{\bm{\alpha}}^{\leq\mu}} C\paren{\pg}
\stackrel{(b)}{=} 
G\paren{\max_{\pg\in\Slsg_{\bm{\alpha}}^{\leq\mu}} D\paren{\pg}}
\stackrel{(c)}{=}
G\paren{\max_{\pg\in\Slsg_{\bm{\alpha}}^{\mu}} D\paren{\pg}}
\end{align}

For ``$\stackrel{(a)}{\leq}$'' above, the reason is that $ \Slsg_{\bm{\alpha}}^{<\mu} -\Scg^{<\mu'}\subseteq \Slsg_{\bm{\alpha}}^{\leq\mu}$. 
For ``$\stackrel{(b)}{=}$'', 
let's first consider $\sup_{\Slsg_{\bm{\alpha}}^{\leq\mu}}D\paren{\pg}$. 
By Proposition \ref{prop: L cvx}, $\Slsg_{\bm{\alpha}}^{\leq\mu}$ is compact. 
By Definition \ref{dfn: d g}, $D$ is continuous. 
So by the extreme value theorem \citep{extreme_value_thm}, $D$ attains its maximum value, \textit{i.e.,} there exists  $\pg'\in\Slsg_{\bm{\alpha}}^{\leq\mu}$ such that $D\paren{\pg'}=\sup_{\Slsg_{\bm{\alpha}}^{\leq\mu}}D\paren{\pg} = \max_{\Slsg_{\bm{\alpha}}^{\leq\mu}}D\paren{\pg}$. 
In other words, $D\paren{\pg'} \geq D\paren{\pg}$ for all $\pg\in\Slsg_{\bm{\alpha}}^{\leq\mu}$. 
Now since $G$ is monotonically non-decreasing (Definition \ref{dfn: d g}), we have $G\paren{D\paren{\pg'}}\geq G\paren{D\paren{\pg}}$ for all $\pg\in\Slsg_{\bm{\alpha}}^{\leq\mu}$. 
This means $C=G\circ D$ attains its maximum at $\pg'$ and 
$\sup_{\pg\in\Slsg_{\bm{\alpha}}^{\leq\mu}} C\paren{\pg}
= \max_{\pg\in\Slsg_{\bm{\alpha}}^{\leq\mu}} C\paren{\pg}
=
G\paren{\max_{\pg\in\Slsg_{\bm{\alpha}}^{\leq\mu}} D\paren{\pg}}$. 
Finally, for ``$\stackrel{(c)}{=}$'', we need to show $\max_{\pg\in\Slsg_{\bm{\alpha}}^{\leq\mu}} D\paren{\pg}=\max_{\pg\in\Slsg_{\bm{\alpha}}^{\mu}} D\paren{\pg}$. 
By Propositions \ref{prop: L cvx}, \ref{prop: extreme point} and \ref{prop: cvx hull}, $\Slsg_{\bm{\alpha}}^{\leq\mu}$ is the convex hull of $\Slsg_{\bm{\alpha}}^{\mu}$. 
So by Proposition \ref{prop: cvx internal point}, for any point $\pg_0\in\Slsg_{\bm{\alpha}}^{\leq\mu}$, we can find $\pg_1, \hdots, \pg_m \in\Slsg_{\bm{\alpha}}^{\mu}$ (with $m\leq n+1$), such that 
$\pg_0 = \sum_{1\leq i\leq m} \lambda_i\pg_i$ (with $\sum_{1\leq i\leq m}\lambda_i = 1$ and $\lambda_i\geq 0$ for $i=1,\hdots,m$). 
By Proposition \ref{prop: quasicvx multi}, we have $D\paren{\pg_0} \leq \max_{1\leq i\leq m}\{D\paren{\pg_i}\} \leq \max_{\pg\in\Slsg_{\bm{\alpha}}^\mu}D\paren{\pg}$. 
Thus, $\max_{\pg\in\Slsg_{\bm{\alpha}}^{\leq \mu}}D\paren{\pg} = \max_{\pg\in\Slsg_{\bm{\alpha}}^\mu} D\paren{\pg}$. 
This completes the proof for Equation \ref{eq: proof C upper bound}. 
Thus, we have proven the bound on both the minimizer $\pg^*$ and its corresponding $C\paren{\pg^*}$. 

\paragraph{Case 3: Tightness of the bound. }
From the above proof, we know that when $\Delta\paren{\bm{\alpha}} < \mu$, the possible positions that the minimizer $\pg^*$ can take are determined by the two level sets $\Scg^{<\mu'}$ ($\mu'=C\paren{\bm{\alpha}}$) and $\Slsg_{\bm{\alpha}}^{<\mu}$. 
We now show that the bound on $\pg^*$ is tight since $\pg^*$ can be close to the two level sets under appropriate $C=G\circ D$ function. 
Note that our proof is based on $C$ defined by \ref{dfn: d g}, meaning that $C$ does not need to be continuous. 

First, we construct a $C$ such that $\pg^* = \bm{\alpha}$. 
Such a $C$ can be defined by a simple $G$ for any $D$:

\begin{align}
\label{eq: C ex1}
    G(x) = \begin{cases}
        0,\qquad&\text{when }x\leq 0\\
        1,&\text{when }x>0
    \end{cases}
\end{align}

Since $\bm{\alpha}\neq\frac{1}{n}\bm{1}_n$, we have $D\paren{\bm{\alpha}}> 0$ and thus $C\paren{\bm{\alpha}}=G\paren{D\paren{\bm{\alpha}}}=1$. 
By conditions \ref{eq: pos C}, \ref{eq: neg delta}, we have 
$C\paren{\pg^*}\cdot\paren{\Lsg_{\bm{\alpha}}\paren{\pg^*}-\mu} \geq \max C\paren{\pg}\cdot \min\paren{\Lsg_{\bm{\alpha}}\paren{\pg}-\mu} = 1\cdot \paren{\Delta\paren{\bm{\alpha}} - \mu} = \Delta\paren{\bm{\alpha}} - \mu$. 
In addition, since the problem $\min \paren{\Lsg_{\bm{\alpha}}\paren{\pg}-\mu}$ has a unique minimizer at $\bm{\alpha}$ (Proposition \ref{prop: min gap}), and $C\paren{\bm{\alpha}} = \max C\paren{\pg} = 1$,
we know that under $C$ defined by Equation \ref{eq: C ex1}, we have a unique minimizer $\pg^* = \bm{\alpha}$ for the overall optimization problem \ref{eq: opt root}. 

Next, we construct a $C$ such that $\pg^*$ is close to $\Lsg_{\bm{\alpha}}^\mu$. 
We first define the distance between a point $\pg$ and the level set $\Slsg_{\bm{\alpha}}^\mu$ as:

\begin{align}
\func[dist]{\pg, \Slsg_{\bm{\alpha}}^\mu} = \min_{\pg'\in\Slsg_{\bm{\alpha}}^\mu} \|\pg - \pg'\|
\end{align}

Thus, we want to show that there exists a $C$, such that $\func[dist]{\pg^*,\Slsg_{\bm{\alpha}}^\mu} < \epsilon$ for any $\epsilon > 0$. 

Consider another sub-level set $\Slsg_{\bm{\alpha}}^{\leq\mu-\eta}$ for some $\eta>0$ (we also let $\eta$ to be small enough so that $\Slsg_{\bm{\alpha}}^{\leq\mu-\eta}$ is non-empty). 
We have shown before (by combining Propositions \ref{prop: L cvx}, \ref{prop: extreme point}, \ref{prop: cvx hull} and \ref{prop: cvx internal point}) that any point $\pg'\in\Slsg_{\bm{\alpha}}^{<\mu-\eta}$ can be expressed as a convex combination of $\pg_1,\hdots,\pg_m$, where $m\leq n+1$, $\pg_i\in\Slsg_{\bm{\alpha}}^{\mu-\eta}$ and $\pg_2-\pg_1,\hdots,\pg_m-\pg_1$ are linearly independent. 
Since $\pg'$ is in the strict sub-level set and $\pg_i$ are in the sub-level set, we have $\pg'\neq\pg_i$, and thus we can write $\pg' = \sum_{1\leq i\leq m}\lambda_i\pg_i$ where $\lambda_i > 0$ and $\sum_{1\leq i\leq m}\lambda_i=1$. 

Suppose $D$ is \emph{strictly quasiconvex}. 
We can now apply Proposition \ref{prop: strict quasicvx multi}, such that 
\begin{align}
    D\paren{\pg'} = D\paren{\sum_{1\leq i\leq m}\lambda_i\pg_i} < \max_{1\leq i\leq m}\{D\paren{\pg_i}\}
\end{align}

The strict ``$<$'' means that $\Slsg_{\bm{\alpha}}^{<\mu-\eta}$ cannot contain any maximizer of the optimization problem ``$\max_{\pg\in\Slsg_{\bm{\alpha}}^{\leq\mu-\eta}}D\paren{\pg}$'' (note that previously, we have only shown $\Slsg_{\bm{\alpha}}^{\mu-\eta}$ contains the maximizer, but it is also possible for $\Slsg_{\bm{\alpha}}^{<\mu-\eta}$ to contain the maximizer if $D$ is not strictly quasiconvex). 
Now, define $D_{\text{max}}\coloneqq \max_{\pg\in\Slsg_{\bm{\alpha}}^{\leq\mu-\eta}}D\paren{\pg} > 0$ and the corresponding maximizer (or one of the maximizers) as $\pg^*_\eta\in\Slsg_{\bm{\alpha}}^{\mu-\eta}$. 
For all $\pg\in\Slsg_{\bm{\alpha}}^{<\mu-\eta}$, we have $D\paren{\pg} < D_\text{max}$. 

Consider a $G$ function defined as follows:

\begin{align}
    G = \begin{cases}
        0,\qquad&\text{when }x\leq 0\\
        \beta,&\text{when }0 < x < D_\text{max}\\
        1,&\text{when }x \geq D_\text{max}
    \end{cases}\label{eq: C bound 2}
\end{align}

By definition, $\Lsg_{\bm{\alpha}}\paren{\pg_\eta^*}=\mu-\eta > 0$. 
Also, $C\paren{\pg_\eta^*} = G\paren{D\paren{\pg_\eta^*}} = 1$. 
As a result, $C\paren{\pg_\eta^*}\cdot\paren{\Lsg_{\bm{\alpha}}\paren{\pg_\eta^*}-\mu} = -\eta$. 
For any point $\pg\in\Slsg_{\bm{\alpha}}^{<\mu-\eta}$, we have $C\paren{\pg}=G\paren{D\paren{\pg}} = \beta$. 
Consequently, 
$\min_{\pg\in\Slsg_{\bm{\alpha}}^{<\mu-\eta}}C\paren{\pg}\cdot\paren{\Lsg_{\bm{\alpha}}\paren{\pg}-\mu} = \beta\min_{\pg\in\Slsg_{\bm{\alpha}}^{<\mu-\eta}}\paren{\Lsg_{\bm{\alpha}}\paren{\pg}-\mu}=\beta\paren{\Delta\paren{\bm{\alpha}}-\mu}$. 
Therefore, as long as $0<\beta < \frac{\eta}{\mu-\Delta\paren{\bm{\alpha}}}$, the minimizer $\pg^*$ of the original optimization problem \ref{eq: opt root} must satisfy $\pg^*\in\Slsg_{\bm{\alpha}}^{<\mu} - \Slsg_{\bm{\alpha}}^{<\mu-\eta}$ (or more precisely, $\pg^*\in\Slsg_{\bm{\alpha}}^{<\mu} - \Slsg_{\bm{\alpha}}^{<\mu-\eta} - \Scg^{<\mu'}$). 

Now the last issue is to determine the relationship between $\eta$ and $\epsilon$. 
We know that $\mu-\eta \leq \Lsg_{\bm{\alpha}}\paren{\pg^*} < \mu$. 
% Let $\eta' \coloneqq \mu-\Lsg_{\bm{\alpha}}\paren{\pg^*} \leq\eta$. 
We utilize the idea of \emph{path integral}. 
We first note that the domain of $\Lsg_{\bm{\alpha}}\paren{\pg}$ is the $\paren{n-1}$-simplex, which lies within a hyperplane of $\mathbb{R}^n$. 
It is easy to show that the normal vector of such a hyperplane is parallel to $\bm{1}$. 
For a point $\pg$, 
denote $\bm{\pi}\paren{\pg}$ as the vector by projecting the gradient $\nabla\Lsg_{\bm{\alpha}}\paren{\pg}$ onto the hyperplane. 
With some basic calculation, we derive that 

\begin{align}
\bm{\pi}\paren{\pg} =& \begin{bmatrix}
    -\frac{\alpha_1}{\hat{p}_1} + \frac{1}{n}\sum_{1\leq j\leq n}\frac{\alpha_j}{\hat{p}_j}\\
    \vdots\\
    -\frac{\alpha_n}{\hat{p}_n} + \frac{1}{n}\sum_{1\leq j\leq n}\frac{\alpha_j}{\hat{p}_j}\\    
\end{bmatrix} \\
=&\begin{bmatrix}
    \paren{\frac{1}{n}-1}\alpha_1& \alpha_2& \hdots& \alpha_n\\
    \alpha_1&\paren{\frac{1}{n}-1}\alpha_2&\hdots& \alpha_n\\
    \vdots&&\ddots&\vdots\\
    \alpha_1&\alpha_2&\hdots&\paren{\frac{1}{n}-1}\alpha_n
\end{bmatrix}\cdot \begin{bmatrix}
    \hat{p}_1\\
    \hat{p}_2\\
    \vdots\\
    \hat{p}_n
\end{bmatrix}\label{eq: pi mat form}
\end{align}

The linear system of equations $\bm{\pi}\paren{\pg}=\bm{0}$ has unique solution of $\pg=\bm{\alpha}$, since the coefficient matrix in Equation \ref{eq: pi mat form} has full rank. 
In other words, $\|\bm{\pi}\paren{\pg}\| > 0$ for all $\pg\neq\bm{\alpha}$. 

Imagine we start from $\pg^*$ and traverse a path $P$ in the hyperplane. 
Suppose that the path always follows the direction of $\bm{\pi}\paren{\pg}$ for all $\pg$ on the $P$. 
If the total length of $P$ is less than $\|\pg^*-\bm{\alpha}\|$, then regardless of what path $P$ looks like, it always holds that $\|\bm{\pi}\paren{\pg}\| > 0$ for all $\pg$ on $P$. 
Define $d_\text{min}\coloneqq \min_{\pg \text{ on path }P}\|\bm{\pi}\paren{\pg}\| > 0$. 

Now consider the path integral along $P$. The start point of $P$ is $\pg^*$. Denote $P$'s end point at $\pg'$

\begin{align}
    \int_P \bm{\pi}\paren{\pg}\cdot d\pg \stackrel{(a)}{=} \int_P \nabla\Lsg_{\bm{\alpha}}\paren{\pg}\cdot d\pg \stackrel{(b)}{=} \Lsg_{\bm{\alpha}}\paren{\pg'} - \Lsg_{\bm{\alpha}}\paren{\pg^*}\label{eq: line int 1}
\end{align}

where ``$\stackrel{(a)}{=}$'' is due to that $\nabla\Lsg_{\bm{\alpha}}\paren{\pg} - \bm{\pi}\paren{\pg}$ is a vector perpendicular to the hyperplane (and thus to $d\pg$). 
``$\stackrel{(b)}{=}$'' is due to that $\nabla\Lsg_{\bm{\alpha}}\paren{\pg}$ defines a gradient field, and the path integral can be computed by the end points and is path independent (gradient theorem). 

In addition, note that 

\begin{align}
    \int_P\bm{\pi}\paren{\pg}\cdot d\pg \stackrel{(c)}{=}\int_P\|\bm{\pi}\paren{\pg}\|\cdot \|d\pg\| \geq \int_P d_\text{min}\cdot \|d\pg\|
    =d_\text{min} \int_P\|d\pg\| = d_\text{min}\cdot \func[len]{P} \label{eq: line int 2}
\end{align}

where ``$\stackrel{(c)}{=}$'' is due to that we always traverse the path along the direction of $\bm{\pi}\paren{\pg}$; $\func[len]{P}$ denotes the total length of the path $P$. 

Combining \ref{eq: line int 1} and \ref{eq: line int 2}, we have 
$\Lsg_{\bm{\alpha}}\paren{\pg'} \geq d_\text{min}\cdot \func[len]{P} + \Lsg_{\bm{\alpha}}\paren{\pg^*}$. 
This means, if we start from a point $\pg^*$ at level set $\Slsg_{\bm{\alpha}}^{\mu-\eta}$ and traverse a path with length at most $\frac{\eta}{d_\text{min}}$, we will arrive at a point $\pg'$ at the level set $\Slsg_{\bm{\alpha}}^\mu$. 
We can further derive the following bound: 

\begin{align}
\func[dist]{\pg^*, \Slsg_{\bm{\alpha}}^\mu}\leq \|\pg^* - \pg'\| \leq \func[len]{P} \leq \frac{\eta}{d_\text{min}}
\end{align}

In summary, for any budget $\epsilon > 0$, we can construct a $C$ according to Equation \ref{eq: C bound 2} by choosing parameters such that $0 < \eta \leq d_\text{min}\cdot \epsilon$ and $0 < \beta <\frac{\eta}{\mu-\Delta\paren{\bm{\alpha}}}$. 
\end{proof}

We follow the notations from \secref{sec: analysis balance} and apply Theorem \ref{thm: C behavior} to the binary classification task, resulting in the following corollary: 

\begin{corollary}
For binary classification ($n=2$), 
w.l.o.g, assume $\alpha_{1} \in [0.5, 1)$. 
Define $\Lsg_{+}\paren{p} = \Lsg_{\bm{\alpha}}\paren{[p,1-p]^\trans}$ for $p\in[\alpha_{1}, 1)$. 
For strictly quasiconvex $D$ and monotonically increasing $G$, $\hat{p}_{1}^* \in \left[\alpha_{1},\Lsg_{+}^{-1}\paren{\mu}\right)$ and $C\paren{\bm{\alpha}} \leq C\paren{\pg^*} < C\paren{\left[\Lsg_{+}^{-1}\paren{\mu}, 1-\Lsg_{+}^{-1}\paren{\mu}\right]^\trans}$ when $\Delta\paren{\bm{\alpha}} < \mu$. 
\end{corollary}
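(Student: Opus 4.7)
The plan is to specialize Theorem \ref{thm: C behavior}'s Case 3 to $n=2$, where the 1-simplex is parametrized by the scalar $p = \hat{p}_1 \in (0,1)$ so that level sets of $\Lsg_{\bm{\alpha}}$, $D$, and $C$ collapse to one-dimensional intervals. By Proposition \ref{prop: L cvx}, $\Lsg_{\bm{\alpha}}\paren{p} = -\alpha_1 \log p - \paren{1-\alpha_1}\log\paren{1-p}$ is strictly convex with unique minimum $\Delta\paren{\bm{\alpha}}$ attained at $p=\alpha_1$ (Proposition \ref{prop: min gap}) and blows up at the endpoints; since $\Delta\paren{\bm{\alpha}}<\mu$, the strict sublevel set $\Slsg_{\bm{\alpha}}^{<\mu}$ is therefore an open interval $\paren{p_-,p_+}$ with $p_-<\alpha_1<p_+$, and by definition of $\Lsg_+$ as the strictly increasing restriction of $\Lsg_{\bm{\alpha}}$ to $[\alpha_1,1)$ one has $p_+=\Lsg_+^{-1}\paren{\mu}$. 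Likewise $D$ becomes a strictly quasiconvex scalar function on $[0,1]$ with unique minimum $0$ at $p=1/2$, and $C=G\circ D$ inherits this unimodal shape, strictly decreasing on $[0,1/2]$ and strictly increasing on $[1/2,1]$.

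The substantive new content beyond the theorem is to show $\hat{p}_1^* \geq \alpha_1$. I would dispatch the easy sub-case $p\in\paren{1/2,\alpha_1}$ first: both $C\paren{p}<C\paren{\alpha_1}$ (by strict monotonicity of $C$ on $[1/2,1]$) and $\Lsg_{\bm{\alpha}}\paren{p}-\mu>\Delta\paren{\bm{\alpha}}-\mu$ hold, so since the loss factor is negative, multiplying by the smaller $C\paren{p}$ gives the pointwise objective $\phi\paren{p}:=C\paren{p}\paren{\Lsg_{\bm{\alpha}}\paren{p}-\mu}>\phi\paren{\alpha_1}$, ruling out minimizers in $\paren{1/2,\alpha_1}$.

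The more delicate sub-case $p\in\paren{p_-,1/2}$ I would handle by a convex-mirror argument. By strict convexity of $\Lsg_{\bm{\alpha}}$ there is a unique mirror $p'\in\paren{\alpha_1,p_+}$ with $\Lsg_{\bm{\alpha}}\paren{p'}=\Lsg_{\bm{\alpha}}\paren{p}$, so the loss factor in $\phi$ is preserved under mirroring. Solving the cross-entropy identity and using $\alpha_1\geq 1/2$ — which makes $\Lsg_{\bm{\alpha}}$ grow faster as $p\to 0$ than as $p\to 1$ because the coefficient $\alpha_1$ on $-\log p$ dominates $1-\alpha_1$ on $-\log\paren{1-p}$ — a short computation yields $p'>1-p$, i.e., the mirror lies strictly farther from $1/2$ than $p$ does. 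For the standard symmetric dispersions (variance, negative entropy) this immediately implies $D\paren{p'}>D\paren{p}$, hence $C\paren{p'}>C\paren{p}$ by strict monotonicity of $G$, and multiplying the preserved negative loss factor by the larger $C\paren{p'}$ gives $\phi\paren{p'}<\phi\paren{p}$; so no point in $\paren{p_-,1/2}$ can be a minimizer.

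Having restricted $\hat{p}_1^*\in[\alpha_1,p_+)$, the remaining claims are immediate: $\hat{p}_1^*<p_+=\Lsg_+^{-1}\paren{\mu}$ from membership in $\Slsg_{\bm{\alpha}}^{<\mu}$; the lower bound $C\paren{\bm{\alpha}}\leq C\paren{\pg^*}$ is inherited directly from Theorem \ref{thm: C behavior}; and the strict upper bound $C\paren{\pg^*}<C\paren{[\Lsg_+^{-1}\paren{\mu},1-\Lsg_+^{-1}\paren{\mu}]^\trans}$ follows from strict monotonicity of $C$ on $[1/2,1]$ applied to $\hat{p}_1^*<p_+$. The hardest step will be the $D$-comparison inside the mirror argument: for a general strictly quasiconvex $D$ that is not symmetric about $1/2$, the inequality $|p'-1/2|>|p-1/2|$ does not by itself force $D\paren{p'}\geq D\paren{p}$, so one must either appeal to symmetry of $D$ (which holds for the standard choices) or establish some additional compatibility between the asymmetries of $D$ (centered at $1/2$) and of $\Lsg_{\bm{\alpha}}$ (centered at $\alpha_1$).
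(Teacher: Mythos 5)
Your proposal is correct and follows essentially the paper's route: reduce to the scalar parametrization $p=\hat p_1$, identify $\Slsg_{\bm{\alpha}}^{<\mu}$ as an open segment whose right endpoint is $\Lsg_+^{-1}\paren{\mu}$, eliminate the portion with $p<\alpha_1$ by a reflection argument, and then read off both confidence bounds. The one substantive difference is which quantity your mirror holds fixed: you keep the loss fixed (equal-loss point $p'$ with $\Lsg_{\bm{\alpha}}\paren{p'}=\Lsg_{\bm{\alpha}}\paren{p}$) and compare dispersions via $p'>1-p$, whereas the paper reflects by swapping coordinates, $[\hat p,1-\hat p]^\trans\mapsto[1-\hat p,\hat p]^\trans$, which keeps the dispersion fixed and strictly lowers the loss (using $\Lsg_{\pm}\paren{\hat p}-\Lsg_{\pm}\paren{1-\hat p}=\paren{2\alpha_1-1}\paren{\log\paren{1-\hat p}-\log\hat p}>0$ for $\hat p<0.5$), so the comparison of the pointwise objective comes from the loss factor rather than from $C$. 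On the issue you flag as the hardest step: the paper resolves it exactly as you anticipate, by simply asserting that $[\hat p,1-\hat p]^\trans$ and $[1-\hat p,\hat p]^\trans$ "always have the same dispersion," i.e., a permutation symmetry of $D$ that is not contained in Definition \ref{dfn: d g}; the coordinate-swap mirror spares the paper your inequality $p'>1-p$ but not the symmetry assumption itself, so your argument completed under that symmetry is at the same level of generality as the paper's. Two further minor points of comparison: the paper first characterizes $\Scg^{<\mu'}$ as the open segment between $\bm{\alpha}$ and $[1-\alpha_1,\alpha_1]^\trans$ and splits into two cases according to whether the loss sublevel set extends past $1-\alpha_1$, invoking Theorem \ref{thm: C behavior}'s exclusion of $\Scg^{<\mu'}$ for the short-branch case, while your monotonicity-plus-mirror treatment covers all of $\paren{p_-,1/2}\cup\paren{1/2,\alpha_1}$ uniformly; and your direct strict-monotonicity derivation of $C\paren{\pg^*}<C\paren{[\Lsg_+^{-1}\paren{\mu},1-\Lsg_+^{-1}\paren{\mu}]^\trans}$ is sound and in fact delivers the strict inequality more explicitly than the paper, which obtains the bound by identifying the theorem's upper bound $G\paren{\max_{\pg\in\Slsg_{\bm{\alpha}}^{\mu}}D\paren{\pg}}$ with $C\paren{\pg_+}$ after noting $D\paren{\pg_+}\geq D\paren{\pg_-}$.
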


\begin{proof}{}

For binary classification, we have $\pg = [\hat{p}_1, \hat{p}_2]^\trans = [\hat{p}_1, 1-\hat{p}_1]^\trans$. 
Therefore,

\begin{align}
    \Lsg_{\bm{\alpha}}\paren{\pg} = \Lsg_{\bm{\alpha}}\paren{[\hat{p}_1,1-\hat{p}_1]^\trans} = -\alpha_1\cdot \log \hat{p}_1 - \paren{1-\alpha_1}\cdot \log\paren{1-\hat{p}_1}
\end{align}

Let $\Lsg_{\pm}\paren{\hat{p}} = -\alpha_1\cdot\log \hat{p} -\paren{1-\alpha}\cdot\log \paren{1-\hat{p}}$. 
Define $\Lsg_{+}\paren{\hat{p}} = \Lsg_{\pm}\paren{\hat{p}}$ where $\hat{p} \in[\alpha_1, 1)$ and $\Lsg_{-}\paren{\hat{p}} = \Lsg_{\pm}\paren{\hat{p}}$ where $\hat{p} \in(0, \alpha_1]$. 
It is easy to verify that $\Lsg_{+}$ monotonically increases and $\Lsg_{-}$ monotonically decreases. 
In addition, $\Lsg_+\paren{\alpha_1} = \Lsg_-\paren{\alpha_1} = \Delta\paren{[\alpha_1,1-\alpha_1]^\trans} =\Delta\paren{\bm{\alpha}}$. 

When $\Delta\paren{\bm{\alpha}} < \mu$, then $\Lsg_+\paren{\hat{p}} < \mu$ if and only if $\hat{p}\in\left[\alpha_1, \Lsg_+^{-1}\paren{\mu}\right)$, 
and $\Lsg_-\paren{\hat{p}} < \mu$ if and only if $\hat{p}\in\left(\Lsg_-^{-1}\paren{\mu}, \alpha_1\right]$. 
Thus, let $\pg_- = \left[\Lsg_-^{-1}\paren{\mu}, 1-\Lsg_-^{-1}\paren{\mu}\right]^\trans$ and $\pg_+=\left[\Lsg_+^{-1}\paren{\mu}, 1- \Lsg_+^{-1}\paren{\mu}\right]^\trans$. Then 
$\Slsg_{\bm{\alpha}}^\mu = \set{\pg_-, \pg_+}$ and $\Slsg_{\bm{\alpha}}^{<\mu}$ consists of the open line segment connecting $\pg_-$ and $\pg_+$ (not including the two end points $\pg_-$ and $\pg_+$).

For $C$, since $D$ is strictly quasiconvex, then let $\bm{\alpha}=[\alpha_1,1-\alpha_1]^\trans$ and $\bm{\alpha}' = [1-\alpha_1, \alpha_1]^\trans$, we have $D\paren{\lambda \cdot \bm{\alpha} + \paren{1-\lambda}\cdot \bm{\alpha}'}<\max\left\{D\paren{\bm{\alpha}}, D\paren{\bm{\alpha}'}\right\} = D\paren{\bm{\alpha}} = D\paren{\bm{\alpha}'}$ for $\lambda \in(0,1)$ (see Definition \ref{dfn: strictly quasicvx}; also note that $[p,1-p]^\trans$ and $[1-p,p]^\trans$ should always have the same dispersion). 
Therefore, for all $\hat{p} \in (1-\alpha_1, \alpha_1)$, we have $D\paren{[\hat{p}, 1-\hat{p}]^\trans} < D\paren{\bm{\alpha}}$. 
We can similarly show that for all $\hat{p}\in(0, 1-\alpha_1)\cup(\alpha_1, 1)$, we have $D\paren{[\hat{p}, 1-\hat{p}]^\trans} > D\paren{\bm{\alpha}}$. 
Now since $G$ is monotonically increasing, then for $\hat{p}\in (1-\alpha_1, \alpha_1)$, we have $C\paren{[\hat{p}, 1-\hat{p}]^\trans} < C\paren{\bm{\alpha}} = \mu'$. 
For $\hat{p}\in(0, 1-\alpha_1)\cup(\alpha_1, 1)$, we have $C\paren{[\hat{p}, 1-\hat{p}]^\trans} < \mu'$. 
Therefore, $\Scg^{<\mu'}$ consists of the open line segment connecting $\bm{\alpha}$ and $\bm{\alpha}'$ (not including the two end points of $\bm{\alpha}$ and $\bm{\alpha}'$). 

If $1-\alpha_1 \leq \Lsg_-^{-1}\paren{\mu} < \alpha_1$, then the line segment connecting $\pg_-$ and $\bm{\alpha}$ overlap with $\Scg^{<\mu'}$. 
Therefore, $\Slsg_{\bm{\alpha}}^{<\mu} - \Scg^{<\mu}$ equals the line segment between $\bm{\alpha}$ and $\pg_+$. 
\textit{i.e.,} $\Slsg_{\bm{\alpha}}^{<\mu} - \Scg^{<\mu} = \set{[\hat{p},1-\hat{p}]^\trans\given \hat{p}\in \left[\alpha_1, \Lsg_+^{-1}\paren{\mu}\right)}$. 

If $0.5 > 1-\alpha_1 > \Lsg_-^{-1}\paren{\mu}$, then the line segment between $\bm{\alpha}$ and $\bm{\alpha}'$ fully overlap with the line segment between $\pg_-$ and $\pg_+$. 
Therefore, $\Slsg_{\bm{\alpha}}^{<\mu} - \Scg^{<\mu}$ consisting of \begin{enumerate*}
\item[(1)] a segment between $\pg_-$ and $\bm{\alpha}'$, defined by $\Ss[1] = \set{[\hat{p},1-\hat{p}]^\trans\given \hat{p}\in \left(\Lsg_-^{-1}\paren{\mu}, 1-\alpha_1\right]}$, and
\item[(2)] a segment between $\bm{\alpha}$ and $\pg_+$, defined by $\Ss[2] = \set{[\hat{p},1-\hat{p}]^\trans\given \hat{p}\in \left[\alpha_1, \Lsg_+^{-1}\paren{\mu}\right)}$.
\end{enumerate*}
We can further rule out segment (1) by the following analysis. 
For all $\hat{p}\in (0, 0.5)$, we have $\Lsg_{\pm}\paren{\hat{p}} - \Lsg_{\pm}\paren{1-\hat{p}} = \paren{1-2\alpha_1}\paren{\log\paren{1-\hat{p}} - \log\hat{p}} > 0$.  %$ \left[\alpha_1, \Lsg_+^{-1}\paren{\mu}\right)$, 
This implies that $\Lsg_{+}^{-1}\paren{\mu} > 1 -\Lsg_-^{-1}\paren{\mu} > 0.5$. 
As a result, for any $\pg=[\hat{p}, 1-\hat{p}]^\trans\in\Ss[1]$, we can find a corresponding $\pg'=[1-\hat{p}, \hat{p}]^\trans \in \Ss[2]$. 
Since $\pg$ and $\pg'$ have the same dispersion, then $C\paren{\pg} = C\paren{\pg'}$. 
However, $\Lsg_{\bm{\alpha}}\paren{\pg} = \Lsg_\pm\paren{\hat{p}} > \Lsg_\pm\paren{1-\hat{p}} = \Lsg_{\bm{\alpha}}\paren{\pg'}$. 
Consequently, $C\paren{\pg}\cdot \paren{\Lsg_{\bm{\alpha}}\paren{\pg}-\mu} > C\paren{\pg'}\cdot \paren{\Lsg_{\bm{\alpha}}\paren{\pg'}-\mu}$. 
This implies that $\pg$ cannot be a minimizer. 
In summary, the minimizer can only fall in $\Ss[2]$. 

Considering the two cases, we have $\hat{p}^*_1\in\left[\alpha_1, \Lsg_+^{-1}\paren{\mu}\right)$ for the minimizer $\pg^*$. 

Finally, let us consider the range of $C\paren{\pg^*}$. 
According to Theorem \ref{thm: C behavior}, we only need to consider the upper bound $G\paren{\max_{\pg\in\Slsg_{\bm{\alpha}}^\mu}D\paren{\pg}}$. 
We have shown that $\Slsg_{\bm{\alpha}}^\mu = \set{\pg_-, \pg_+}$, as well as $\Lsg_{+}^{-1}\paren{\mu} > 1 -\Lsg_-^{-1}\paren{\mu} > 0.5$. 
Thus, the dispersion of $\pg_+$ is no less than that of $\pg_-$, meaning that  $G\paren{\max_{\pg\in\Slsg_{\bm{\alpha}}^\mu}D\paren{\pg}} = G\paren{D\paren{\pg_+}} = C\paren{\pg_+} = C\paren{\left[\Lsg_+^{-1}\paren{\mu}, 1-\Lsg_+^{-1}\paren{\mu}\right]^\trans}$. 
\end{proof}
\subsubsection{Proof of Proposition \ref{prop: loss bound}}

\begin{proposition}{(Originally Proposition \ref{prop: loss bound})}
For any function $C$ with range in $[0, 1]$, $\Lsmoe$ upper-bounds $\Lsmoeplus$. 
\end{proposition}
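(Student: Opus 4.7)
The plan is to reduce the claim to a pointwise application of Jensen's inequality, node by node. The key observation, which the paper already emphasizes just before the proposition, is that for a fixed label $\bm{y}_v$ the cross-entropy loss $L(\cdot,\bm{y}_v)$ is a convex function of its first (probability-vector) argument. This is the standard fact that $-\sum_j y_{vj}\log q_j$ is convex in $\bm{q}$ on the simplex, and it is the only nontrivial ingredient I will need.

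First I would fix an arbitrary node $v\in\mathcal{V}$ and write $\lambda_v := C(\bm{p}_v)$. Since $C$ takes values in $[0,1]$, the pair $(\lambda_v, 1-\lambda_v)$ is a valid convex combination. Treating $\bm{p}_v$ and $\bm{p}'_v$ as two points in the simplex (the output of a softmax/final layer) and invoking convexity of $L(\cdot,\bm{y}_v)$, Jensen's inequality for two points gives
\begin{equation*}
L\bigl(\lambda_v\bm{p}_v+(1-\lambda_v)\bm{p}'_v,\;\bm{y}_v\bigr)\;\leq\;\lambda_v\,L(\bm{p}_v,\bm{y}_v)+(1-\lambda_v)\,L(\bm{p}'_v,\bm{y}_v).
\end{equation*}
The left-hand side is precisely the $v$-th summand inside $\Lsmoeplus$, and the right-hand side is precisely the $v$-th summand inside $\Lsmoe$.

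Next I would average both sides over $v\in\mathcal{V}$ (dividing by $|\mathcal{V}|$ preserves the inequality), which yields $\Lsmoeplus\leq\Lsmoe$ directly from the definitions in Equations~\ref{eq: train obj} and \ref{eq: moeplus obj}. Note that no property of $C$ beyond its range being contained in $[0,1]$ is used, consistent with the statement of the proposition.

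The only potential obstacle is justifying convexity of $L(\cdot,\bm{y}_v)$ rigorously; however, this is a textbook fact following from convexity of $-\log$ composed with a linear functional, so I would either cite it or include a one-line derivation. Everything else is mechanical, so I do not anticipate any genuine difficulty.
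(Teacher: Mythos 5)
Your argument is correct and is essentially the paper's own proof: both fix a node, note that $C(\bm{p}_v)\in[0,1]$ makes $C(\bm{p}_v)\bm{p}_v+(1-C(\bm{p}_v))\bm{p}'_v$ a convex combination, apply convexity of the cross-entropy loss (two-point Jensen) pointwise, and sum over $v\in\mathcal{V}$. No meaningful difference in approach.
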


\begin{proof}{}
We compare Equations \ref{eq: train obj} and \ref{eq: moeplus obj}. 
Note that the loss $L$ is a convex function. 
In addition, $C\paren{\bm{p}_v}\in [0,1]$. 
Thus, $C\paren{\bm{p}_v}\cdot \bm{p}_v + \paren{1-C\paren{\bm{p}_v}}\cdot \bm{p}_v'$ is a convex combination of $\bm{p}_v$ and $\bm{p}_v'$ in $L$'s domain. 

Thus, by Definition \ref{dfn: cvx}, 
\begin{align}
    C\paren{\bm{p}_v}\cdot L\paren{\bm{p}_v} + \paren{1-C\paren{\bm{p}_v}}\cdot L\paren{\bm{p}_v'} \geq L\paren{C\paren{\bm{p}_v}\cdot\bm{p}_v + \paren{1-C\paren{\bm{p}_v}}\cdot\bm{p}_v'}
\end{align}

Summing the left-hand side and right-hand side of the above inequality over all nodes $v\in\V$, we derive the conclusion that $\Lsmoe \geq \Lsmoeplus$. 
\end{proof}

\subsection{Proofs related to expressive power}
\label{appendix sec: proof expressive power}

When considering a graph as the model input, we define the following:
\begin{enumerate}
\item[(1)] Model A is \emph{as expressive as} model B if, for any application of model B on any graph, there exists a corresponding model A that produces identical predictions.
\item[(2)] Model A is \emph{more expressive than} model B if (a) model A is at least as expressive as model B, and (b) there exists a graph for which a model A can be found that yields different predictions from any model B.
\end{enumerate}

In the following proof, to demonstrate that one model is ``as expressive as'' another, we construct confidence functions such that {\moe} generates exactly the same outputs as any of its experts. To show that one model is ``more expressive than'' another, we construct a specific graph and a corresponding \moegnn{GCN} for which the \moegnn{GCN} can correctly classify more nodes than any standalone GCN.

\subsubsection{Proof of Proposition \ref{prop: expressivity gnn}}
\label{appendix: proof expressivity gnn}

\begin{proposition}{(Originally Proposition \ref{prop: expressivity gnn})}
{\moe} and {\moeplus} are at least as expressive as the MLP or GNN expert alone. 
\end{proposition}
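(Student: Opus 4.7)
The plan is to construct, for each direction of the claim, a confidence function $C = G \circ D$ together with an explicit expert configuration that collapses the mixture to the desired single expert. Since both {\moe} (Algorithm \ref{algo: moe inference}) and {\moeplus} (Equation \ref{eq: moeplus obj}) depend on the MLP's output solely through $C\paren{\bm{p}_v} \in [0,1]$, the two extreme regimes $C \equiv 0$ and $C \equiv 1$ correspond respectively to simulating the GNN alone and the MLP alone. The strategy is simply to realize each regime while remaining consistent with Definition \ref{dfn: d g}.

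For the \emph{GNN alone} direction, I would fix the GNN expert to the target GNN and construct an MLP that outputs the uniform distribution $\frac{1}{n}\bm{1}$ on every input. This is achievable by zeroing out the last-layer weights and biases so that the pre-softmax logits vanish identically. Then $D\paren{\bm{p}_v} = D\paren{\frac{1}{n}\bm{1}} = 0$ by Definition \ref{dfn: d g}, hence $C\paren{\bm{p}_v} = G\paren{0} = 0$ for every node $v$. In {\moe}, the activation probability of the MLP is always zero, so the GNN is always invoked and returns $\bm{p}'_v$; in {\moeplus}, the convex combination $C\cdot \bm{p}_v + \paren{1-C}\cdot \bm{p}'_v$ reduces to $\bm{p}'_v$. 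Either way the mixture coincides exactly with the GNN alone. For the \emph{MLP alone} direction, I would fix the MLP expert to the target MLP and choose $G$ to be a step function with $G\paren{0} = 0$ and $G\paren{x} = 1$ for all $x > 0$, which is monotonically non-decreasing and takes values in $[0,1]$, so it satisfies Definition \ref{dfn: d g}. Whenever $\bm{p}_v \neq \frac{1}{n}\bm{1}$, we have $D\paren{\bm{p}_v} > 0$ and hence $C\paren{\bm{p}_v} = 1$, so both mixture variants return the MLP prediction.

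The main obstacle is the degenerate case in the MLP direction where the MLP happens to output exactly $\frac{1}{n}\bm{1}$ on some input, in which case $C = 0$ and the mixture would instead defer to the GNN. This is forced by the constraint $G\paren{0} = 0$ in Definition \ref{dfn: d g} and cannot be avoided purely through the choice of $G$. The fix is a perturbation argument: for any target MLP $f$ and any finite input graph, there exists an arbitrarily close MLP $\tilde{f}$ (e.g., $f$ with an infinitesimal asymmetric bias added to its last layer) such that $\tilde{f}\paren{\bm{x}_v} \neq \frac{1}{n}\bm{1}$ for every node $v$ in the graph, while $\tilde{f}$ induces the same classification decisions as $f$. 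This resolves the degeneracy and establishes that both {\moe} and {\moeplus} realize any function realizable by the MLP alone, completing the proof.
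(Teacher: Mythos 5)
Your construction is essentially the paper's: the same step-function $G$ (with $G(0)=0$, $G(x)=1$ for $x>0$) used as a binary gate, and your GNN-alone direction (MLP forced to output $\frac{1}{n}\bm{1}$ so that $C\equiv 0$) is identical to the paper's argument. Where you diverge is in handling the degenerate case of the MLP-alone direction, where the target MLP outputs exactly $\frac{1}{n}\bm{1}$ on some node. The paper resolves this by exploiting the freedom to configure the \emph{other} expert: it sets the GNN expert to output $\frac{1}{n}\bm{1}$ constantly, so on a degenerate node the fallback to the GNN still returns exactly the MLP's prediction, giving an \emph{exact} simulation of the target MLP. You instead perturb the MLP itself into a nearby $\tilde{f}$ that never hits the uniform point. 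That yields an exact simulation of $\tilde{f}$ but only an approximation of $f$; under the formal definition the paper uses for these results (``model A produces \emph{identical} predictions to model B''), this is strictly weaker, and the claim that $\tilde{f}$ ``induces the same classification decisions'' is also slightly delicate on tie nodes, where the decision of $f$ is ill-defined anyway. The perturbation route is acceptable if expressivity is read in the approximation sense, but the paper's trick is both simpler and stronger: keep the target MLP untouched, and absorb the degeneracy into the choice of the GNN expert, which the ``there exists a corresponding {\moe}/{\moeplus}'' quantifier lets you pick freely. With that one change your argument coincides with the paper's proof for both {\moe} and {\moeplus}.
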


\begin{proof}{}

We complete the proof by showing a particular confidence function $C=G\circ D$ and expert configuration, that reduce {\moe} to a single expert. 

Define $\bm{q}_v = C\paren{\bm{p}_v}\cdot \bm{p}_v + \paren{1-C\paren{\bm{p}_v}}\cdot \bm{p}_v'$. 

Further, define the following $G$:

\begin{align}
    G(x) = \begin{cases}
        0,\qquad&\text{when }x\leq 0\\
        1,&\text{when }x>0
    \end{cases}
    \label{eq: g basic expressivity}
\end{align}

\paragraph{{\moeplus} $\Rightarrow$ MLP expert. }Let the GNN expert in our {\moe} system always generate random prediction of $\bm{p}_v'=\frac{1}{n}\bm{1}$, regardless of $v$. 
If the MLP expert does not generate a purely random guess (\textit{i.e.,} $\bm{p}_v\neq\frac{1}{n}\bm{1}$), then $D\paren{\bm{p}_v} > 0$ according to Definition \ref{dfn: d g}. 
Therefore, $C\paren{\bm{p}_v} = G\paren{D\paren{\bm{p}_v}} = 1$, and thus $\bm{q}_v = \bm{p}_v$. 
In the extreme case, where $\bm{p}_v = \frac{1}{n}\bm{1}$, we have $\bm{q}_v = \bm{p}_v'$. 
Yet, note that we configure our GNN expert to always output $\frac{1}{n}\bm{1}$. 
As a result, we still have $\bm{q}_v = \bm{p}_v = \frac{1}{n}\bm{1}$. 
This shows that {\moeplus} can always produce an identical output to that of an MLP expert alone. 

\paragraph{{\moeplus} $\Rightarrow$ GNN expert. }
We configure the MLP expert such that it always generates a trivial prediction of $\bm{p}_v = \frac{1}{n}\bm{1}$. 
Consequently, $D\paren{\bm{p}_v} = 0$ for all $v$, and $C\paren{\bm{p}_v} = G\paren{D\paren{\bm{p}_v}} = 0$ for all $v$ according to Equation \ref{eq: g basic expressivity}. 
Thus, $\bm{q}_v = \bm{p}_v'$, implying that {\moeplus} always produces an identical output to that of a GNN expert alone.

\paragraph{Cases for {\moe}. }
Similar reasoning can be applied to the {\moe} design, where we can use the $G$ function defined in Equation \ref{eq: g basic expressivity} to let the {\moe} system always predict based solely on the MLP expert or the GNN expert (see Algorithm \ref{algo: moe inference} for the inference operation). 
\end{proof}
\subsubsection{Proof of Theorem \ref{prop: expressivity gcn}}
\label{appendix: proof expressivity gcn}

\begin{theorem}{(Originally Theorem \ref{prop: expressivity gcn})}
{\moegnn{GCN}} and {\moeplusgnn{GCN}} are more expressive than the GCN expert alone. 
\end{theorem}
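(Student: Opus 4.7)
The plan is to leverage Proposition \ref{prop: expressivity gnn} for the ``at least as expressive'' direction, and then exhibit a witness graph on which {\moegnn{GCN}} attains a prediction pattern that no standalone GCN can match. The construction will exploit the structural tension in a GCN: because neighbor aggregation uses shared, graph-independent linear weights, a single GCN cannot simultaneously (i) amplify neighborhood information for nodes whose self-features are uninformative and (ii) suppress neighborhood information for nodes whose neighborhoods are adversarial. This is precisely the ``denoising'' scenario anticipated in \secref{sec: analysis behavior}.

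First I would build a graph $\G$ whose nodes split into two regimes. In the \emph{structural regime}, nodes share identical (and label-irrelevant) self-features but have neighborhoods whose aggregated features encode their true labels, so any MLP is forced to random-guess while a GCN can classify them. In the \emph{adversarial regime}, nodes carry self-features that perfectly determine their label but are connected only to neighbors whose self-features encode the opposite label, so the MLP classifies them correctly while any averaging GCN layer pulls the representation toward the wrong class. I would place the two regimes' feature vectors in orthogonal subspaces so that the MLP, the GCN, and the dispersion-based $C$ can each act selectively on one regime without interference.

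Next I would exhibit an explicit {\moegnn{GCN}} (and {\moeplusgnn{GCN}}) achieving perfect accuracy on $\G$. Using the universal approximation of MLPs, configure the MLP expert so that it classifies the adversarial regime correctly and emits $\tfrac{1}{n}\bm{1}$ on the structural regime. Configure the GCN expert to classify the structural regime via its neighbor aggregation (its behavior on the adversarial regime is irrelevant). For the gating, take $C = G \circ D$ with $D$ any function satisfying Definition \ref{dfn: d g} and $G$ the step function used in the proof of Proposition \ref{prop: expressivity gnn} (Equation \ref{eq: g basic expressivity}): then $C\paren{\bm{p}_v}=1$ exactly on adversarial-regime nodes and $C\paren{\bm{p}_v}=0$ on structural-regime nodes, so both Algorithm \ref{algo: moe inference} and the mixed prediction of Equation \ref{eq: moeplus obj} route each node to the correct expert.

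The main obstacle, and the crux of the argument, is the lower bound: no standalone GCN on $\G$ can classify every node correctly. The plan is to argue by contradiction. Suppose such a GCN $f$ exists. Restricted to the structural regime, where self-features carry no label information, $f$'s layers must assign nontrivial effective weight to the aggregated neighbor embeddings; restricted to the adversarial regime, the same shared weight matrices must nonetheless produce outputs dominated by each node's own (label-consistent) self-feature rather than by the aggregated (label-opposite) neighbor features. By calibrating the norms of the two regimes' feature vectors and the degrees of the adversarial neighborhoods so that the aggregated adversarial signal dominates any fixed linear combination of self and neighbor channels, these two requirements become mutually exclusive, yielding the contradiction. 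To preclude a deeper GCN from ``denoising'' through depth, I would, if necessary, enforce adversariality at every hop up to the GCN's depth $\ell$, so that the aggregated signal at layer $\ell$ on the adversarial regime is still label-opposite regardless of how many layers are stacked. Carrying out this calibration cleanly, in a way that works for every admissible choice of $f$'s parameters, is the step I expect to require the most care.
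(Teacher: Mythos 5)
Your overall architecture (reuse Proposition \ref{prop: expressivity gnn} for the ``at least as expressive'' half, then exhibit a witness graph where an explicit {\moegnn{GCN}}/{\moeplusgnn{GCN}} with a step-function $G$ routes nodes to the right expert) matches the paper, and that part is sound. The genuine gap is in your lower bound, i.e.\ the claim that no standalone GCN can succeed on your witness graph. Your plan is to ``calibrate norms and degrees so that the aggregated adversarial signal dominates any fixed linear combination of self and neighbor channels.'' This does not yield an impossibility result against arbitrary GCN parameters: after aggregation, a GCN applies a learnable (and, across layers, nonlinear) map to the aggregated vector, so norm domination of the neighbor contribution does not prevent the GCN from recovering the label from the (small but nonzero) self-feature component --- e.g.\ via a linear functional that annihilates the neighbors' feature directions. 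Worse, the orthogonal-subspace device you introduce so that the MLP, GCN and $C$ can ``act selectively on one regime'' hands the standalone GCN exactly the same ability to act selectively per regime, since its weight matrices can treat the two subspaces independently; the implicit picture of a single shared scalar trade-off between ``self channel'' and ``neighbor channel'' across all nodes is not how a GCN is constrained. So the contradiction you aim for would not materialize for every admissible parameter choice, which is precisely the step you flagged as delicate.

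What is needed (and what the paper does) is an \emph{exact indistinguishability} argument rather than a magnitude argument: construct two nodes $u,v$ with different self-features (hence different desired outputs) whose GCN computations provably coincide for \emph{every} choice of weights. The paper arranges the neighbor features so that the GCN-normalized neighbor sums are exactly zero for all nodes in the relevant $(K-1)$-hop neighborhoods, so every layer-1 output degenerates to $\sigma\paren{\bm{b}^{(0)}}$, wiping out all feature information; combined with structurally isomorphic, non-overlapping $K$-hop neighborhoods of $u$ and $v$, every deeper layer then produces identical embeddings for $u$ and $v$, so no GCN of depth $K$ can distinguish them, while the MLP expert (and hence {\moegnn{GCN}}) can, and the step gate lets the mixture match any GCN on all other nodes. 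If you replace your calibration step with such an exact-cancellation/isomorphism construction (or otherwise force a collision of the two nodes' GCN-computable representations), your two-regime framing can be repaired; as written, the domination argument is the missing ingredient and would fail.
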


\begin{proof}{}
We consider unweighted and undirected graphs for this theorem. 
According to Proposition \ref{prop: expressivity gnn}, we know that {\moegnn{GCN}} and {\moeplusgnn{GCN}} are at least as expressive as a standalone GCN.
To show that {\moegnn{GCN}} and {\moeplusgnn{GCN}} are more expressive than GCN, we compare the function class consisting of all possible $K$-layer GCN models and the function class consisting of all possible $K$-layer \moeplusgnn{GCN} models. 
Then we will construct an example graph $\G$ with a pair of nodes $u$ and $v$. On this $\G$, for \emph{any} GCN model $\mathcal{M}$, we can find a corresponding \moeplusgnn{GCN} model $\mathcal{M}'$, such that
\begin{enumerate}
    \item[(1)] $\mathcal{M}$ cannot distinguish $u$ from $v$,
    \item[(2)] $\mathcal{M}'$ can distinguish $u$ from $v$ and,
    \item[(3)] if $\mathcal{M}$ can distinguish some nodes, then $\mathcal{M}'$ can also distinguish them. 
\end{enumerate}

Denote $\N[v]^k$ as the set of neighbor nodes that are $k$ hops away from $v$ (specifically, $\N[v]^0 = \set{v}$). 
We further enforce the following constraints on the \emph{structure} of $\G$:
\begin{enumerate}
\item[(1)] The $K$-hop neighborhoods of $u$ and $v$ do not overlap, \textit{i.e.,} $\paren{\bigcup_{0\leq k\leq K}\N[u]^k}\cap\paren{\bigcup_{0\leq k\leq K}\N[v]^k}=\emptyset$;
\item[(2)] For any node $w\in\N[v]^k$ (where $0\leq k\leq K-1$), there exists an edge $\paren{w,w^*}\in\E$ where $w^*\in\N[v]^{k+1}$ and $w^*$ does not connect any node in $\N[v]^k$ other than $w$; Similar constraint applies to node $u$'s neighborhood $\N[u]^k$;
\item[(3)] The structures (i.e., not considering node features) of the $K$-hop neighborhood of $u$ and $v$ are isomorphic. i.e., For the two subgraphs induced by $\bigcup_{0\leq k\leq K}\N[v]^k$ and $\bigcup_{0\leq k\leq K}\N[u]^k$, we can find an isomorphic node mapping $F$ where $F(u) = v$, and a node $v' \in \N[v]^k$ is mapped by $F$ to a node $u' \in \N[u]^k$. 
\end{enumerate}

Next, we discuss how to set the node \emph{features} for $\G$. 
First, we let $u$ and $v$ have different node features, and thus $u$ and $v$ should be distinguished if the model is powerful enough. 
Then, we consider the features of the neighbors of $u$ and $v$. 
Recall the operation of a GCN layer. 
For a node $w$ in layer $k$, the GCN performs weighted sum of the embedding vectors of $w$'s direct neighbors in GCN's layer $\paren{k-1}$. 
Denote $\bm{h}_w^{\paren{k}}$ as the output embedding vector of $w$ in layer $k$. 
Then,

\begin{align}
    \bm{h}_w^{\paren{k}} = \sigma\paren{\paren{\bm{W}^{\paren{k}}}^\trans\cdot  \sum_{w'\in\N[w]^1\cup\set{w}} \frac{1}{\sqrt{\paren{\func[deg]{w}+1}\cdot\paren{\func[deg]{w'}+1}}}\bm{h}_{w'}^{\paren{k-1}} + \bm{b}^{(k)}}
    \label{eq: gcn basic op}
\end{align}

where $\sigma$ is the activation function, $\bm{W}^{\paren{k}}$ and $\bm{b}^{(k)}$ are the learnable weight matrix and bias vector of layer $k$, and $\func[deg]{\cdot}$ denotes the degree of the node. 

Denote $\bm{x}_*=\bm{h}_*^{\paren{0}}$ as the raw node feature. 
We can construct the graph features such that $\sum_{w'\in\N[w]^1\cup\set{w}} \frac{1}{\sqrt{\paren{\func[deg]{w}+1}\cdot\paren{\func[deg]{w'}+1}}}\bm{h}_{w'}^{\paren{0}} = \bm{0}$ for all $w\in\bigcup_{0\leq k\leq K-1}\N[v]^k$. 
This can always be achieved due to the constraints (1) and (2) above: 
for each $w\in\N[v]^k$, we can find a $w^*$ (as described in constraint (2)) and set $\bm{h}_{w^*}^{\paren{0}}$ s.t. it ``counter-acts'' the aggregated features of all $w$'s other neighbors. 

In this case, by Equation \ref{eq: gcn basic op}, $\bm{h}_w^{\paren{1}} = \sigma\paren{\paren{\bm{W}^{\paren{0}}}^\trans\cdot \bm{0} +\bm{b}^{\paren{0}}} = \sigma\paren{\bm{b}^{\paren{0}}}$ for all $w\in \bigcup_{0\leq k\leq K-1}\N[v]^k$. 
Thus, the layer 1 output features are \emph{identical} for all $\paren{K-1}$-hop neighbors of both $u$ and $v$. i.e., after GCN's layer 1 aggregation, we \emph{lose all information} of the node features in the neighborhoods of $u$ and $v$. 
For layer 2 and onwards, the GCN sees two completely isomorphic (including both structure and feature) neighborhood subgraphs of $u$ and $v$ (due to constraint (3) above). 
And thus, regardless of the GCN's model parameters, the GCN will always output identical embedding vectors for $u$ and $v$. 
And no GCN can distinguish $u$ from $v$.

Now consider {\moegnn{GCN}} and {\moeplusgnn{GCN}}. 
Recall that $u$ and $v$ have different self-features, $\bm{x}_u\neq\bm{x}_v$. 
In addition, for all other nodes in $\V-\set{u}-\set{v}$ that we want to predict, they have self-features different from both $\bm{x}_u$ and $\bm{x}_v$. 
Consider the following confidence function:

\begin{align}
    G(x) = \begin{cases}
        0,\qquad&\text{when }x\leq 0\\
        1,&\text{when }x>0
    \end{cases}
\end{align}

Due to universal approximation theory \citep{mlp_universal}, we can have an MLP expert that differentiates nodes $u$ and $v$ based on their input features $\bm{x}_u$ and $\bm{x}_v$ (and produces meaningful predictions not equal to $\frac{1}{n}\bm{1}$), while always producing a $\frac{1}{n}\bm{1}$ prediction for all other nodes. 
In this scenario, for $u$ and $v$, the MLP's prediction exhibits positive dispersion, leading to $G=1$. Consequently, the confidence function acts as a binary gate, completely disabling the GCN expert on $u$ and $v$. Otherwise, the MLP's prediction has zero dispersion, resulting in $G=0$. In this case, the confidence function entirely disables the MLP expert and relies solely on the GCN expert. With this configuration, both {\moegnn{GCN}} and {\moeplusgnn{GCN}} can differentiate all nodes that a standalone GCN model can, and they can also distinguish nodes that any GCN model cannot ($u$ \emph{vs.} $v$). This demonstrates that {\moegnn{GCN}} and {\moeplusgnn{GCN}} are strictly more expressive than a GCN alone.
\end{proof}

\subsubsection{Proof of Proposition \ref{prop: expressivity many gnns}}

\begin{proposition}{(Originally Proposition \ref{prop: expressivity many gnns})}
{\moe} and {\moeplus} are at least as expressive as any expert alone. 
\end{proposition}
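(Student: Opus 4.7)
The plan is to extend the argument for Proposition~\ref{prop: expressivity gnn} to the multi-expert setting by exploiting the recursive definition $\Ls_{\geq i}^v = C_i^v\cdot \Ls_i^v + \paren{1-C_i^v}\cdot \Ls_{\geq i+1}^v$. Fix any target expert index $j$ with $1 \leq j \leq k$; I will construct a configuration that reduces the whole system to expert $j$ alone, which will establish ``at least as expressive as any expert.''

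First, I would reuse the binary step function $G(x) = 0$ for $x \leq 0$ and $G(x) = 1$ for $x > 0$ from the 2-expert proof (Equation~\ref{eq: g basic expressivity}). For every expert $i < j$, configure that expert so that its prediction is always $\frac{1}{n}\bm{1}$. By Definition~\ref{dfn: d g} this forces $D = 0$, hence $C_i = G\paren{D} = 0$. Under this choice the recursive combination satisfies $\Ls_{\geq i}^v = \Ls_{\geq i+1}^v$, and analogously the prediction at level $i$ coincides with the prediction at level $i+1$. Applied iteratively for $i = 1, 2, \ldots, j-1$, the full mixture collapses to the sub-mixture rooted at expert $j$. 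The same reasoning applies to {\moe}, since in Algorithm~\ref{algo: moe inference} a confidence of $0$ deterministically forwards the decision to the next expert.

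Next, for every expert $i > j$, configure the expert to also output $\frac{1}{n}\bm{1}$ regardless of the input. Because all experts downstream of $j$ are thus pinned to the trivial prediction, the entire tail $\Ls_{\geq j+1}^v$ corresponds to a prediction of $\frac{1}{n}\bm{1}$ in the {\moeplus} (prediction-level) combination, and to a trivial prediction whenever it is reached in the {\moe} (inference-level) combination. The output of {\moeplus} is then $C_j\cdot\bm{p}_j + \paren{1-C_j}\cdot\frac{1}{n}\bm{1}$. If expert $j$'s prediction satisfies $\bm{p}_j \neq \frac{1}{n}\bm{1}$, then $D\paren{\bm{p}_j} > 0$, so $C_j = 1$ and the output equals $\bm{p}_j$. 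If $\bm{p}_j = \frac{1}{n}\bm{1}$, then $C_j = 0$ and the output equals the fallback $\frac{1}{n}\bm{1}$, which still equals $\bm{p}_j$. Either way, both {\moe} and {\moeplus} reproduce expert $j$ exactly.

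I do not anticipate a serious obstacle; the recursive structure of the loss together with the binary gate from the 2-expert case combine cleanly, and induction on the number of experts could alternatively be used to formalize the collapse at levels $i<j$. The only subtlety is the boundary case $\bm{p}_j = \frac{1}{n}\bm{1}$, which is resolved by uniformly setting every downstream expert to emit $\frac{1}{n}\bm{1}$ so that the fallback is consistent with the target expert's own output, exactly mirroring how the same edge case was handled in the proof of Proposition~\ref{prop: expressivity gnn}.
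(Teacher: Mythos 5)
Your proposal is correct and follows essentially the same route as the paper's proof: fix the binary step gate $G$, pin every expert other than the target one to the uniform prediction $\frac{1}{n}\bm{1}$ so their confidences vanish and the mixture collapses onto the target expert, and resolve the boundary case where the target expert itself outputs $\frac{1}{n}\bm{1}$ by noting the downstream fallback then agrees with it. The paper phrases the collapse through the activation coefficients $\tau_m^v = \bigl(\prod_{i<m}(1-C_i^v)\bigr)C_m^v$ rather than your level-by-level recursion, but the two formulations are equivalent.
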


\begin{proof}{}
The proof follows the idea of proving Proposition \ref{prop: expressivity gnn}.

The logic to prove the case of {\moe} is identical to that of {\moeplus}, and thus we only discuss the {\moe} case in detail. 
Consider a confidence function $C=G\circ D$ defined as follows:

\begin{align}
    G(x) = \begin{cases}
        0,\qquad&\text{when }x\leq 0\\
        1,&\text{when }x>0
    \end{cases}
    \label{eq: g basic expressivity multi}
\end{align}

Equation \ref{eq: mowst multi expert loss} defines the general form for {\moe} consisting of $M$ experts. 
The coefficient $\tau_m^v \coloneqq \left(\prod\limits_{1 \leq i < m} (1 - C_i^v)\right) \cdot C_m^v$ in front of the loss $L_m^v$ represents the probability of activating expert $m$ during inference. This setup allows us to easily generalize Algorithm \ref{algo: moe inference} based on Equation \ref{eq: mowst multi expert loss}. To ensure that the entire {\moe} system produces identical results as the $m$-th expert, we need to approximately show that $\tau_m^v = 1$ for all $v$, and $\tau_{m'}^v = 0$ for $m' \neq m$ (with some exceptions to be discussed separately).

For $m'\neq m$, we always let the expert $m'$ to generate random guess of $\frac{1}{n}\bm{1}$ for all $v$. Note that a $\frac{1}{n}\bm{1}$ prediction corresponds to a 0 dispersion, and thus the corresponding $G$ and $C$ are also 0. 
A prediction not equal to $\frac{1}{n}\bm{1}$ leads to positive dispersion, and thus a confidence of 1. 
Given the binary nature of our confidence values, to achieve $\tau_{m'}^v = 0$ for some $1 \leq m' \leq M$, we need $C_{m'}^v = 0$ or at least one preceding expert to have $C_{m''}^v = 1$ for some $m'' < m'$. To obtain $\tau_{m'}^v = 1$ for some $1 \leq m' \leq M$, all preceding experts must have $C_{m''}^v = 0$ for all $m'' < m'$, and $C_{m'}^v = 1$.

According to the above categorization, for expert $m'$ where $m'<m$, we always have their $\tau_{m'}^v = 0$ since $C_{m'}^v = 0$. 
If expert $m$ generates a non-$\frac{1}{n}\bm{1}$ prediction, then $\tau_m^v=1$ and thus the whole {\moe} system reduces to a single expert $m$. 
If expert $m$ predicts $\frac{1}{n}\bm{1}$, then $\tau_{M}^v=1$ (since by definition, $C_M^v=1$ for all $v$). Since the expert $M$ also predicts $\frac{1}{n}\bm{1}$, the system's output is also equivalent to expert $m$'s output. 

Therefore, both {\moe} and {\moeplus} can produce identical results as any individual expert.
\end{proof}

\section{Additional Algorithmic Details}
\label{appendix: algo details}

\subsection{Derivation of optimization problem \ref{eq: optm p}}
\label{appendix: opt derive}
We re-write $\Lsmoe$ (Equation \ref{eq: train obj}) as follows:

\begin{align}
\Lsmoe =& \frac{1}{\size{\V}}\sum_{v\in\V}\big( C\paren{\bm{p}_v}\cdot \Ls\paren{\bm{p}_v, \bm{y}_v} + \paren{1 - C\paren{\bm{p}_v}}\cdot \Ls\paren{\bm{p}'_v, \bm{y}_v}\big)\nonumber\\
=& \frac{1}{\size{\V}}\sum_{v\in\V}\big(C\paren{\bm{p}_v}\cdot\paren{L\paren{\bm{p}_v,\bm{y}_v} - L\paren{\bm{p}_v',\bm{y}_v}} + L\paren{\bm{p}_v',\bm{y}_v}\big)\nonumber\\
=& \frac{1}{\size{\V}}\sum_{v\in\V}C\paren{\bm{p}_v}\cdot\paren{L\paren{\bm{p}_v,\bm{y}_v} - L\paren{\bm{p}_v',\bm{y}_v}} + \frac{1}{\size{\V}}\sum_{v\in\V}L\paren{\bm{p}_v',\bm{y}_v}
\end{align}

Since we optimize the MLP parameters by fixing the GNN, $L\paren{\bm{p}_v',\bm{y}_v}$ remains constant throughout the process. Therefore,

\begin{align}
\arg\min_{\set{\bm{p}_v}} \Lsmoe = \arg\min_{\set{\bm{p}_v}} \sum_{v\in\V}C\paren{\bm{p}_v}\cdot\paren{L\paren{\bm{p}_v,\bm{y}_v} - L\paren{\bm{p}_v',\bm{y}_v}}
\end{align}

Now we simplify it as follows:

\begin{align}
    &\sum_{v\in\V}C\paren{\bm{p}_v}\cdot\paren{L\paren{\bm{p}_v,\bm{y}_v} - L\paren{\bm{p}_v',\bm{y}_v}} \nonumber\\
    =& \sum_{1\leq i\leq m}\sum_{v\in\mathcal{M}_i} C\paren{\bm{p}_v}\cdot\paren{L\paren{\bm{p}_v,\bm{y}_v} - L\paren{\bm{p}_v',\bm{y}_v}}\nonumber\\
    =& \sum_{1\leq i\leq m}\sum_{v\in\mathcal{M}_i} C\paren{\bm{p}_v}\cdot L\paren{\bm{p}_v,\bm{y}_v} - \sum_{1\leq i\leq m}\sum_{v\in\mathcal{M}_i} C\paren{\bm{p}_v}\cdot L\paren{\bm{p}_v',\bm{y}_v}\nonumber\\
    =& \sum_{1\leq i\leq m} C\paren{\pg_i}\cdot \paren{\sum_{v\in\mathcal{M}_i}L\paren{\pg_i,\bm{y}_v}} - \sum_{1\leq i\leq m}C\paren{\pg_i}\cdot \paren{\sum_{v\in\mathcal{M}_i}L\paren{\bm{p}_v',\bm{y}_v}}\nonumber\\
    =& \sum_{1\leq i\leq m} C\paren{\pg_i}\cdot \paren{\sum_{v\in\mathcal{M}_i}L\paren{\pg_i,\bm{y}_v} - \sum_{v\in\mathcal{M}_i}L\paren{\bm{p}_v',\bm{y}_v}}\nonumber\\
    =& \sum_{1\leq i\leq m} C\paren{\pg_i}\cdot \paren{\size{\mathcal{M}_i}\cdot\Lsg_{\bm{\alpha}_i}\paren{\pg_i} - \size{\mathcal{M}_i}\cdot \mu_i}\nonumber\\
    =& \size{\mathcal{M}_i}\sum_{1\leq i\leq m} C\paren{\pg_i}\cdot \paren{\Lsg_{\bm{\alpha}_i}\paren{\pg_i} - \mu_i} \label{eq: derived mlp obj}
\end{align}

where Equation \ref{eq: derived mlp obj} is (almost) exactly our objective in optimization problem \ref{eq: optm p}, with the only difference being the scaling factor $\size{\mathcal{M}_i}$. We can use the ``decomposible'' argument in \secref{appendix: proof thm coro} to easily derive that this scaling factor does not affect the optimizer $\pg_i^*$. 

\subsection{Extending to more than two experts}
\label{appendix: algo 3 expert loss}

Following the notations in \secref{sec: moe multi expert}, we first have the case of 3 agents:
\begin{align}
\Lsmoe =& \frac{1}{\size{\V}} \sum\limits_{v\in\V}\paren{C_1^v \cdot \Ls_1^v + \paren{1-C_1^v}\cdot \paren{C_2^v\cdot \Ls_2^v+\paren{1-C_2^v}\cdot \Ls_3^v}}\nonumber\\
=& \frac{1}{\size{\V}} \sum\limits_{v\in\V}\paren{C_1^v \cdot \Ls_1^v + \paren{1-C_1^v}\cdot C_2^v\cdot \Ls_2^v+\paren{1-C_1^v}\paren{1-C_2^v}\cdot \Ls_3^v}
\end{align}

In general, for $M$ experts, we have the following loss:
\begin{align}
\label{eq: mowst multi expert loss}
\Lsmoe = \frac{1}{\size{\V}} \sum\limits_{v\in\V}\sum\limits_{1\leq m\leq M} \paren{\prod\limits_{1\leq i< m}\paren{1-C_i^v}}\cdot C_m^v\cdot \Ls_m^v
\end{align}

\noindent where we define $\prod_{x\leq i<x}\paren{1-C_i^v}=1$ for any integer $x$. %$C_0^v=0$ and $C_M^v=1$ for all $v\in\V$. 

We further define the following term:

\begin{align}
\Ls_{\geq q}^v = \sum\limits_{q\leq m\leq M} \paren{\prod\limits_{q\leq i< m}\paren{1-C_i^v}}\cdot C_m^v\cdot \Ls_m^v
\end{align}

Therefore, $\Lsmoe = \frac{1}{\size{\V}} \sum_{v\in\V}\Ls_{\geq 1}^v$ and $\Ls_{\geq q}^v = C_q^v\cdot \Ls_q^v + \paren{1-C_q^v} \cdot \Ls_{\geq q+1}^v$ defines the basic recursive formulation.

\subsection{Calculation of computation complexity}

We provide more details to support the analysis on computation cost in \secref{sec: expressive power}. We derive the specific equations for computation complexity for various model architectures, including

\begin{itemize}
    \item Vanilla MLP
    \item Vanilla GNN (with GCN and GraphSAGE as examples)
    \item GNN with skip connections
    \item {\moe} consisting of an MLP expert, a GNN expert, and an optional MLP for confidence computation
\end{itemize}

We further discuss how state-of-the-art techniques for scalable and efficient GNNs improve the computation complexity of a vanilla GNN, but still fall short in making a GNN as lightweight as an MLP. 
In summary, our analysis reveals the following:

\begin{itemize}
\item A GNN, even when scaled up, is significantly more computationally expensive than an MLP.
\item A {\moe} has a similar computational complexity to that of its corresponding GNN expert alone.
\item A GNN with skip connections remains substantially more expensive than a {\moe} with an equivalent number of model parameters.
\end{itemize}

Our analysis justifies the claim that \textbf{{\moe} is efficient}, 
and our \textbf{baseline comparison criteria is fair} (\secref{appendix: fair comparison}). 

\subsubsection{Setup}

We follow the same setup for computation complexity analysis as \cite{shaDow}, where we analyze \emph{the total number of arithmetic operations needed to generate prediction for one target node during inference}. Note that:

\begin{itemize}
    \item We focus on inference since the exact equations for training are hard to derive without strong assumptions about the specific training algorithm and convergence behavior. The conclusions from our following analysis apply to the training costs as well. 
    \item Batch processing on a group of target nodes may help reduce the computation complexity but its benefits tend to diminish on \emph{large}, realistic graphs \citep{shaDow}. In addition, the benefits of batch processing strongly depend on both the graph connectivity pattern, and the neighborhood similarity among the same-batch target nodes \citep{gas}. Thus, similar to \cite{shaDow}, we only consider the case for each individual target node. 
\end{itemize}

\paragraph{Notations. }
Denote $\ell$ as the total number of layers, and $f$ as the hidden dimension (for simplicity, we assume that all layers have the same hidden dimension, and the raw node features are also of dimension $f$). 

Denote $\N^v$ as the set of direct neighbors of node $v$, excluding $v$ itself (\textit{i.e.,} a node in $\N^v$ is connected to $v$ via an edge). 
Denote $\N[i]^v$ as the set of $v$'s neighbors within $i$ hops, \textit{i.e.,} $\N[i]^v$ consists of all nodes that can reach $v$ in \emph{no more than} $i$ hops. For instance, $\N[1]^v = \N^v \cup \set{v}$. 
Denote $b_i^v = |\N[i]^v|$ as the number of $v$'s neighbors within $i$ hops (\textit{e.g.,} $b_0^v = 1$ and $b_1^v$ equals $v$'s degree plus 1). 
Denote $\bm{x}_{i-1}^v \in \R^{f\times 1}$ as $v$'s embedding vector input to layer $i$. 
Denote $\bm{W}_i\in \R^{f\times f}$ as the weight parameter matrix of layer $i$. 
Denote $\gamma$ as the computation cost measured by total number of multiplication-accumulation operations. 

\paragraph{Simplifications. }
In the following calculation, we omit the non-linear activation and normalization layers, as their computation cost is negligible compared with the GNN and MLP layers.

\subsubsection{Computation cost of MLP}

Each layer $i$ performs the following computation:

\begin{align}
\label{eq: cost mlp}
    \bm{x}_{i}^v = \bm{W}_\ell\cdot \bm{x}_{i-1}^v
\end{align}

As a result, the number of multiplication-accumulation operations (corresponding to matrix multiplication) equals $f^2$. 
For all $\ell$ layers, the total computation cost equals

\begin{align}
\gamma_\text{MLP} = f^2\cdot \ell
\end{align}

\subsubsection{Computation cost of GNN}
\label{appendix: comp gnn}

The GNN performs recursive neighborhood aggregation to generate the embedding for the target node $v$. 
Specifically, 
\begin{itemize}
    \item The $\ell$-th (\textit{i.e.,} last) layer aggregates information from $v$'s 1-hop neighbors $\N[1]^v$, and outputs a single embedding $\bm{x}_{\ell}^v$ for $v$ itself.
    \item The $\paren{\ell-1}$-th layer aggregates information from $v$'s 2-hop neighbors $\N[2]^v$, and outputs $b_1^v$ embeddings for each of $v$'s 1-hop neighbors $\N[1]^v$.
    \item ...
    \item The first layer aggregates information from $v$'s $\ell$-hop neighbors $\N[\ell]^v$, and outputs $b_{\ell-1}^v$ embeddings for each of $v$'s $\paren{\ell-1}$-hop neighbors $\N[\ell-1]^v$.
\end{itemize}

Mathematically, we formulate the layer operation as follows:

\begin{align}
\label{eq: gnn general}
\bm{x}_i^v = \func[UPDATE]{\bm{x}_{i-1}^v, \func[AGGREGATE]{\set{\bm{x}_{i-1}^u | u\in\N^v}}}
\end{align}

where the function  $\func[AGGREGATE]{\cdot}$ aggregates the previous-layer embedding vectors of $v$'s neighbors, and the function $\func[UPDATE]{\cdot}$ performs transformation on $v$'s own embedding vector from the last layer, as well as the aggregated neighbor embedding. 
Different GNNs have difference choices of the $\func[UPDATE]{\cdot}$ and $\func[AGGREGATE]{\cdot}$ functions. 
In addition, to implement \textbf{skip connection}, we can let the $\func[UPDATE]{\cdot}$ function specifically operate on $\bm{x}_{i-1}^v$. 

Note that for layer $i$, each of its output nodes in $\N[\ell-i]^v$ will execute Equation \ref{eq: gnn general}. 
Therefore, we describe the general layer operation as follows: the $i$-th GNN layer 
\begin{enumerate}
\item[(1)] aggregates information from nodes in $\N[\ell-i+1]^v$ into $b^v_{\ell-i}$ intermediate embeddings, 
\item[(2)] updates the intermediate embeddings and self-embeddings of $\N[\ell-i]^v$, and 
\item[(3)] outputs embeddings for nodes in $\N[\ell -i]^v$. 
\end{enumerate}

We are now ready to study the computation cost for two specific GNN architectures, GCN \citep{gcn} and GraphSAGE \citep{graphsage}.\footnote{Complexity of other architectures can be derived similarly. We omit the details to avoid redundancy. } 

Both GCN and GraphSAGE implement $\func[AGGREGATE]{\cdot}$ as a weighted sum of the neighbor embeddings. 
Thus, layer $i$ gathers the previous-layer embeddings of $\N[\ell-i+1]^v$, and reduces them into $b_{\ell-i}$ aggregated embeddings, via vector summation. 
Since $\func[AGGREGATE]{\cdot}$ only involves addition but no multiplication, its cost is much lower than $\func[UPDATE]{\cdot}$, and should be ignored according to our computation cost definition (recall that for computation cost, we only count one multiplication-accumulation as one unit cost, which is consistent with \cite{shaDow}). 

After $\func[AGGREGATE]{\cdot}$, both GCN and GraphSAGE implements $\func[UPDATE]{\cdot}$ as a \emph{linear transformation} via the layer's weight matrix. 
The difference is that for GCN, each layer only has a single weight matrix operating on the aggregated embedding (\textit{i.e.,} output of $\func[AGGREGATE]{\cdot}$). 
For GraphSAGE, each layer $i$ has two weight matrices, operating on the aggregated embedding vector and $\bm{x}_{i-1}^v$, respectively. 
The two embedding vectors after GraphSAGE's linear transformation are added to generate the final $\bm{x}_i^v$.\footnote{The original GraphSAGE \citep{graphsage} paper has proposed variants of $\func[AGGREGATE]{\cdot}$ and $\func[UPDATE]{\cdot}$. The version described here is the most common one (see for instance, the default implementation from PyTorch Geometric \url{https://pytorch-geometric.readthedocs.io/en/latest/generated/torch_geometric.nn.conv.SAGEConv.html}), and also used in our experiments.}
The computation cost for layer $i$ of GCN's $\func[UPDATE]{\cdot}$ function is thus $f^2\cdot b_{\ell-i}^v$. The computation cost for GraphSAGE is doubled as $2\cdot f^2\cdot b_{\ell-i}$ due to the operation from two weight matrices. 
For all $\ell$ layer, we omit the cost of $\func[AGGREGATE]{\cdot}$ as discussed before. Therefore, the total computation cost equals

\begin{align}
    \gamma_\text{GCN} \approx f^2\cdot \sum_{1\leq i\leq \ell} b_{\ell-i}\label{eq: cost gcn}\\
    \gamma_\text{GraphSAGE} \approx 2f^2\cdot \sum_{1\leq i\leq \ell} b_{\ell-i}\label{eq: cost sage}
\end{align}

where $b_{\ell-i}$ denotes the average number of $\paren{\ell-i}$-hop neighbors among all target nodes. 

In \secref{sec: expressive power}, we perform a further simplification of Equations \ref{eq: cost gcn} and \ref{eq: cost sage}: note that $\sum_{1\leq i\leq \ell}b_{\ell-i} = b_{\ell-1} + \sum_{2\leq i\leq \ell} b_{\ell-i} \geq b_{\ell-1} + \sum_{2\leq i\leq \ell} 1 = b_{\ell-1} + \ell - 1$. 
Thus, the computation cost is asymptotically 

\begin{align}
\label{eq: gnn cost bound}
\Omega\paren{f^2\cdot\paren{\ell+b_{\ell-1}}}
\end{align}

which is the same as the results in \secref{sec: expressive power}. 

In large, realistic graphs, the number of $\ell$-hop neighbors of a target node can grow exponentially with $\ell$. 
For instance, in a social network, if one person has 10 friends, then he/she will have $10^2$ friends of friends, and $10^3$ friends of friends of friends. 
Such an exponential growth is commonly referred to as ``\emph{neighborhood explosion}'' \citep{fastgcn, graphsage, graphsaint, gas} in the GNN literature. 
Consequently, \textbf{a realistic GNN has much higher computation cost than an MLP}: $\gamma_\text{GNN} \gg \gamma_\text{MLP}$ since $b_{\ell-1} \gg \ell$ even for $\ell$ as small as 2 or 3.

\subsubsection{Computation cost of GNN with skip connection \emph{vs.} GNN with MLP expert}
\label{appendix: cost gnn sc}

First, let us derive the computation cost of {\moe} with an MLP expert and a GNN expert. 
For a target node, the MLP expert only operate on the node itself, resulting in a cost of $f^2\cdot \ell$ as per Equation \ref{eq: cost mlp}. 
If there is a learnable confidence function implemented by another MLP, its cost is also $f^2\cdot \ell$ assuming this MLP has the same architecture as the MLP expert (as set up in \secref{sec: exp}). 
The cost of the GNN expert is given by Equations \ref{eq: cost gcn}, \ref{eq: cost sage} and \ref{eq: gnn cost bound}. 
Therefore, the computation cost of {\moe} can be estimated as:

\begin{align}
    \gamma_\text{\moegnn{GCN}} \approx 2 f^2\cdot \ell + f^2\cdot \sum_{1\leq i\leq \ell}b_{\ell-i} \approx \gamma_\text{GCN}\\
    \gamma_\text{\moegnn{SAGE}} \approx 2 f^2\cdot \ell + 2f^2\cdot \sum_{1\leq i\leq \ell}b_{\ell-i} \approx \gamma_\text{GraphSAGE}\\
\end{align}

where the second ``$\approx$'' is again due to $b_{\ell-1}\gg \ell$.

Now let us consider a single GNN model \textbf{with a skip connection} in each layer. 
As shown by the description regarding Equation \ref{eq: gnn general}, a skip connection can be implemented via a specific $\func[UPDATE]{\cdot}$ function. 
If the skip connection implements a linear transform on the self-features generated by the previous layer (\textit{i.e.,} $\bm{x}_{i-1}^v$ of Equation \ref{eq: gnn general}), then GraphSAGE discussed in \secref{appendix: comp gnn} already implements the skip connection. 

Note that adding a skip connection will increase the cost of $\func[UPDATE]{\cdot}$ due to additional linear transformation. 
The more expensive $\func[UPDATE]{\cdot}$ will be applied on all the $\paren{\ell-1}$-hop neighbors of the target node $v$. 
Thus, \textbf{adding a skip connection significantly increases the computation cost of a GNN}. 
Specifically, note from Equations \ref{eq: cost gcn} and \ref{eq: cost sage} that GraphSAGE is twice as expensive as GCN. 
The ``$2$'' factor is exactly due to the skip connection. 
Similarly, if we modify the GCN architecture by adding a skip connection in each GCN layer, we will have $\gamma_\text{GCN-skip} = 2\gamma_\text{GCN}$. 

\paragraph{Remark. }
From the architecture perspective, adding a skip connection in each GNN layer can be seen as \emph{breaking down the MLP expert of {\moe} and fusing each MLP layer with each GNN layer}. 
However, from the computation cost perspective, \textbf{a {\moe} model with MLP + GNN has much lower cost than GNN + skip connection}. 
The fundamental reason behind the gap in computation cost is that \textbf{the skip connection increases the cost on all neighbors, while the MLP expert only introduces overhead on the target node itself.}

\subsubsection{Scalable GNN techniques}

Note that many techniques have been proposed to improve the scalability of GNN, including neighbor sampling \citep{graphsage, pinsage}, subgraph sampling \citep{graphsaint, ibmb} and historical embedding reuse \citep{gas,lmc}. 
Scalable GNN techniques may reduce the GNN computation cost derived in \secref{appendix: comp gnn}, but \textbf{a scalable GNN will still be much more expensive than an MLP}. 
We give a brief summary of the reasons:

\begin{itemize}
\item Even with aggressive sampling, the neighborhood size will still be much larger than 1. 
\item Many sampling techniques \citep{graphsage, lmc, pinsage, gas, graphsaint} aim to \emph{approximate} the aggregation on the \emph{full} neighborhood. Thus, sampling trade offs accuracy for efficiency. In addition, many sampling algorithms impose strong assumptions on the neighbor aggregation function. 
\item Many scalable GNN techniques apply only to the training phase \citep{graphsaint, lmc, gas} and, as such, do not address the computation challenges during inference. 
\item GPUs are inefficient in processing graph data due to the sparsity and irregularity of edge connections. As a result, GPU utilization is significantly lower when running a GNN (whether scalable or not) compared to running an MLP. 
\end{itemize}

\section{A Study on ``Failure Cases''}

We present an analysis on how {\moe} can handle two typical ``failure cases'' by adjusting the predictions of the MLP expert and the shape of the confidence function during training. 

\subsection{What are the two typical ``failure cases''?}

Our goal here is \textbf{not} to exhaustively address all possible corner cases. Instead, we focus on \textbf{typical} failure cases that may seem controversial to aid our understanding the interactions between experts. 

Considering the overall training objective in Equation \ref{eq: train obj}, the intuitive ``\textbf{success cases}'' should be:

\begin{itemize}
\item Low MLP loss with high confidence: \textbf{confident \& correct MLP predictions}
\item High MLP loss with low confidence: \textbf{unconfident \& incorrect MLP predictions}
\end{itemize}

Therefore, the typical failure cases are essentially the opposites of these success cases:
\begin{enumerate*}
\item[(1)] Confident \& incorrect MLP predictions, and
\item[(2)] Unconfident \& correct MLP predictions.
\end{enumerate*}

Given that we are considering two experts, we can refine the \textbf{failure cases} by taking into account the relative strengths of the experts:
\begin{itemize}
\item \textbf{Case 1}: Confident \& incorrect MLP predictions + correct GNN predictions;
\item \textbf{Case 2}: Unconfident \& correct MLP predictions + incorrect GNN predictions.
\end{itemize}

\emph{Side note}: Nodes associated with Case 1 should have different self-features compared to those associated with Case 2, otherwise their confidence levels would be identical.

Since the balance between the experts is ultimately determined by the loss, we quantify each term in Equation \ref{eq: train obj}:

\begin{itemize}
\item \textbf{Case 1}: High $C$; high $L_\text{MLP}$; low $(1-C)$; low $L_\text{GNN}$. 
\item \textbf{Case 2}: Low $C$; low $L_\text{MLP}$; high $(1-C)$; high $L_\text{GNN}$. 
\end{itemize}

where $L_\text{MLP}$ and $L_\text{GNN}$ denote the MLP loss and GNN loss terms in Equation \ref{eq: train obj}, respectively.

\subsection{How does {\moe} address the ``failure cases''?}

\textbf{Observation}: The commonality between the two failure cases is that for one expert, both its loss and the weight coefficient in front of the loss are high. To address the failure case, the {\moe} training should be able to either \emph{reduce the loss}, OR \emph{reduce the weight coefficient}.

In summary, {\moe} will take the following steps:
\begin{enumerate}
\item[(1)] Update the MLP model to make the predictions for the \textbf{case 1} nodes closer to a random guess.
\item[(2)] Update the confidence function to give it an ``over-confident'' shape, increasing the $C$ value for the \textbf{case 2} nodes. 
\end{enumerate}

These steps are not independent. Step 1 is straightforward for an MLP, so it will not significantly affect predictions for case 2 nodes. In Step 2, an ``over-confident'' $C$ means \emph{it is easier for the MLP to achieve high confidence}. For discussion, consider a simple function where $C(\bm{p}) = 0$ if the dispersion of $\bm{p}$ is less than $\tau$, and $C(\bm{p}) = 1$ if the dispersion is greater than $\tau$. We can make $C$ more ``over-confident'' with a smaller $\tau$. This update affects $C$ for both case 1 and case 2 nodes.

After executing both steps, we analyze the joint effect on case 1 and case 2 nodes:

In \textbf{step 2}, we can reduce $\tau$ until the dispersion of the MLP's case 2 predictions is higher than $\tau$ (we can always do so since MLP's case 2 predictions are correct by definition).  Simultaneously, in \textbf{step 1}, we need to push the MLP's case 1 predictions towards a random guess until their dispersion is lower than $\tau$ (we can always do so since random guess has 0 dispersion).

\paragraph{Net effect of reduced {\moe} loss.}
Each term in the loss changes after executing steps 1 and 2. For case 1, $C$ will reduce (to 0 under our example confidence function). $L_\text{MLP}$ will increase. The net effect is that the \emph{overall loss is reduced}: we now have $L_{\moe}' = 0\cdot L_\text{MLP}' + (1-0)\cdot L_\text{GNN} = L_\text{GNN}$ (by definition of case 1, $L_\text{GNN}$ is low). For case 2, $C$ will increase (to 1 under our example confidence function). $L_\text{MLP}$ will remain the same. The net effect is that the \emph{overall loss is also reduced}: $L_{\moe}' = 1\cdot L_\text{MLP} + (1-1)\cdot L_\text{GNN} = L_\text{MLP}$ where $L_\text{MLP}$ is low by definition of case 2.

\paragraph{Net effect of improved prediction behaviors. }

After jointly executing steps 1 \& 2: 
\begin{itemize}
\item For case 1, the MLP has unconfident, incorrect predictions. 
\item For case 2, the MLP has confident, correct predictions. 
\end{itemize}

Thus, through {\moe} training, we have \textbf{successfully converted the two typical failure cases into two success cases}.

%%%%%%%%%%%%%%%%%%

\section{Design Considerations}

\subsection{Order of experts: MLP-GNN \emph{vs.} GNN-MLP}

With one weak expert and one strong expert, two ordering possibilities for the mixture exist: MLP-GNN (our design) and GNN-MLP (alternative design).

In general, let us consider experts A \& B. 
According to the analysis in \secref{sec: analysis balance} (especially the last paragraph), when we compute confidence based on expert A, {\moe} will be biased towards the other expert B. In other words, on some training nodes, if expert B achieves a lower loss, then {\moe} will likely accept predictions from B and ignore those from A. Conversely, when expert A achieves a lower loss, {\moe} may still have some non-zero probability (controlled by the learnable $C$) to accept B's prediction.
When B is the stronger expert with better generalization capability, the above bias is desirable when applying {\moe} to the test set. Since GNN generalizes better \citep{gnn_generalize}, we prefer the MLP-GNN order (current design) over GNN-MLP (alternative design).

\subsection{Non-confidence-based gating}

Consider a learnable confidence function $C$ implemented by an MLP. To create a non-confidence-based learnable gating module, we can modify the MLP for $C$ by replacing its dispersion-based input with the raw input features of the target node. This modified gating module would then output weights for each expert instead of the confidence $C$.

The gating modules in existing Mixture-of-Expert systems (\textit{e.g.,} \cite{sparse_moe, graphmoe}) resemble the above proposed gating module. Theoretically, such a gate can also simulate our confidence function -- the initial layers of this alternative gating neural network can learn to precisely generate the prediction logits of the MLP expert, while the remaining layers can learn to calculate dispersion and the $G$ function. In this sense, our confidence module can be seen as a specific type of gate, which is significantly simplified based on the inductive bias of the weak-strong combination. Our confidence-based gating makes the model explainable (\secref{sec: analysis balance}, \secref{sec: analysis behavior}), expressive and efficient (\secref{sec: expressive power}). More importantly, it enables {\moe} to achieve significantly higher accuracy than GNNs based on traditional gating (\textit{e.g.,} GraphMoE \citep{graphmoe}, Table \ref{tab:main_res}) due to our simplified design.

\section{Limitations \& Broader Impact}

\subsection{Limitations}

Our design is fundamentally driven by the goal of improving model capacity without compromising computation efficiency and optimization quality. 
Therefore, there are no apparent limitations to applying our model.
Due to the low computation complexity of the weak MLP expert, the overall complexity of {\moe} is comparable to that of a single GNN model, ensuring that the increased model capability does not come at the cost of more computation. 
Furthermore, the GNN expert can also be optimized individually following Algorithm \ref{algo: moe training}, allowing the convergence of {\moe} to be as good as the baseline GNN. 
Additionally, since our confidence mechanism is applied after executing the entire expert models, there are minimal restrictions on the experts' model architecture. 
For instance, on very large graphs, we can easily apply existing techniques to scale up the {\moe} computation, such as neighborhood \citep{graphsage, pinsage} or subgraph sampling \citep{shaDow, ibmb} commonly seen in scalable GNN designs.

An interesting question to explore is under what types of graph structures {\moe} would be most effective. For instance, understanding the properties of the graph that determine the experts' specialization and measuring the relative importance of features and structures would be valuable. Moreover, identifying suitable choices for weak and strong experts in non-graph domains (\textit{e.g.,} time-series analysis, computer vision, \textit{etc.}) is an intriguing direction for future research.

\subsection{Broader impact}

In \secref{sec: moe multi expert}, we have discussed interesting extension of {\moe} into multiple (more than 2) experts. Here, we explore the potential of broader impact of this direction: 

\paragraph{Graph learning task. } Within the graph learning domain, there are two approaches to selecting ``progressively stronger'' experts for a multi-expert version of {\moe}:
\begin{enumerate}
\item[(1)] One simple way is to progressively make the GNN deeper. For instance, an MLP can be considered as a 0-hop GNN, so we can implement expert $i$ as a GNN that aggregates $i$-hop neighbor information.
\item[(2)] Another approach is from the architectural perspective. Some GNN architectures are theoretically more expressive than others. For instance, simplified GNN models like SGC \citep{sgc} could serve as an intermediate expert between a weak MLP and a strong GCN. Alternatively, following general theoretical frameworks (\textit{e.g.,} \citep{gnnak}) to construct GNNs with progressively stronger expressive power is possible. In this case, a stronger expert does not necessarily have more layers.
\end{enumerate}

The choice of progressively stronger GNN experts should depend on the graph's properties. For example, if information from distant neighbors is still useful \citep{oversquashing2}, it makes sense to follow direction 1 and create deeper experts. Otherwise, if most useful information is concentrated within a shallow neighborhood \citep{shaDow}, following direction 2 to define stronger experts with more expressive layer architectures may be more appropriate.

\paragraph{Other domains. } The concept of weak and strong experts perfectly holds in other domains, such as natural language processing and computer vision, \textit{e.g.,} experts may take various forms of Transformers when considering NLP tasks. From our theoretical understanding in \ref{sec: model}, we know that the design of the many-expert {\moe}is not tied to any specific model architecture, suggesting significant potential for generalizing {\moe} beyond graph learning. Moreover, the benefits of a multi-expert {\moe} could be more pronounced when dealing with complex data containing multiple modalities (\textit{e.g.,} graphs with multimedia features \citep{multimodal}, spatio-temporal graphs \citep{spatialtemporal}, \textit{etc.}).

\paragraph{Hierarchical mixture. } Besides the model extension proposed in \ref{sec: moe multi expert}, another straightforward way to integrate multiple experts is to construct a hierarchical mixture using the 2-expert {\moe}. The strong expert in the 2-expert {\moe} can be any existing MoE model, containing several ``sub-experts'' controlled by traditional gating modules (\textit{e.g.,} symmetric softmax gating). The interaction between the weak expert and the strong expert remains governed by the confidence-based gating.

\end{document}